\theoremstyle{plain}
\newtheorem{theorem}{Theorem}[section]
\newtheorem{lemma}[theorem]{Lemma}
\newtheorem{corollary}[theorem]{Corollary}
\newtheorem{claim}[theorem]{Claim}
\theoremstyle{definition}
\newtheorem{definition}[theorem]{Definition}
\newtheorem{assumption}[theorem]{Assumption}
\theoremstyle{remark}
\DeclareMathOperator*{\argmax}{argmax}
\DeclareMathOperator*{\argmin}{argmin}
\DeclareMathOperator*{\E}{\mathbb{E}}
\newcommand{\eq}[1]{(\ref{eq:#1})}
\newcommand{\sect}[1]{\hyperref[sect:#1]{Section~\ref*{sect:#1}}}
\newcommand{\append}[1]{\hyperref[append:#1]{Appendix~\ref*{append:#1}}}
\newcommand{\lem}[1]{\hyperref[lem:#1]{Lemma~\ref*{lem:#1}}}
\newcommand{\thm}[1]{\hyperref[thm:#1]{Theorem~\ref*{thm:#1}}}
\newcommand{\algo}[1]{\hyperref[algo:#1]{Algorithm~\ref*{algo:#1}}}
\newcommand{\cor}[1]{\hyperref[cor:#1]{Corollary~\ref*{cor:#1}}}
\newcommand{\figg}[1]{\hyperref[fig:#1]{Figure~\ref*{fig:#1}}}
\newcommand{\tab}[1]{\hyperref[tab:#1]{Table~\ref*{tab:#1}}}
\newcommand{\ex}[1]{\hyperref[ex:#1]{Example~\ref*{ex:#1}}}
\newcommand{\defi}[1]{\hyperref[def:#1]{Definition~\ref*{def:#1}}}
\newcommand{\assump}[1]{\hyperref[assump:#1]{Assumption~\ref*{assump:#1}}}
\newcommand{\cla}[1]{\hyperref[cla:#1]{Claim~\ref*{cla:#1}}}
\theoremstyle{plain}
\newenvironment{customthm}[1]
  {\innercustomthm}
  {\endinnercustomthm}
\newenvironment{customlem}[1]
  {\innercustomlem}
  {\endinnercustomlem}
\newenvironment{customcoro}[1]
  {\innercustomcoro}
  {\endinnercustomcoro}
\newcommand{\compilehidecomments}{true} 
	\newcommand{\zongqi}[1]{{\color{red}  [\text{Zongqi:} #1]}}
\newcommand{\wei}[1]{{\color{blue} [\text{Wei:} #1]}}
\newcommand{\jialin}[1]{{\color{cyan} [\text{Jialin:} #1]}}
\newcommand{\zhijie}[1]{{\color{yellow} [\text{Zhijie:} #1]}}        
	\newcommand{\zongqi}[1]{}
\newcommand{\wei}[1]{}
\newcommand{\jialin}[1]{}
\newcommand{\zhijie}[1]{}
\newcommand\myatop[2]{\genfrac{}{}{0pt}{}{#1\hfill}{#2\hfill}}
\newcommand{\compilefullversion}{false} 
	\newcommand{\OnlyInFull}[1]{}
	\newcommand{\OnlyInShort}[1]{#1}
	\newcommand{\OnlyInFull}[1]{#1}%
	\newcommand{\OnlyInShort}[1]{}%
\begin{document}

\twocolumn[
\icmltitle{Bandit Multi-linear DR-Submodular Maximization and Its Applications on Adversarial Submodular Bandits}



\icmlsetsymbol{equal}{*}

\begin{icmlauthorlist}
\icmlauthor{Zongqi Wan}{ict,ucas}
\icmlauthor{Jialin Zhang}{ict,ucas}
\icmlauthor{Wei Chen}{micro}
\icmlauthor{Xiaoming Sun}{ict,ucas}
\icmlauthor{Zhijie Zhang}{fzu}
\end{icmlauthorlist}

\icmlaffiliation{ict}{Institute of Computing Technology, Chinese Academy of Sciences}
\icmlaffiliation{ucas}{University of Chinese Academy of Sciences}
\icmlaffiliation{micro}{Microsoft Research}
\icmlaffiliation{fzu}{Center for Applied Mathematics of Fujian Province, School of Mathematics and Statistics, Fuzhou University}

\icmlcorrespondingauthor{Zhijie Zhang}{zzhang@fzu.edu.cn}

\icmlkeywords{DR-submodular, Adversarial Bandits, Combinatorial Bandits}

\vskip 0.3in
]



\printAffiliationsAndNotice{ } 

\begin{abstract}
We investigate the online bandit learning of the monotone multi-linear DR-submodular functions, designing the algorithm $\mathtt{BanditMLSM}$ that attains $O(T^{2/3}\log T)$ of $(1-1/e)$-regret. Then we reduce submodular bandit with partition matroid constraint and bandit sequential monotone maximization to the online bandit learning of the monotone multi-linear DR-submodular functions, attaining $O(T^{2/3}\log T)$ of $(1-1/e)$-regret in both problems, which improve the existing results. To the best of our knowledge, we are the first to give a sublinear regret algorithm for the submodular bandit with partition matroid constraint. A special case of this problem is studied by \citet{streeter2009online}. They prove a $O(T^{4/5})$ $(1-1/e)$-regret upper bound. For the bandit sequential submodular maximization, the existing work proves an $O(T^{2/3})$ regret with a suboptimal $1/2$ approximation ratio~\cite{niazadeh2021online}.
\end{abstract}

\section{Introduction}
Research on multi-armed bandit problems has developed rapidly in the last two decades. After the classical finite-arm bandit and linear bandit were well-studied in both stochastic and adversarial settings, people were starting to consider more general bandit problems. Submodular bandit is such an object being considered due to its ability to characterize the diminishing property of reward function in realistic applications. 

In submodular bandit, an optimizer/decision-maker needs to select a feasible subset each round, then a monotone submodular reward function of this round is determined
stochastically or adversarially, and the decision maker obtains a reward according to the reward function of this round. In this paper, we consider the adversarial setting. In the adversarial submodular bandit literature, the meta-action technique proposed by \citet{streeter2008online} is commonly used to obtain a sublinear regret algorithm. This technique employs online optimizers for each offline step of an offline algorithm to mimic it in an online manner. This technique reaches $O(T^{2/3})$ $(1-1/e)$-regret with the cardinality constraint~\cite{streeter2008online}. Subsequently, Streeter et al. applied this technique to the assignment constraint~\cite{streeter2009online}, which is a special partition matroid where the feasible set can only select one item from each partition. They obtained an $O(T^{4/5})$ $(1-1/e)$-regret in this situation. A recent work~\cite{niazadeh2021online} uses a Blackwell algorithm to turn offline greedy algorithms into online regret minimization algorithms. As an application, they reproduced the $O(T^{2/3})$ $(1-1/e)$-regret for the cardinality constraint.

In this paper, we present a different approach for the adversarial submodular bandit. We reduce the submodular bandit into a bandit multi-linear DR-submodular maximization problem. The DR-submodular function is a kind of non-convex function with theoretical guarantees in optimization, which has received much attention in recent years~\cite{bian2017continuous,bian2017guaranteed,niazadeh2020optimal}. There are several works considering the online full information or bandit feedback learning of the DR-submodular function~\cite{chen2018online,zhang2019online,raut2020online,thang2021online,zhang2022stochastic,zhang2022online}. DR-submodularity is inspired by the submodular set function, and we find it useful for designing submodular bandit algorithms due to its continuity. Specifically, we propose the function class called the multi-linear DR-submodular function. A multi-linear DR-submodular function is a DR-submodular function, and we additionally require it to be a multi-variable polynomial with the degree of each variable not exceeding $1$. We propose the algorithm $\mathtt{BanditMLSM}$ for the bandit maximization of this function class and reach the $(1-1/e)$-regret of $\widetilde{O}(T^{2/3})$, which is far better than the $O(T^{5/6})$ $(1-1/e)$-regret bound achieved on the general bandit DR-submodular maximization problem~\cite{niazadeh2021online}.
Multi-linear DR-submodular function captures the property of the multi-linear extension of a submodular set function. In fact, a multi-linear extension is a special case of multi-linear DR-submodular functions. 

Our next goal is to reduce discrete submodular bandit to bandit multi-linear DR-submodular maximization problem. A natural idea is to run $\mathtt{BanditMLSM}$ on the multi-linear extension of a submodular set function. However, this idea fails to reduce the submodular bandit to bandit multi-linear DR-submodular maximization problem. This is because the function value of the multi-linear extension cannot be estimated unbiasedly while the constraint is not trivial, which is because the value of the multi-linear extension may require obtaining feedback on a set function value $f(S)$ where the set $S$ is outside the constraint (e.g. the cardinality constraint). This is not allowed in the bandit feedback model.
To address this issue, we propose a new kind of continuous extension which is also multi-linear DR-submodular. Then we run $\mathtt{BanditMLSM}$ on that extension. We try our continuous approach on submodular bandit with partition matroid constraint and bandit sequential submodular maximization, generalizing and improving the previous results, see \sect{results}.

\paragraph{More related works} There is also some research on the stochastic submodular bandit. A model named linear submodular bandit has been studied~\cite{yue2011linear,chen2017interactive}. The model assumes that the reward function is a linear combination of several known submodular functions, only the weights of each submodular function are unknown to the decision maker, and the model requires the noisy marginal gain as the stochastic feedback. Many studies focus on the online influence maximization problem~\cite{vaswani2015influence,chen2016combinatorial,wang2017improving,wu2019factorization,li2020online,zhang2022OIM}, where the submodular reward function is induced by an information diffusion process on a social network. 
In this problem, different feedback models are studied. However, all the studies above assume extra information more than a \textit{full-bandit} feedback model where the decision maker can only observe the reward of the action played. We only notice two works studying the full-bandit feedback model: \cite{nie2022explore,nie2023framework}. \citet{nie2022explore} studied the bandit monotone submodular maximization with cardinality constraint, attaining $(1-1/e)$-regret of order $O(T^{2/3})$; \citet{nie2023framework} design a framework which adapts an $\alpha$-approximate offline algorithm into a stochastic bandit algorithm with $O(T^{2/3}(\log (T))^{1/3})$ $\alpha$-regret. The framework needs the offline algorithm to be robust to small errors. Besides the above, \citet{foster2021submodular} studied the submodular contextual bandit.

\paragraph{Remark on the stochastic submodular bandit} While \citet{nie2022explore} make in their paper a weaker assumption that the online reward function need not be a monotone submodular but only need to be monotone submodular in expectation, we find our algorithm can also be applied in this setting even our adversarial submodular bandit model requires the reward function to be monotone submodular. The key observation is, when we apply the adversarial submodular bandit problem on a stochastic submodular bandit environment, we should see the expected submodular function rather than the stochastically realized reward function as the online reward function selected by the adversary, and see the stochastic feedback as an unbiased estimate of the true value of the expected function. We will explain this in \append{remark}.

\subsection{Bandit Optimization Model}

Adversarial bandit optimization problems can be formalized as a repeated game between an optimizer and an adversary. 
The game lasts for $T$ rounds and $T$ is known to both players. In $t$-th round, the optimizer chooses an action $x_t$ from an action set $\mathcal{K}$, then the adversary chooses a reward function $f_t\in \mathcal{F}$. The action set $\mathcal{K}$ and the reward function set $\mathcal{F}$ are determined by specific bandit problems. Generally, $f_t$ maps $\mathcal{K}$ to a bounded interval $[0,M]\subseteq \mathbb{R}$. The optimizer gets reward $f_t(x_t)$ and it can only observe the value $f_t(x_t)$, which is called the \textit{bandit} feedback model. Sometimes people also call it the \textit{full-bandit} model to distinguish it from the semi-bandit model where the optimizer can observe more information, in this paper bandit and full-bandit are the same thing. 

In this paper, we consider \textit{oblivious} adversary, which means the reward functions $f_t$ can not be adaptively selected according to $x_1,x_2,\ldots,x_t$. In other words, we can think the adversary selects $f_t\in \mathcal{F}$ for each $1\leq t\leq T$ before the game starts and these functions are not revealed to the optimizer. Our goal is to design a strategy for the optimizer to minimize its cumulative \textit{$\alpha$-regret} during $T$ rounds,

\[\mathcal{R}_{\alpha}(T) = \max_{\boldsymbol{x}^* \in\mathcal{K}}\E\left[\sum_{t=1}^T\left(\alpha f_t(x^*)- f_t(x_t)\right)\right].\]

The action set $\mathcal{K}$ could be some structured set, maybe infinite or finite size. 
For the convenience of subsequent descriptions, we use $\mathcal{S}$ to denote the finite action set of the optimizer and use $\mathcal{K}$ to denote the infinite action set. Given $\mathcal{K}$ and $\mathcal{F}$, we call the game a $(\mathcal{K},\mathcal{F})$-bandit.


When we consider the bandit multi-linear monotone DR-submodular maximization and bandit DR-submodular maximization in \sect{BMMDSM} and \sect{DRSM}, we focus on the situation that $\mathcal{K}$ satisfies \assump{convex body}.
\begin{assumption}\label{assump:convex body}
 We assume $\mathcal{K}$ is a compact convex subset of $\mathbb{R}^d$ containing $\boldsymbol{0}$ , and $\mathcal{K}\subseteq D \mathbb{B}_d$ for some constant $D$, where $\mathbb{B}_d$ is a $d$-dimensional unit ball.
\end{assumption}

Before further describing the model we are considering, we give several definitions.

\begin{definition}[Monotonicity]
    There is a natural partial order on $\mathbb{R}^d$. For $\boldsymbol{x},\boldsymbol{y}\in\mathbb{R}^d$, if $x_i\geq y_i\ \forall i\in [d]$, then $\boldsymbol{x}\geq \boldsymbol{y}$. For a function $f:\mathbb{R}^d\rightarrow \mathbb{R}$, if for any $\boldsymbol{x}\geq \boldsymbol{y}$, $f(\boldsymbol{x})\geq f(\boldsymbol{y})$, we call $f$ a monotone function.
\end{definition}
\begin{definition}[DR-submodularity]
Let $\mathcal{X}=\prod_{i=1}^d \mathcal{X}_i$ be a subset of $\mathbb{R}^d$, where $\mathcal{X}_i$ is an interval $[0,a_i]$. A continuous function $f:\mathcal{X}\rightarrow \mathbb{R}_+$ is called a DR-submodular function if
for any $\boldsymbol{x}\geq \boldsymbol{y}$, $\lambda\in \mathbb{R}^+$, and the $i$-th base vector $\boldsymbol{e}_i$ for any $i\in [d]$, \[f(\boldsymbol{x}+\lambda \boldsymbol{e}_i )-f(\boldsymbol{x})\leq f(\boldsymbol{y}+\lambda \boldsymbol{e}_i )-f(\boldsymbol{y}).\]
 Moreover, if $f$ is second-order differentiable, then the DR-submodularity is equivalent to
$\frac{\partial^2 f}{\partial x_i \partial x_j}\leq 0, \forall i,j\in [d].$
\end{definition}

\begin{definition}[Multi-linearity]
We say function $f:\mathbb{R}^d \rightarrow \mathbb{R}$ a multi-linear function if $f$ is polynomial of $d$ variables, and for any variable $x_i$, the degree of $x_i$ in each term of $f$ is no more than $1$. 
\end{definition}


\begin{definition}[Lipschitz condition and smoothness]
Let $\|\cdot\|$ be the $L^2$-norm. For continuous differentiable function $f:\mathbb{R}^d \rightarrow \mathbb{R}$, if $|f(\boldsymbol{x})-f(\boldsymbol{y})|\leq L_1\|\boldsymbol{x}-\boldsymbol{y}\|$ for any $\boldsymbol{x},\boldsymbol{y}$, we say $f$ is $L_1$-lipschitz continuous. If $\|\nabla f(\boldsymbol{x})-\nabla f(\boldsymbol{y})\|\leq L_2\|\boldsymbol{x}-\boldsymbol{y}\|$, we say $f$ is $L_2$-smooth.
\end{definition}
With the above definitions, we consider two reward function sets in \sect{BMMDSM} and \sect{DRSM}: 
\begin{itemize}
    \item $\mathcal{F}_{DS}$: The set of monotone DR-submodular functions, which are $L_1$-lipschitz continuous, $L_2$-smooth, and $f(\boldsymbol{0})=0$. 
    \item $\mathcal{F}_{MDS}$: The set of monotone multi-linear DR-submodular functions, which are $L_1$-lipschitz continuous, and $f(\boldsymbol{0})=0$. 
\end{itemize}

\subsection{Our Results}\label{sect:results}

\begin{table*}[t]
\caption{Our results comparing to the previous results.
\wei{Can we put these notes below the table? When I read the caption, I was a bit confused what the caption is saying.}
\zongqi{ICML format instruction requires that the title of a table should be placed above the table.}
\wei{I do not mean the entire title. I mean that the title (first sentence) in above the table, but all others are not title. They are endnotes for the table, and usually are put at the bottom of the table.}\zongqi{I see, i moved the footnote below the table.}
}
\label{tab:results}
\centering
\begin{tabular}{|c|c|c|}
\hline
\textbf{Problem} & \textbf{Our $\alpha$ and regret} & \textbf{Previous $\alpha$ and regret}  \\ \hline
\multicolumn{3}{|c|}{\makecell[c]{Bandit DR-submodular maximization results }}
\\ \hline
\makecell[c]{Bandit Multi-linear Monotone \\ DR-Submodular Maximization} & \makecell[c]{\thm{MLSM} \\$1-1/e,O\left(d^{4/3}T^{2/3}\log (T)\right)$}  & $\backslash $ \\ 
\hline
\makecell[c]{Bandit Monotone \\ DR-Submodular Maximization} & \makecell[c]{\thm{DRSM}\\$1-1/e, O\left(d^{1/2}T^{3/4}\log (T)\right)$} & \makecell[c]{\cite{niazadeh2021online}\\$1-1/e, O(d(\log(d))^{1/6}T^{5/6})$}  \\ 
\hline
\multicolumn{3}{|c|}{Applications on adversarial submodular bandits}\\

\hline

\makecell[c]{Bandit Assignment Problem} & \makecell[c]{\cor{BMSMPM}$^\dag$\\$1-1/e, O\left((|G|)^{5/3} T^{2/3}\log(T)\right)$}& \makecell[c]{\cite{streeter2009online}\\$1-1/e,O\left(T^{4/5}\right)^\ddag$} \\ \hline
\makecell[c]{Bandit Monotone \\Submodular Maximization \\ over Partition Matroid} & \makecell[c]{\cor{BMSMPM}\\ $1-1/e, O\left(\left(\sum_{k=1}^K r_k|G_k|\right)^{5/3}T^{2/3}\log T\right)$} & $\backslash$ \\ \hline
\makecell[c]{Bandit Sequential \\Submodular Maximization$^\P$} & \makecell[c]{\cor{BSSM}\\$1-1/e, O\left(|G|^{10/3}T^{2/3}\log(T)\right)$} & \makecell[c]{\cite{niazadeh2021online}\\$1/2, O(|G|^{5/3} (\log (|G|))^{1/3} T^{2/3})$}  \\ 
\hline 
\end{tabular}
\begin{flushleft}
\footnotesize{$^\dag$ Bandit assignment problem is a special case of Bandit Monotone Submodular Maximization over Partition Matroid where $r_k=1$ and $G_k=G$, so this regret bound can be directly derived from \cor{BMSMPM}. $^\ddag$ In the original paper~\cite{streeter2009online}, the regret is written in the form which contains the optimal cumulative reward value, which will continue to be bounded to $T$ usually, leading to a bad dependent on $T$. So we re-trade off their regret and write it in terms of $T$ so that it can be compared with our regret bound. $^\P$ Compared with the setting in \cite{niazadeh2021online}, we actually add a new assumption that there is a dummy element in the ground set that always has $0$ marginal gain. This assumption can be satisfied easily in realistic applications, see \sect{BSSM}.}
\end{flushleft}

\end{table*}

We are the first to consider the $(\mathcal{K},\mathcal{F}_{MDS})$-bandit, i.e. \underline{B}andit \underline{M}onotone \underline{M}ulti-linear \underline{D}R-\underline{S}ubmodular \underline{M}aximization (BMMDSM). We observe that the gradient of multi-linear functions can be written as a linear combination of finite function values. Therefore, compared to the standard one-point gradient estimator proposed in~\cite{flaxman2005online}
	\wei{The acronym is unclear, and also need a citation here.}
 \zongqi{I've changed the description here to make it clearer.}
 which is used in previous works~\cite{zhang2019online,niazadeh2021online}, we propose a better one-point gradient estimator for monotone multi-linear DR-submodular functions. Along with other techniques including self-concordant barrier and non-oblivious technique, we propose the algorithm $\mathtt{BanditMLSM}$, which achieves $(1-1/e)$-regret of $\widetilde{O}(T^{2/3})$. Here the $\widetilde{O}$ hides the $\log T$ factor.

As a secondary result, we also improved the $(1-1/e)$-regret of general $(\mathcal{K},\mathcal{F}_{DS})$-bandit. This bandit is studied in \cite{zhang2019online,niazadeh2021online}, where they gave the $(1-1/e)$-regret bounds of $O(T^{8/9})$ and $O(T^{5/6})$ respectively. We proposed the algorithm $\mathtt{BanditDRSM}$ which achieves the $(1-1/e)$-regret of $\widetilde{O}(T^{3/4})$.  
Compared with their assumptions on functions, we add a new assumption that $f_t(\boldsymbol{0})=0$. Fortunately, this assumption is satisfied by many applications of DR-submodular maximization, including \emph{optimal budget allocation with continuous assignment}, \emph{senser placement}, \emph{softmax extension} and so on~\cite{bian2017continuous,bian2017guaranteed}. 
For the constraint set, they assume $\mathcal{K}$ is downward closed while we do not make this assumption.

Our main contribution is to propose a continuous approach for combinatorial full-bandit, for example, the case where online functions are submodular set functions and the constraint is a partition matroid. Talking about the continuous approach, a natural idea is reducing combinatorial bandit to Bandit Monotone Multi-linear DR-submodular Maximization using the classical multi-linear extension technique. For a submodular set function $g$ over the ground set $G=\{1,2,\ldots,n\}$, its multi-linear extension $f:[0,1]^{n}\rightarrow \mathbb{R}^+$ is defined as
\[f(\boldsymbol{x})=\sum_{S\subseteq G}g(S) \prod_{i\in S}x_i \prod_{i\notin S}(1-x_i).\]
From the definition of the multi-linear extension, we can see that it needs the value information of set function $g$ over all subsets of $G$. However, in the submodular bandit, one can only take the action which satisfies the constraint, thus the algorithm can not explore the value information outside the constraint. 
As a result, \citet{zhang2019online} proved that it is impossible to construct an unbiased estimate of $f$ and the gradient of $f$. That is to say, classical multi-linear extension is not a good candidate for our goal.

To overcome the above difficulties, we propose other continuous multi-linear DR-submodular extensions which require only the information of the feasible action. We select two submodular bandit problems to clarify our methodology: Bandit Monotone Submodular Maximization with Partition Matroid Constraint (BMSMPM) and Bandit Sequential Submodular Maximization (BSSM). The results are summarized in \tab{results}. Previous works have studied two special cases of BMSMPM: cardinality constraint \cite{streeter2008online} and the assignment problem \cite{streeter2009online}. 
We improve the regret bound of the bandit assignment problem and reproduce an $\widetilde{O}(T^{2/3})$ $(1-1/e)$-regret for cardinality constraint. 
To our best knowledge, we are the first to give a sublinear $(1-1/e)$-regret algorithm for bandit monotone submodular maximization with general partition matroid constraint. BSSM is motivated by maximizing user engagement on online retailing platforms. It is first studied in \cite{niazadeh2021online}, and their algorithm attains $O(T^{2/3})$ 
$1/2$-regret 
while the $1/2$ approximation ratio is not tight. We improve this result to $\widetilde{O}(T^{2/3})$ $(1-1/e)$-regret, leading to the tight approximation ratio.

In summary, we make the following contributions:
\begin{itemize}
    \item We are the first to study the bandit maximization of multi-linear monotone DR-submodular functions and propose a $\widetilde{O}(T^{2/3})$ $(1-1/e)$-regret algorithm.
    \item We improve the previous result of bandit maximization of general monotone DR-submodular functions to $\widetilde{O}(T^{3/4})$ $(1-1/e)$-regret by better exploiting the smoothness.
    \item We propose a continuous approach to reducing combinatorial bandit to multi-linear DR-submodular bandit. Using this continuous approach, we propose the first sublinear regret algorithm for submodular bandit with partition matroid constraint, which also improves the result of a previous work~\cite{streeter2009online} that studied the special case of this problem. We also improve the previous approximation ratio of Bandit Sequential Submodular Maximization from $1/2$ to tight $1-1/e$.
\end{itemize}

\section{Preliminary}
\subsection{Regularized Follow the Leader and Self-Concordant Functions}\label{sect:RFTL}

\underline{R}egularized \underline{F}ollow \underline{T}he \underline{L}eader(RFTL) is a commonly used algorithm for online optimization. While applying on a sequence of vector $\{\boldsymbol{g}_q\}_{q=1}^Q$ with constraint $\mathcal{K}$, RFTL outputs a sequence of point $\{\boldsymbol{x}_q\}_{q=1}^Q$, where
\begin{align*}
    \boldsymbol{x}_1 &= \argmin_{\boldsymbol{x}\in\mathcal{K}} \Phi(\boldsymbol{x})
    \\\boldsymbol{x}_{q+1} &= \argmin_{\boldsymbol{x}\in \mathcal{K}}\left(  \eta \sum_{s=1}^q \langle - \boldsymbol{g}_s,\boldsymbol{x}\rangle + \Phi(\boldsymbol{x})\right).
\end{align*}  
Here $\Phi(\boldsymbol{x})$ is an arbitrary regularizer, $\eta$ is a parameter. In this paper, we use a self-concordant barrier of $\mathcal{K}$ as the regularizer of RFTL. Self-concordant barrier was first proposed in convex optimization literature, and it was introduced to the bandit optimization problem in \cite{abernethy2008competing}.
\begin{definition}[Self-concordant Barrier \cite{hazan2016introduction}]
\label{def:self concordant}
    Let $\mathcal{K}\in \mathbb{R}^d$ be a convex set with non empty interior $\mbox{int}(\mathcal{K})$. We call the function $\Phi : \mbox{int} (\mathcal{K}) \longrightarrow \mathbb{R}$ a $\nu$-self-concordant barrier of $\mathcal{K}$ if:\\ (1) $\Phi$ is three-times continuously differentiable, convex, and approaches infinity along any sequence of points approaching the boundary of $\mathcal{K}$; \\(2) For every $\boldsymbol{h}\in\mathbb{R}^d$ and $\boldsymbol{x}\in\mbox{int}(\mathcal{K})$, the following holds:$|\nabla^3 \Phi(\boldsymbol{x})[\boldsymbol{h},\boldsymbol{h},\boldsymbol{h}]| \leq 2 (\nabla^2 \Phi(\boldsymbol{x})[\boldsymbol{h},\boldsymbol{h}])^{3/2}$, $|\nabla \Phi(x)[\boldsymbol{h}]|\leq \nu^{1/2} (\nabla^2 \Phi (\boldsymbol{x})[\boldsymbol{h},\boldsymbol{h}])^{1/2}$.
    where the third-order differential is defined as
    $\nabla^3 \Phi(\boldsymbol{x})[\boldsymbol{h},\boldsymbol{h},\boldsymbol{h}]:=\frac{\partial^3}{\partial t_1 \partial t_2 \partial t_3} \Phi (x+t_1 \boldsymbol{h} +t_2 \boldsymbol{h}+t_3 \boldsymbol{h})|_{t_1=t_2=t_3=0}$.
    \wei{What is the convention on boldface variables? Are they vectors? If so, should $x$ above be $\boldsymbol{x}$? Also should $\boldsymbol{h}$ be $\boldsymbol{h}$?}\zongqi{Yes, they are vectors. I will check this globally.}
\end{definition}

\begin{definition}[Local norm]
    The Hessian of self-concordant barrier induces a local norm at every $x\in \mbox{int}(\mathcal{K})$, denoted as $\|\cdot\|_{\Phi,x}$. We denote its dual norm as $\|\cdot \|_{\Phi,\boldsymbol{x},*}$. For any $\boldsymbol{v}\in \mathbb{R}^d$, 
    \begin{align*}
        \|\boldsymbol{v}\|_{\Phi,\boldsymbol{x}} &= \sqrt{\boldsymbol{v}^T \nabla^2 \Phi (\boldsymbol{x}) \boldsymbol{v}}
        \\\|\boldsymbol{v}\|_{\Phi,\boldsymbol{x},*} &=\sqrt{\boldsymbol{v}^T (\nabla^2 \Phi(\boldsymbol{x}))^{-1} \boldsymbol{v}}.
    \end{align*}
\end{definition}

The following theorem is proved in \cite{abernethy2008competing}. It shows that, if we set the regularizer to be a self-concordant barrier of $\mathcal{K}$ and the algorithm can access the unbiased estimator of $g_t$, then the regret of the generated solution sequence $\{x_q\}_{q=1}^Q$ can be bounded in terms of the local norm of the estimator.

\begin{theorem}[\cite{abernethy2008competing}]
    \label{thm:AHR}
    Let $\mathcal{K}$ be a convex set, $\Phi(x)$ be a self-concordant barrier on $\mathcal{K}$, $\{\widetilde{\boldsymbol{g}}_q\}_{q=1}^Q$ be a vector sequence. If $\widetilde{g}_q$ is an unbiased estimation of $g_q$, then running RFTL on vector sequence $\widetilde{g}_q$ with $\Phi(x)$ as the regularizer will produce a sequence of point $\{\boldsymbol{x}_q\}_{q=1}^Q, x_q\in \mathcal{K}$. For $\{\boldsymbol{x}_q\}_{q=1}^Q$ and any $\boldsymbol{y}\in\mathcal{K}$, we have
    \begin{align*}
        &\sum_{q=1}^Q \E \left[\langle \boldsymbol{g}_q,\boldsymbol{y}-\boldsymbol{x}_q\rangle\right]\\&\quad\leq \eta \sum_{q=1}^Q \E\left[\|\widetilde{\boldsymbol{g}}_q\|_{\Phi,\boldsymbol{x}_q,*}^2\right]
        +\frac{\Phi(\boldsymbol{y})-\Phi(\boldsymbol{x}_1)}{\eta}
    \end{align*}
\end{theorem}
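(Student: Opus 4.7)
The plan is to follow the standard regularized-follow-the-leader template adapted to self-concordant regularizers, essentially reproducing the argument of Abernethy--Hazan--Rakhlin, and then insert one tower-of-expectations step at the end to convert the noisy gradients back to the true gradients.

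First I would fix a realization of the random sequence $\{\widetilde{\boldsymbol{g}}_q\}$ and run the deterministic FTRL analysis. Let $F_q(\boldsymbol{x}) = \eta \sum_{s=1}^{q-1} \langle -\widetilde{\boldsymbol{g}}_s, \boldsymbol{x} \rangle + \Phi(\boldsymbol{x})$, so $\boldsymbol{x}_q = \argmin_{\boldsymbol{x}\in\mathcal{K}} F_q(\boldsymbol{x})$. A standard induction on $Q$ (sometimes called the be-the-leader lemma) applied to the regularized losses $-\widetilde{\boldsymbol{g}}_q$ with regularizer $\Phi/\eta$ yields the pointwise inequality
\[
\sum_{q=1}^Q \langle \widetilde{\boldsymbol{g}}_q, \boldsymbol{y} - \boldsymbol{x}_q \rangle \;\le\; \frac{\Phi(\boldsymbol{y})-\Phi(\boldsymbol{x}_1)}{\eta} \;+\; \sum_{q=1}^Q \langle \widetilde{\boldsymbol{g}}_q, \boldsymbol{x}_{q+1} - \boldsymbol{x}_q \rangle
\]
for every $\boldsymbol{y}\in\mathcal{K}$, so the task reduces to controlling the stability term $\langle \widetilde{\boldsymbol{g}}_q, \boldsymbol{x}_{q+1}-\boldsymbol{x}_q\rangle$ in terms of the dual local norm $\|\widetilde{\boldsymbol{g}}_q\|_{\Phi,\boldsymbol{x}_q,*}$.

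The second step, which I expect to be the main technical obstacle, is the per-round stability bound
\[
\langle \widetilde{\boldsymbol{g}}_q, \boldsymbol{x}_{q+1} - \boldsymbol{x}_q\rangle \;\le\; \eta\, \|\widetilde{\boldsymbol{g}}_q\|_{\Phi,\boldsymbol{x}_q,*}^{\,2}.
\]
The idea is that $F_{q+1} = F_q + \eta \langle -\widetilde{\boldsymbol{g}}_q, \cdot\rangle$ and $F_q$ is already self-concordant with minimizer $\boldsymbol{x}_q$, so perturbation analysis inside the Dikin ellipsoid around $\boldsymbol{x}_q$ gives $\|\boldsymbol{x}_{q+1}-\boldsymbol{x}_q\|_{\Phi,\boldsymbol{x}_q} = O(\eta \|\widetilde{\boldsymbol{g}}_q\|_{\Phi,\boldsymbol{x}_q,*})$. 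One then bounds $\langle \widetilde{\boldsymbol{g}}_q,\boldsymbol{x}_{q+1}-\boldsymbol{x}_q\rangle$ by Cauchy--Schwarz in the local/dual-local pair. The delicate part is that the Dikin-ellipsoid estimate is only valid when the perturbation is small enough relative to the local Hessian, which is precisely the role of the scaling $\eta$; in the original AHR paper this is handled by a damped Newton step argument exploiting the third-order inequality on $\nabla^3\Phi$ in Definition~\ref{def:self concordant}, and I would import that inequality verbatim. (If one instead allows a constant factor in front of the stability term, up to a factor of $2$ or $4$ the argument becomes routine; I would absorb any such constant into $\eta$ to match the stated inequality.)

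Finally, I would take expectation of the summed inequality over the randomness of $\widetilde{\boldsymbol{g}}_1,\ldots,\widetilde{\boldsymbol{g}}_Q$. Because $\boldsymbol{x}_q$ is measurable with respect to $\widetilde{\boldsymbol{g}}_1,\ldots,\widetilde{\boldsymbol{g}}_{q-1}$ and $\boldsymbol{y}$ is deterministic, conditioning on the history and using $\E[\widetilde{\boldsymbol{g}}_q \mid \widetilde{\boldsymbol{g}}_{1:q-1}] = \boldsymbol{g}_q$ yields
\[
\E\bigl[\langle \widetilde{\boldsymbol{g}}_q, \boldsymbol{y}-\boldsymbol{x}_q\rangle\bigr] \;=\; \E\bigl[\langle \boldsymbol{g}_q, \boldsymbol{y}-\boldsymbol{x}_q\rangle\bigr]
\]
by the tower property. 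Plugging this into the summed deterministic bound and using the per-round stability estimate term-by-term produces exactly the inequality stated in the theorem. The only non-routine step, as noted, is the self-concordant stability bound; everything else is standard online-learning bookkeeping.
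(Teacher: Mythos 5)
The paper does not prove this statement at all: it is imported verbatim as a preliminary from Abernethy, Hazan and Rakhlin (2008), so there is no in-paper proof to compare against. Your sketch correctly reproduces the standard argument from that source — the FTRL/be-the-leader decomposition, the self-concordant stability bound $\langle \widetilde{\boldsymbol{g}}_q, \boldsymbol{x}_{q+1}-\boldsymbol{x}_q\rangle \lesssim \eta\|\widetilde{\boldsymbol{g}}_q\|_{\Phi,\boldsymbol{x}_q,*}^2$ via the Newton decrement of $F_{q+1}$ at $\boldsymbol{x}_q$ inside the Dikin ellipsoid, and the tower-property step using that $\boldsymbol{x}_q$ is measurable with respect to the history. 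Two small caveats you already half-acknowledge: the stability lemma carries a constant (typically $2$) and requires $\eta\|\widetilde{\boldsymbol{g}}_q\|_{\Phi,\boldsymbol{x}_q,*}$ to be bounded by a small constant, and neither can literally be ``absorbed into $\eta$'' since $\eta$ appears in both terms of the bound with opposite monotonicity — but both are harmless for the $O(\cdot)$ bounds the paper actually uses, and the paper's own statement of the theorem silently drops them as well.
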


\subsection{Ellipsoid Gradient Estimator}
Ellipsoid gradient estimator is proposed in \cite{abernethy2008competing}, where the authors use it along with the tool from \sect{RFTL} to design an $\widetilde{O}(\sqrt{T})$ regret algorithm for bandit linear optimization. For a continuous function $f:\mathbb{R}^d \rightarrow \mathbb{R}$ and an invertible matrix $\boldsymbol{H}\in \mathbb{R}^{d\times d}$, we define the $\boldsymbol{H}$-smoothed version of $f$.
\begin{definition}[$\boldsymbol{H}$-smoothed function]
    For function $f(\boldsymbol{x}):\mathbb{R}^{d}\rightarrow \mathbb{R}$ and invertible matrix $\boldsymbol{H}\in\mathbb{R}^{d\times d}$, we call $f^{\boldsymbol{H}}(\boldsymbol{x})$ an $\boldsymbol{H}$-smoothed version of $f(\boldsymbol{x})$, where
    \[f^{\boldsymbol{H}}(\boldsymbol{x}) = \E_{\boldsymbol{v}\sim \mathbb{B}_{d}}\left[f(\boldsymbol{x}+\boldsymbol{H}\boldsymbol{v})\right].\]
    Here $\boldsymbol{v}\sim \mathbb{B}_{d}$ means that $\boldsymbol{v}$ is sampled from the unit ball $\mathbb{B}_d$ uniformly at random. 
\end{definition}
There is a surprising fact that there is an unbiased estimator of $\nabla f^{\boldsymbol{H}}(\boldsymbol{x})$ for any $\boldsymbol{x}$, and the estimator uses only one query to the value oracle of $f$.
\begin{lemma}[Ellipsoid estimator~\cite{abernethy2008competing}]
    \label{lem:ellipsoid estimator}
        Let $\boldsymbol{H}\in\mathcal{R}^{d\times d}$ be an invertible matrix, $f(\boldsymbol{x}):\mathbb{R}^d \rightarrow \mathbb{R}$ be an arbitrary function. Then 
        \[\nabla f^{\boldsymbol{H}}(\boldsymbol{x}) = d\E_{\boldsymbol{v}\sim \mathbb{S}_{d-1}} \left[f(\boldsymbol{x}+\boldsymbol{H}\boldsymbol{v})\boldsymbol{H}^{-1}\boldsymbol{v}\right].\]
        Here $\boldsymbol{v}\sim \mathbb{S}_{d-1}$ means that $\boldsymbol{v}$ is sampled from the $(d-1)$-dimensional unit sphere $\mathbb{S}_{d-1}$ uniformly at random. 
\end{lemma}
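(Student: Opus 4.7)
The plan is to first establish the identity-matrix case $\boldsymbol{H}=\boldsymbol{I}$ via the divergence theorem applied to a shifted ball, and then reduce the general case to it by a linear change of variable.

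For the identity case, starting from $f^{\boldsymbol{I}}(\boldsymbol{x})=\frac{1}{\mathrm{vol}(\mathbb{B}_d)}\int_{\mathbb{B}_d} f(\boldsymbol{x}+\boldsymbol{v})\,d\boldsymbol{v}$, the substitution $\boldsymbol{y}=\boldsymbol{x}+\boldsymbol{v}$ rewrites this as an average of $f$ over the translated ball $\boldsymbol{x}+\mathbb{B}_d$. Because the domain depends on $\boldsymbol{x}$ only through a rigid translation, Reynolds' transport theorem (equivalently, the divergence theorem applied component-wise to the vector field $f\boldsymbol{e}_i$) yields
\begin{equation*}
\nabla f^{\boldsymbol{I}}(\boldsymbol{x}) \;=\; \frac{1}{\mathrm{vol}(\mathbb{B}_d)}\int_{\mathbb{S}_{d-1}} f(\boldsymbol{x}+\boldsymbol{v})\,\boldsymbol{v}\, d\sigma(\boldsymbol{v}),
\end{equation*}
since the outward unit normal to $\boldsymbol{x}+\mathbb{S}_{d-1}$ at $\boldsymbol{x}+\boldsymbol{v}$ is exactly $\boldsymbol{v}$. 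Converting this surface integral into the uniform expectation on $\mathbb{S}_{d-1}$ and using the elementary identity $\mathrm{vol}(\mathbb{S}_{d-1})/\mathrm{vol}(\mathbb{B}_d)=d$ produces $\nabla f^{\boldsymbol{I}}(\boldsymbol{x}) = d\,\E_{\boldsymbol{v}\sim\mathbb{S}_{d-1}}[f(\boldsymbol{x}+\boldsymbol{v})\,\boldsymbol{v}]$, which is the claimed formula when $\boldsymbol{H}=\boldsymbol{I}$.

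For the general case, introduce the auxiliary function $h(\boldsymbol{y}):=f(\boldsymbol{H}\boldsymbol{y})$. Since $f(\boldsymbol{x}+\boldsymbol{H}\boldsymbol{v}) = f(\boldsymbol{H}(\boldsymbol{H}^{-1}\boldsymbol{x}+\boldsymbol{v})) = h(\boldsymbol{H}^{-1}\boldsymbol{x}+\boldsymbol{v})$, taking expectation over $\boldsymbol{v}\sim\mathbb{B}_d$ gives the neat identity $f^{\boldsymbol{H}}(\boldsymbol{x}) = h^{\boldsymbol{I}}(\boldsymbol{H}^{-1}\boldsymbol{x})$. The chain rule combined with the identity case applied to $h$ then gives
\begin{equation*}
\nabla f^{\boldsymbol{H}}(\boldsymbol{x}) \;=\; \boldsymbol{H}^{-\top}\nabla h^{\boldsymbol{I}}(\boldsymbol{H}^{-1}\boldsymbol{x}) \;=\; d\,\E_{\boldsymbol{v}\sim\mathbb{S}_{d-1}}\!\left[f(\boldsymbol{x}+\boldsymbol{H}\boldsymbol{v})\,\boldsymbol{H}^{-\top}\boldsymbol{v}\right],
\end{equation*}
which matches the stated formula in the applications of interest (following \cite{abernethy2008competing}), where $\boldsymbol{H}$ is symmetric positive definite so that $\boldsymbol{H}^{-\top}=\boldsymbol{H}^{-1}$; for an asymmetric $\boldsymbol{H}$ the displayed formula should be read with $\boldsymbol{H}^{-\top}$ in place of $\boldsymbol{H}^{-1}$.

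The main obstacle is justifying the differentiation step when $f$ is merely assumed to be a function, not necessarily smooth: $f^{\boldsymbol{I}}$ is itself differentiable because it is the average of $f$ over a ball, but identifying its gradient with the boundary integral requires care. The cleanest route is a two-step argument: first establish the identity for $C^{1}$ functions via the component-wise divergence theorem, and then extend to a general (locally integrable) $f$ by mollification and dominated convergence, using that the surface integral depends continuously on $f$ under uniform convergence on compact neighborhoods of the sphere. Everything else amounts to routine calculus and matrix algebra.
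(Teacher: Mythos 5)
The paper gives no proof of this lemma; it is imported verbatim from \citet{abernethy2008competing}, and your argument reproduces the standard derivation there: Stokes'/divergence theorem on the unit ball for $\boldsymbol{H}=\boldsymbol{I}$ together with the ratio $\mathrm{vol}(\mathbb{S}_{d-1})/\mathrm{vol}(\mathbb{B}_d)=d$, followed by the linear change of variables $h(\boldsymbol{y})=f(\boldsymbol{H}\boldsymbol{y})$ and the chain rule. Your proof is correct, and your observation that the general case actually yields $\boldsymbol{H}^{-\top}\boldsymbol{v}$ rather than $\boldsymbol{H}^{-1}\boldsymbol{v}$ is a genuine (if harmless) catch: the statement as written is only exact for symmetric $\boldsymbol{H}$, which is all the paper ever uses since $\boldsymbol{H}_q=(\nabla^2\Phi(\boldsymbol{x}_q))^{-1/2}$ is symmetric positive definite. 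The only caveat is that "arbitrary function" in the statement is too weak for the divergence-theorem step; your mollification remark handles continuous (e.g.\ Lipschitz) $f$, which covers every application in the paper.
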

For linear $f$, $f^{\boldsymbol{H}}(\boldsymbol{x})=f(\boldsymbol{x})$, so \lem{ellipsoid estimator} gives a one-sample unbiased estimator of the gradient of the linear function.
The ellipsoid gradient estimator is usually used along with RFTL with a self-concordant regularizer $\Phi$ of $\mathcal{K}$. When the invertible matrix $\boldsymbol{H}$ is set to be $(\nabla^2 \Phi(\boldsymbol{x}))^{-1/2}$ and $\boldsymbol{x}\in\mbox{int}(\mathcal{K})$, the sampled action $\boldsymbol{x}+\boldsymbol{H}\boldsymbol{v}$ is located in the surface of a so-called \textbf{Dikin ellipsoid} centered at $\boldsymbol{x}$, i.e. $\{\boldsymbol{x}'\mid \|\boldsymbol{x}'-\boldsymbol{x}\|_{\Phi,\boldsymbol{x}}\leq 1\}$. The fact that Dikin ellipsoid is entirely contained in $\mathcal{K}$ is useful for reducing regret.

\subsection{Non-oblivious Techniques for Monotone DR-Submodular Maximization}
The non-oblivious technique was first proposed to improve the approximation ratio of the solution returned by a local search algorithm. The idea is to run a local search on an auxiliary function rather than the original objective, and the local optima of the auxiliary function have a higher approximation ratio, thus the search algorithm will return a better solution.


In monotone submodular maximization literature, \citet{filmus2014monotone} improved the approximation ratio of the greedy algorithm to $1-1/e$ using the non-oblivious technique. \citet{zhang2022stochastic} generalized this result to the continuous DR-submodular maximization problem, improving the approximation ratio of projected gradient ascent to $1-1/e$. For a monotone DR-submodular function $f(\boldsymbol{x})$ satisfying $f(\boldsymbol{0})=0$, they consider following auxiliary function,
\begin{align}\label{eq:auxiliary function}
    F(\boldsymbol{x})=\int_0^1 \frac{e^{z-1}}{z} f(z\cdot \boldsymbol{x})dz.
\end{align}
We need the following lemma about the auxiliary function proved in their paper.
\begin{lemma}[Auxiliary function\cite{zhang2022stochastic}]\label{lem:auxiliary function}
    Let $f$ be a monotone DR-submodular function defined on $\mathcal{X}$ and $f(\boldsymbol{0})=0$, $\boldsymbol{x},\boldsymbol{y}\in \mathcal{X}$. Let $F$ be defined as \eq{auxiliary function}. Then 
    \begin{align}\label{eq:auxiliary gradient}
        \nabla F(\boldsymbol{x})= \int_0^1 e^{z-1}\nabla f(z\cdot \boldsymbol{x})dz
    \end{align}
    and the following inequality holds,
    \[\left\langle \boldsymbol{y}-\boldsymbol{x}, \nabla F(\boldsymbol{x})\right\rangle\geq (1-1/e)f(\boldsymbol{y})-f(\boldsymbol{x}).\]
\end{lemma}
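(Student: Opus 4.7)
The plan is to prove the two claims separately: first the gradient formula, then the inequality, by treating the inner product with $\boldsymbol{y}$ and the inner product with $\boldsymbol{x}$ by different methods.

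For the gradient formula, I would differentiate under the integral sign. The integrand $\tfrac{e^{z-1}}{z} f(z\boldsymbol{x})$ is differentiable in $\boldsymbol{x}$ (assuming $f$ is smooth enough on $\mathcal{X}$, which we may assume by standard approximation arguments for DR-submodular functions; otherwise one works with $\nabla F$ interpreted weakly), and by the chain rule $\nabla_{\boldsymbol{x}}[f(z\boldsymbol{x})] = z\,\nabla f(z\boldsymbol{x})$. The factor of $z$ cancels $1/z$, producing $\nabla F(\boldsymbol{x}) = \int_0^1 e^{z-1} \nabla f(z\boldsymbol{x})\,dz$ as claimed. The only mild care needed is integrability near $z=0$, which is fine since the factor $1/z$ is cancelled and $\nabla f$ is bounded on $\mathcal{X}$.

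For the inequality, I would handle the two inner products separately. For the $\boldsymbol{y}$-term, I invoke a standard consequence of monotone DR-submodularity: for any $\boldsymbol{y},\boldsymbol{z}\in\mathcal{X}$ with $\boldsymbol{y}\geq \boldsymbol{0}$, one has $\langle \boldsymbol{y},\nabla f(\boldsymbol{z})\rangle \geq f(\boldsymbol{y}\vee \boldsymbol{z}) - f(\boldsymbol{z}) \geq f(\boldsymbol{y}) - f(\boldsymbol{z})$ (the first inequality is concavity along non-negative directions applied from $\boldsymbol{z}$ to $\boldsymbol{y}\vee \boldsymbol{z}$, the second is monotonicity). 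Applying this at $\boldsymbol{z}=z\boldsymbol{x}$, multiplying by $e^{z-1}$ and integrating gives
\begin{equation*}
\langle \boldsymbol{y},\nabla F(\boldsymbol{x})\rangle \geq \left(\int_0^1 e^{z-1}\,dz\right) f(\boldsymbol{y}) - \int_0^1 e^{z-1} f(z\boldsymbol{x})\,dz = (1-1/e) f(\boldsymbol{y}) - \int_0^1 e^{z-1} f(z\boldsymbol{x})\,dz.
\end{equation*}
For the $\boldsymbol{x}$-term I compute an exact equality via the chain rule $\tfrac{d}{dz} f(z\boldsymbol{x}) = \langle \boldsymbol{x},\nabla f(z\boldsymbol{x})\rangle$ and integration by parts, using $f(\boldsymbol{0})=0$:
\begin{equation*}
\langle \boldsymbol{x},\nabla F(\boldsymbol{x})\rangle = \int_0^1 e^{z-1}\tfrac{d}{dz}f(z\boldsymbol{x})\,dz = \bigl[e^{z-1} f(z\boldsymbol{x})\bigr]_0^1 - \int_0^1 e^{z-1} f(z\boldsymbol{x})\,dz = f(\boldsymbol{x}) - \int_0^1 e^{z-1} f(z\boldsymbol{x})\,dz.
\end{equation*}
Subtracting the equality from the inequality, the two $\int_0^1 e^{z-1} f(z\boldsymbol{x})\,dz$ terms cancel exactly, yielding $\langle \boldsymbol{y}-\boldsymbol{x},\nabla F(\boldsymbol{x})\rangle \geq (1-1/e)f(\boldsymbol{y}) - f(\boldsymbol{x})$, as required.

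The main conceptual step, and the one that explains why the particular weighting $e^{z-1}/z$ is chosen, is the cancellation between the by-parts boundary term and the $\boldsymbol{y}$-side lower bound: the weight $e^{z-1}$ satisfies $\tfrac{d}{dz}e^{z-1} = e^{z-1}$, so integration by parts produces exactly the same residual integral that the DR-submodular bound on $\langle \boldsymbol{y},\nabla f(z\boldsymbol{x})\rangle$ leaves behind, and the normalization $\int_0^1 e^{z-1}\,dz = 1-1/e$ yields the approximation ratio. There is no real obstacle beyond recognizing this structure; all other steps are calculus plus the standard monotone DR-submodular first-order inequality.
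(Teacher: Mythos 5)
Your proof is correct. The paper does not supply its own proof of this lemma---it imports it verbatim from \cite{zhang2022stochastic}---but your argument (differentiation under the integral sign to get $\nabla F(\boldsymbol{x})=\int_0^1 e^{z-1}\nabla f(z\boldsymbol{x})\,dz$, the standard monotone DR-submodular first-order bound $\langle \boldsymbol{y},\nabla f(\boldsymbol{z})\rangle\geq f(\boldsymbol{y}\vee\boldsymbol{z})-f(\boldsymbol{z})\geq f(\boldsymbol{y})-f(\boldsymbol{z})$ for the $\boldsymbol{y}$-term, and integration by parts with $f(\boldsymbol{0})=0$ for the $\boldsymbol{x}$-term) is exactly the standard derivation given in that reference, including the key observation that the weight $e^{z-1}$ is self-reproducing under differentiation so the two residual integrals cancel.
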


\section{Bandit Monotone Multi-linear DR-Submodular Maximization}
\label{sect:BMMDSM}

\begin{algorithm}[t]
	\caption{$\mathtt{BanditMLSM}(\eta,L,\Phi)$}
	\label{algo:MLSM}

	\textbf{Input}: block size $L$, block number $Q=T/L$, learning rate $\eta$, self-concordant barrier $\Phi$
	
	\begin{algorithmic}[1]
	   \STATE initiate $\boldsymbol{x}_1\in \mbox{int} (\mathcal{K})$ such that $\nabla \Phi(\boldsymbol{x}_1) = 0$ 
	   \FOR{$q=1,2,\ldots,Q$}
	        \STATE Draw $t_q\sim \mbox{Unif}\{(q-1)L+1,(q-1)L+2,\ldots,qL\}$
	        \FOR{$t=(q-1)L+1, (q-1)L+2,\ldots, qL$}
	            \IF{$t=t_q$}
	                \STATE $\boldsymbol{H}_q = \left(\nabla^2 \Phi (\boldsymbol{x}_q)\right)^{-1/2}$
                        \STATE sample $z_q$ from $\mathbf{Z}$ where $P(\mathbf{Z}\leq z) =\int_{0}^{z} \frac{e^{u-1}}{1-e^{-1}}\mathbb{I}\left[u\in [0,1]\right]d u$
	                \STATE draw $\boldsymbol{v}_q \sim \mathbb{S}_{d-1}$
	                \STATE draw $\boldsymbol{u}_q$ from $\{\boldsymbol{0},\boldsymbol{e}_1,\boldsymbol{e}_2,\ldots,\boldsymbol{e}_d\}$
	                with probability: $\Pr(\boldsymbol{u}_q = \boldsymbol{0})= \frac{1}{2}$, $\Pr(\boldsymbol{u}_q = \boldsymbol{e}_i) = \frac{1}{2d}$
                    \STATE play $\boldsymbol{y}_{t_q} = z_q\cdot \boldsymbol{x}_q+z_q\langle \boldsymbol{H}_q\boldsymbol{v}_q,\boldsymbol{u}_q\rangle\boldsymbol{u}_q$
                    \STATE Set $\widetilde{l}_q(\boldsymbol{H}_q\boldsymbol{v}_q)$ as \eq{lq} 
                    \STATE $\widetilde{\nabla} \overline{F}_q(\boldsymbol{x}_q) \gets d\cdot \widetilde{l}_q(\boldsymbol{H}_q\boldsymbol{v}_q)\boldsymbol{H}_q^{-1}\boldsymbol{v}_q$  
	                \STATE $\boldsymbol{x}_{q+1} \gets \argmin\limits_{\boldsymbol{x}\in\mathcal{K}} \sum_{s=1}^q\langle -\eta \widetilde{\nabla}F_s(\boldsymbol{x}_s),\boldsymbol{x}\rangle + \Phi(\boldsymbol{x})$
	            \ELSE
	                \STATE play $\boldsymbol{y}_t = \boldsymbol{x}_q$
	            \ENDIF
	        \ENDFOR
	   \ENDFOR
	\end{algorithmic}
\end{algorithm}

In this section, we present our algorithm $\mathtt{BanditMLSM}$ for BMMDSM. The pseudo-code is shown in \algo{MLSM}. For some technical reason we will explain later, we divide the whole $T$ rounds into $Q$ equal-size blocks, and each block has $L$ consecutive rounds. Here $Q$ and $L$ are to be determined later, $L=T/Q$. without loss of generality, we assume both $L$ and $Q$ are integers. We define the average function $\overline{f}_q(\boldsymbol{x})$ of each block,
\begin{align}\label{eq:average}
    \overline{f}_q(\boldsymbol{x}) = \frac{1}{L}\sum_{t=(q-1)L+1}^{qL} f_t(\boldsymbol{x}).
\end{align}
Let $\overline{F}_q(\boldsymbol{x})$ be the auxiliary function of $\overline{f}_q(\boldsymbol{x})$,
\begin{align}\label{eq:average nonoblivious}
    \overline{F}_q(\boldsymbol{x}) = \int_{0}^1 \frac{e^{z-1}}{zL}\sum_{t=(q-1)L+1}^{qL} f_t(z\cdot\boldsymbol{x})dz.
\end{align}

In high level, $\mathtt{BanditMLSM}$ runs RFTL with a self-concordant regularizer $\Phi({\boldsymbol{x}})$ on the vector sequence $\{\nabla \overline{F}_q(\boldsymbol{x}_q)\}_{q=1}^Q$ and controls the regret w.r.t.~the linear function sequence $\{l_q\}_{q=1}^Q$ where $l_q(\boldsymbol{u}):=\langle \boldsymbol{u},\nabla \overline{F}_q(\boldsymbol{x}_q)\rangle$. Now the question is how to estimate $\nabla \overline{F}_q(\boldsymbol{x}_q)$. Recall \lem{ellipsoid estimator}, we can estimate $\nabla \overline{F}_q(\boldsymbol{x}_q)=\nabla l_q(\boldsymbol{0})$ with the ellipsoid estimator by querying one function value of $l_q(\boldsymbol{u})$. That is, we fix an invertible matrix $\boldsymbol{H}_q=(\nabla^2 \Phi(\boldsymbol{x}_q))^{-1/2}$, sample a random direction $\boldsymbol{v}_q$ in the $(d-1)$-dimensional sphere, then query $l_q(\boldsymbol{H}_q\boldsymbol{v}_q)$, and return $\widetilde{\nabla} \overline{F}_q(\boldsymbol{x}_q) := d\cdot l_q(\boldsymbol{H}_q\boldsymbol{v}_q)\boldsymbol{H}^{-1}_q\boldsymbol{v}_q$ as the estimate. 

The problem here is that we cannot query $l_q$ directly.
The algorithm can only query the function value of $f_t$ by playing the corresponding action in round $t$. We construct the unbiased estimator of $l_q(\boldsymbol{H}_q\boldsymbol{v}_q)$ as follows. First, we sample a uniformly random $t_q\in [(q-1)L+1, qL]\cap \mathbb{Z}$ and sample $z_q$ from the distribution $Z$ where $\Pr(Z\leq z) = \int_{0}^{z} \frac{e^{u-1}}{1-e^{-1}}\mathbb{I}\left[u\in [0,1]\right]d u$. Then we pick a vector $\boldsymbol{u}_q$ from the set $\{\boldsymbol{0},\boldsymbol{e}_1,\boldsymbol{e}_2,\ldots,\boldsymbol{e}_d\}$ following the distribution: $\Pr(\boldsymbol{u}_q = \boldsymbol{0})= \frac{1}{2}$, $\Pr(\boldsymbol{u}_q = \boldsymbol{e}_i) = \frac{1}{2d}$. Then we play $\boldsymbol{y}_{t_q}:=z_q \boldsymbol{x}_q + z_q\langle \boldsymbol{H}_q\boldsymbol{v}_q,\boldsymbol{u}_q\rangle\boldsymbol{u}_q$ in round $t_q$ to obtain the feedback $f_{t_q}(\boldsymbol{y}_{t_q})$. We replace $l_q(\boldsymbol{H}_q\boldsymbol{v}_q)$ with an estimate 
\begin{align}\label{eq:lq}
    \widetilde{l}_q(\boldsymbol{H}_q\boldsymbol{v}_q):=\left\{
        \begin{aligned}
            &-2(1-1/e)\frac{d}{z_q}\cdot f_{t_q}(\boldsymbol{y}_{t_q}) \quad \mbox{if }\boldsymbol{u}_q=\boldsymbol{0},\\
            &2(1-1/e)\frac{d}{z_q}\cdot f_{t_q}(\boldsymbol{y}_{t_q}) \quad \mbox{if } \boldsymbol{u}_q\neq \boldsymbol{0}.
        \end{aligned}
        \right.
\end{align}
If $z_q=0$, we define $ \widetilde{l}_q(\boldsymbol{H}_q\boldsymbol{v}_q):=0$. The following Lemma shows that $\widetilde{l}_q(\boldsymbol{H}_q\boldsymbol{v}_q)$ is an unbiased estimator of $l_q(\boldsymbol{H}_q\boldsymbol{v}_q)$. Its proof is deferred to \append{BMMDSM}. In the rounds other than $t_q$ in block $q$, we play $\boldsymbol{y}_t:=\boldsymbol{x}_q$ output by RFTL at the end of $(q-1)$-th block to exploit the regret bound of RFTL.
\begin{lemma}
\label{lem:multi-linear estimator}
    Let $\mathcal{H}_{q-1}$ be the history of the algorithm in the first $q$ blocks, that is, the realization of $t_s,z_s,\boldsymbol{v}_s,\boldsymbol{u}_s,\forall s\leq q$. Then $\E[\widetilde{l}_q(\boldsymbol{H}_q\boldsymbol{v}_q)\mid \mathcal{H}_{q-1},\boldsymbol{v}_q]= l_q(\boldsymbol{H}_q\boldsymbol{v}_q)$.
\end{lemma}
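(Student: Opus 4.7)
The plan is to compute the conditional expectation by conditioning successively on $(\mathcal{H}_{q-1},\boldsymbol{v}_q,t_q,z_q,\boldsymbol{u}_q)$, peeling off one source of randomness at a time and using multi-linearity of $f_{t_q}$ to convert finite differences into directional derivatives. The endpoint we want to reach is $\langle\boldsymbol{H}_q\boldsymbol{v}_q,\nabla\overline{F}_q(\boldsymbol{x}_q)\rangle$, which by the auxiliary-function identity \eq{auxiliary gradient} equals $\int_0^1 e^{z-1}\langle \boldsymbol{H}_q\boldsymbol{v}_q,\nabla\overline{f}_q(z\boldsymbol{x}_q)\rangle\,dz$.

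First, I will fix $\mathcal{H}_{q-1}$, $\boldsymbol{v}_q$, $t_q$, and $z_q$, and take expectation over $\boldsymbol{u}_q$. Because $\Pr(\boldsymbol{u}_q=\boldsymbol{0})=\tfrac12$ gives the $-2(1-1/e)(d/z_q)f_{t_q}(z_q\boldsymbol{x}_q)$ branch and each $\Pr(\boldsymbol{u}_q=\boldsymbol{e}_i)=\tfrac{1}{2d}$ gives the $+2(1-1/e)(d/z_q)f_{t_q}(z_q\boldsymbol{x}_q+z_q(\boldsymbol{H}_q\boldsymbol{v}_q)_i\boldsymbol{e}_i)$ branch, collecting terms yields
\[
(1-1/e)\frac{1}{z_q}\sum_{i=1}^d\bigl[f_{t_q}(z_q\boldsymbol{x}_q+z_q(\boldsymbol{H}_q\boldsymbol{v}_q)_i\boldsymbol{e}_i)-f_{t_q}(z_q\boldsymbol{x}_q)\bigr].
\]
Now I invoke the multi-linearity of $f_{t_q}$: because every variable appears with degree at most one, $f_{t_q}(\boldsymbol{w}+c\boldsymbol{e}_i)-f_{t_q}(\boldsymbol{w})=c\cdot\partial_i f_{t_q}(\boldsymbol{w})$ exactly (not just up to $o(c)$), so each summand equals $z_q(\boldsymbol{H}_q\boldsymbol{v}_q)_i\partial_i f_{t_q}(z_q\boldsymbol{x}_q)$. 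The $z_q$ factor cancels and the sum collapses to $(1-1/e)\langle \boldsymbol{H}_q\boldsymbol{v}_q,\nabla f_{t_q}(z_q\boldsymbol{x}_q)\rangle$. This is the key step: multi-linearity lets a single function evaluation reveal an exact directional derivative rather than a smoothed one, which is why no perturbation radius needs to shrink.

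Next I take expectation over $z_q$, whose density on $[0,1]$ is $\frac{e^{z-1}}{1-e^{-1}}$. The normalizing factor $\frac{1}{1-e^{-1}}$ cancels the prefactor $(1-1/e)$, leaving
\[
\int_0^1 e^{z-1}\langle \boldsymbol{H}_q\boldsymbol{v}_q,\nabla f_{t_q}(z\boldsymbol{x}_q)\rangle\,dz.
\]
Finally I average over $t_q\sim\mathrm{Unif}\{(q-1)L+1,\dots,qL\}$, pull the sum inside the integral, divide by $L$, and recognize the result as $\int_0^1 e^{z-1}\langle \boldsymbol{H}_q\boldsymbol{v}_q,\nabla\overline{f}_q(z\boldsymbol{x}_q)\rangle\,dz=\langle \boldsymbol{H}_q\boldsymbol{v}_q,\nabla\overline{F}_q(\boldsymbol{x}_q)\rangle=l_q(\boldsymbol{H}_q\boldsymbol{v}_q)$ by \lem{auxiliary function}. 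A small bookkeeping remark handles the $z_q=0$ case (probability zero under the density, and by convention $\widetilde l_q=0$ there), so it contributes nothing to the expectation.

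The main subtlety to get right is not any calculation but the design-logic check that the three ingredients match up exactly: the $\pm$ antithetic construction plus the uniform-over-axes rule converts $\boldsymbol{u}_q$-randomness into the coordinate-sum of finite differences; multi-linearity turns those finite differences into $\nabla f_{t_q}(z_q\boldsymbol{x}_q)$ with no approximation error; and the $e^{z-1}/(1-e^{-1})$ sampling density is precisely what is needed to recover the integral defining $\nabla\overline{F}_q$ from \eq{auxiliary gradient} after the $(1-1/e)$ normalizations cancel. Once those three alignments are verified, the equality $\E[\widetilde l_q(\boldsymbol{H}_q\boldsymbol{v}_q)\mid \mathcal{H}_{q-1},\boldsymbol{v}_q]=l_q(\boldsymbol{H}_q\boldsymbol{v}_q)$ follows by iterated expectation.
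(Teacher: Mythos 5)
Your proposal is correct and follows essentially the same route as the paper's own proof: condition on $(\mathcal{H}_{q-1},\boldsymbol{v}_q,t_q,z_q)$ and average over $\boldsymbol{u}_q$ to get the coordinate-wise finite differences, apply multi-linearity to turn each difference exactly into $z_q\langle\boldsymbol{H}_q\boldsymbol{v}_q,\boldsymbol{e}_i\rangle\,\partial_i f_{t_q}(z_q\boldsymbol{x}_q)$, then integrate over $z_q$ and average over $t_q$ so the $(1-1/e)$ factor cancels against the normalizing constant of the density and the result is recognized as $\langle\boldsymbol{H}_q\boldsymbol{v}_q,\nabla\overline{F}_q(\boldsymbol{x}_q)\rangle$ via the gradient identity for the auxiliary function. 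The treatment of the $z_q=0$ null event also matches the paper, so nothing further is needed.
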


So $\widetilde{\nabla} \overline{F}_q(\boldsymbol{x})$ is actually defined as
\begin{align}
    \widetilde{\nabla} \overline{F}_q(\boldsymbol{x}) := d\cdot \widetilde{l}_q(\boldsymbol{H}_q\boldsymbol{v}_q)\boldsymbol{H}^{-1}_q\boldsymbol{v}_q.
\end{align}

We show that $\widetilde{\nabla} \overline{F}_q(\boldsymbol{x}_q)$ is an unbiased estimator of $\nabla \overline{F}_q(\boldsymbol{x}_q)$, and its dual local norm is $O(d^4)$ in the following lemma.
The proof is deferred to \append{BMMDSM}.
\begin{lemma}\label{lem:linear estimator}
   The following properties hold for $\widetilde{\nabla} \overline{F}_q(\boldsymbol{x}_q)$:
    \begin{itemize}
        \item[(i)] $\mathbb{E}\left[\widetilde{\nabla} \overline{F}_q(\boldsymbol{x}_q)\mid \mathcal{H}_{q-1}\right]=\nabla \overline{F}_q(\boldsymbol{x}_q)$,
        \item[(ii)] $\E\left[\|\widetilde{\nabla} \overline{F}_q(\boldsymbol{x}_q)\|_{\boldsymbol{x}_q,*}^2\mid \mathcal{H}_{q-1}\right]\leq 4(1-1/e)^2 L_1^2 D^2 d^4$.
    \end{itemize}
\end{lemma}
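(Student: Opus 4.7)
The plan is to handle (i) and (ii) separately, reducing both to statements about the ellipsoid estimator applied to the \emph{linear} function $l_q(\boldsymbol{u}) = \langle \boldsymbol{u},\nabla \overline{F}_q(\boldsymbol{x}_q)\rangle$, together with the already-established unbiasedness of $\widetilde{l}_q$ from \lem{multi-linear estimator}. The crucial identity I will use throughout is $\boldsymbol{H}_q = (\nabla^2\Phi(\boldsymbol{x}_q))^{-1/2}$, which implies $\boldsymbol{H}_q^{-T}(\nabla^2\Phi(\boldsymbol{x}_q))^{-1}\boldsymbol{H}_q^{-1} = \boldsymbol{I}$ and also guarantees that the Dikin ellipsoid $\{\boldsymbol{x}_q+\boldsymbol{H}_q\boldsymbol{v}:\|\boldsymbol{v}\|\le 1\}$ lies inside $\mathcal{K}$, hence inside $D\mathbb{B}_d$.

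\textbf{Proof of (i).} I will first condition on $\boldsymbol{v}_q$ and take expectation over $t_q,z_q,\boldsymbol{u}_q$. By \lem{multi-linear estimator}, $\E[\widetilde{l}_q(\boldsymbol{H}_q\boldsymbol{v}_q)\mid\mathcal{H}_{q-1},\boldsymbol{v}_q] = l_q(\boldsymbol{H}_q\boldsymbol{v}_q) = \langle\boldsymbol{H}_q\boldsymbol{v}_q,\nabla\overline{F}_q(\boldsymbol{x}_q)\rangle$. Therefore
\[
\E[\widetilde{\nabla}\overline{F}_q(\boldsymbol{x}_q)\mid\mathcal{H}_{q-1}] \;=\; d\,\E_{\boldsymbol{v}_q\sim\mathbb{S}_{d-1}}\!\left[\langle\boldsymbol{H}_q\boldsymbol{v}_q,\nabla\overline{F}_q(\boldsymbol{x}_q)\rangle\,\boldsymbol{H}_q^{-1}\boldsymbol{v}_q\right].
\]
Since $l_q$ is linear, this is precisely the right-hand side of \lem{ellipsoid estimator} applied to $l_q$ at $\boldsymbol{x}=\boldsymbol{0}$ with matrix $\boldsymbol{H}_q$; because linear functions satisfy $l_q^{\boldsymbol{H}_q}\equiv l_q$, the value equals $\nabla l_q(\boldsymbol{0}) = \nabla\overline{F}_q(\boldsymbol{x}_q)$. (Equivalently, one may use $\E[\boldsymbol{v}_q\boldsymbol{v}_q^\top]=\frac{1}{d}\boldsymbol{I}$ to compute $d\boldsymbol{H}_q^{-1}\E[\boldsymbol{v}_q\boldsymbol{v}_q^\top]\boldsymbol{H}_q\nabla\overline{F}_q(\boldsymbol{x}_q)=\nabla\overline{F}_q(\boldsymbol{x}_q)$.)

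\textbf{Proof of (ii).} The algebraic part is immediate: from the definition of the dual local norm,
\[
\|\widetilde{\nabla}\overline{F}_q(\boldsymbol{x}_q)\|_{\Phi,\boldsymbol{x}_q,*}^2 \;=\; d^2\widetilde{l}_q(\boldsymbol{H}_q\boldsymbol{v}_q)^2\,\boldsymbol{v}_q^\top \boldsymbol{H}_q^{-\top}(\nabla^2\Phi(\boldsymbol{x}_q))^{-1}\boldsymbol{H}_q^{-1}\boldsymbol{v}_q \;=\; d^2\,\widetilde{l}_q(\boldsymbol{H}_q\boldsymbol{v}_q)^2,
\]
using $\|\boldsymbol{v}_q\|=1$. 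It remains to bound $\widetilde{l}_q(\boldsymbol{H}_q\boldsymbol{v}_q)^2$ pointwise. From \eq{lq}, whenever $z_q>0$,
\[
\widetilde{l}_q(\boldsymbol{H}_q\boldsymbol{v}_q)^2 \;=\; 4(1-1/e)^2\frac{d^2}{z_q^2}\,f_{t_q}(\boldsymbol{y}_{t_q})^2.
\]
Using $f_{t_q}(\boldsymbol{0})=0$ and $L_1$-Lipschitzness, $f_{t_q}(\boldsymbol{y}_{t_q})^2\le L_1^2\|\boldsymbol{y}_{t_q}\|^2$, so I need the key geometric bound $\|\boldsymbol{y}_{t_q}\|\le z_q D$. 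Since $\boldsymbol{y}_{t_q}=z_q\bigl(\boldsymbol{x}_q+\langle\boldsymbol{H}_q\boldsymbol{v}_q,\boldsymbol{u}_q\rangle\boldsymbol{u}_q\bigr)$, it suffices to argue that the bracketed point lies in $\mathcal{K}\subseteq D\mathbb{B}_d$: when $\boldsymbol{u}_q=\boldsymbol{0}$ this is trivial since $\boldsymbol{x}_q\in\mathcal{K}$; when $\boldsymbol{u}_q=\boldsymbol{e}_i$, it follows because $\langle\boldsymbol{H}_q\boldsymbol{v}_q,\boldsymbol{e}_i\rangle\boldsymbol{e}_i$ is the coordinate projection of a Dikin-ellipsoid direction, combined with the feasibility required to actually play $\boldsymbol{y}_{t_q}$. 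Substituting $\|\boldsymbol{y}_{t_q}\|^2\le z_q^2 D^2$ cancels the $1/z_q^2$ factor and yields $\widetilde{l}_q(\boldsymbol{H}_q\boldsymbol{v}_q)^2\le 4(1-1/e)^2 L_1^2 D^2 d^2$, giving the claimed $4(1-1/e)^2 L_1^2 D^2 d^4$ after multiplying by $d^2$.

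\textbf{Main obstacle.} Part (i) is essentially a direct appeal to \lem{ellipsoid estimator} combined with \lem{multi-linear estimator}; the only care needed is to condition on $\boldsymbol{v}_q$ before invoking the latter. The delicate step is in (ii): justifying the bound $\|\boldsymbol{y}_{t_q}\|\le z_q D$, which is what couples the $1/z_q$ blow-up inside $\widetilde{l}_q$ with the magnitude of the played point so that the estimator variance remains dimension-polynomial rather than divergent. This is precisely why the scaling by $z_q$ in the definition of $\boldsymbol{y}_{t_q}$ is introduced, and why $z_q$ is drawn from the density $e^{z-1}/(1-e^{-1})$ rather than uniformly: this distribution is chosen so that the importance-weighting in \eq{lq} matches the $e^{z-1}/z$ kernel in \eq{average nonoblivious}, while the $z_q$ factor in $\boldsymbol{y}_{t_q}$ keeps the played point proportionally close to the origin and cancels the $1/z_q^2$ singularity.
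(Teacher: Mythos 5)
Your proposal matches the paper's proof essentially step for step: part (i) is the same chain (condition on $\boldsymbol{v}_q$, invoke \lem{multi-linear estimator}, then \lem{ellipsoid estimator} plus linearity of $l_q$ so that $\nabla l_q^{\boldsymbol{H}_q}(\boldsymbol{0})=\nabla l_q(\boldsymbol{0})=\nabla\overline{F}_q(\boldsymbol{x}_q)$), and part (ii) uses the same cancellation $\boldsymbol{H}_q^{-1}(\nabla^2\Phi(\boldsymbol{x}_q))^{-1}\boldsymbol{H}_q^{-1}=\boldsymbol{I}$ together with $f_{t_q}(\boldsymbol{y}_{t_q})^2\le L_1^2\|\boldsymbol{y}_{t_q}\|^2\le L_1^2 z_q^2 D^2$ to cancel the $1/z_q^2$ factor, exactly as the paper does (your pointwise bound just subsumes the paper's explicit case split over $\boldsymbol{u}_q$). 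The only delicate point — that $\boldsymbol{x}_q+\langle\boldsymbol{H}_q\boldsymbol{v}_q,\boldsymbol{e}_i\rangle\boldsymbol{e}_i$ lies in $\mathcal{K}$ and hence has norm at most $D$ — is asserted in the same way in the paper's own proof, so your argument is faithful to it.
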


Note that, to estimate $\nabla \overline{F}_q(\boldsymbol{x}_q)$, we must sample an action that is far from $\boldsymbol{x}_q$.
This means we cannot do exploration and exploitation at the same time, 
	which is different from the linear bandit. 
This is the reason why previous works on bandit submodular maximization and our work divide rounds into blocks. We need to do the exploitation in most of the rounds of a block to maintain the regret bound. 

Now recall \thm{AHR}, running RFTL with a self-concordant barrier of $\mathcal{K}$ will generate a series of action $\{x_q\}_{q=1}^Q$, which has low regret w.r.t.~the linear function sequence $\langle \cdot\ ,\ \nabla \overline{F}_q(\boldsymbol{x}_q)\rangle$. $\overline{F}_q$ is the auxiliary function of the block average of $\{f_t\}_{t=1}^T$. In block $q$, our algorithm plays $\boldsymbol{y}_t=\boldsymbol{x}_q$ most of the time. Intuitively, the rerget of $\boldsymbol{y}_t$ w.r.t.~function sequence $\langle \cdot , \nabla F_t(\boldsymbol{y_t})\rangle$ is low, where $F_t$ is the auxiliary function of $f_t$. By \lem{auxiliary function}, we can bound the $(1-1/e)$-regret of $\mathtt{BanditMLSM}$. The proof of \thm{MLSM} is deferred to \append{BMMDSM}.
\begin{theorem}\label{thm:MLSM}
    Set $\eta = d^{-4}T^{-2/3}$, $L=d^{-2}T^{1/3}$ in \algo{MLSM}, if $\Phi$ is a $\nu$-self-concordant barrier of $\mathcal{K}$, then the expected $(1-1/e)$-regret of \algo{MLSM} can be bounded as
    \begin{align*}
        \mathcal{R}_{1-1/e}(T)&\leq O(\nu d^{4/3} T^{2/3}\log T ).
    \end{align*}
\end{theorem}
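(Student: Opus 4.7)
The plan is to bound the $(1-1/e)$-regret by decomposing it over the $Q$ blocks and, within each block, separating the single exploration round (at time $t_q$) from the $L-1$ exploitation rounds (each playing $\boldsymbol{x}_q$). A direct rearrangement shows that the contribution of block $q$ to the regret against any optimum $\boldsymbol{x}^*$ equals
\[
L\bigl[(1-1/e)\overline{f}_q(\boldsymbol{x}^*)-\overline{f}_q(\boldsymbol{x}_q)\bigr]+\bigl[f_{t_q}(\boldsymbol{x}_q)-f_{t_q}(\boldsymbol{y}_{t_q})\bigr].
\]
The second bracket is bounded in absolute value by $2L_1 D$ using $f_t(\boldsymbol{0})=0$, $L_1$-Lipschitz continuity, and $\mathcal{K}\subseteq D\mathbb{B}_d$, so the total exploration cost over all $Q$ blocks is at most $O(L_1 D\cdot T/L)$.

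For the dominant term I apply \lem{auxiliary function} to each $f_t$ in block $q$ and average; since $\overline{F}_q$ is exactly the auxiliary function of $\overline{f}_q$, this gives $(1-1/e)\overline{f}_q(\boldsymbol{x}^*)-\overline{f}_q(\boldsymbol{x}_q)\leq \langle \boldsymbol{x}^*-\boldsymbol{x}_q,\nabla \overline{F}_q(\boldsymbol{x}_q)\rangle$. To invoke \thm{AHR} I need a comparator whose self-concordant distance from $\boldsymbol{x}_1$ is finite, so I slightly shrink to $\boldsymbol{x}^*_\delta:=(1-\delta)\boldsymbol{x}^*+\delta\boldsymbol{x}_1\in\mathcal{K}$. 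The Lipschitz swap cost is at most $O(L_1 D\delta T)$, while the standard self-concordance estimate on the Minkowski functional gives $\Phi(\boldsymbol{x}^*_\delta)-\Phi(\boldsymbol{x}_1)\leq \nu\log(1/\delta)$. \lem{linear estimator}(i) supplies the unbiasedness hypothesis required by \thm{AHR} and part (ii) the variance bound, yielding
\[
\sum_{q=1}^{Q}\mathbb{E}\bigl\langle \nabla\overline{F}_q(\boldsymbol{x}_q),\,\boldsymbol{x}^*_\delta-\boldsymbol{x}_q\bigr\rangle\leq 4(1-1/e)^2 L_1^2 D^2 d^4\,\eta Q+\frac{\nu\log(1/\delta)}{\eta}.
\]

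Multiplying this inequality by the factor $L$ from the per-block sum (using $LQ=T$) and assembling all contributions gives
\[
\mathcal{R}_{1-1/e}(T)=O\!\left(\frac{L_1 DT}{L}+L_1^2 D^2 d^4\eta T+\frac{L\nu\log(1/\delta)}{\eta}+L_1 D\delta T\right).
\]
Setting $\delta=1/T$ reduces the last term to $O(L_1 D)$, and then substituting $\eta=d^{-4}T^{-2/3}$ and $L=d^{-2}T^{1/3}$ to balance the three remaining terms delivers the claimed $O(\nu d^{4/3}T^{2/3}\log T)$ bound. The main technical obstacle is the careful bookkeeping of the factor $L$ that converts the $Q$-round RFTL regret on $\nabla\overline{F}_q$ into the $T$-round regret on $\{f_t\}$, together with verifying that the shifted comparator $\boldsymbol{x}^*_\delta$ simultaneously admits the self-concordance bound $\nu\log(1/\delta)$ and costs only an $O(L_1 D\delta T)$ Lipschitz error that is absorbed by the choice $\delta=1/T$.
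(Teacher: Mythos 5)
Your argument follows the paper's proof essentially step for step: the same block decomposition separating the single exploration round from the $L-1$ exploitation rounds, the same use of \lem{auxiliary function} to convert the linearized RFTL guarantee of \thm{AHR} (with the unbiasedness and local-norm bound of \lem{linear estimator}) into a $(1-1/e)$-regret on $\{\overline{f}_q\}$, and the same comparator shift --- your $\boldsymbol{x}^*_\delta=(1-\delta)\boldsymbol{x}^*+\delta\boldsymbol{x}_1$ is exactly the point used in the paper's Minkowski-projection lemma, and your bounds $\Phi(\boldsymbol{x}^*_\delta)-\Phi(\boldsymbol{x}_1)\leq\nu\log(1/\delta)$ and $O(L_1D\delta T)$ swap cost are correct. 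Bounding the exploration term by $2L_1D$ via Lipschitzness rather than by the range bound $M$ is an immaterial variation. The assembled expression
\[
O\!\left(\frac{L_1DT}{L}+L_1^2D^2d^4\eta T+\frac{L\nu\log(1/\delta)}{\eta}+L_1D\delta T\right)
\]
is the right one.

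The one genuine error is the final claim that substituting $\eta=d^{-4}T^{-2/3}$ and $L=d^{-2}T^{1/3}$ ``balances the three remaining terms.'' It does not: with those values the first term is $d^{2}T^{2/3}$, the second is $T^{1/3}$, and the third is $\nu d^{2}T\log T$ --- linear in $T$, so the claimed bound does not follow. Balancing $T/L \asymp d^4\eta T \asymp L\log T/\eta$ forces $\eta=1/(d^4L)$ and $L^3=T/d^4$, i.e. $L=d^{-4/3}T^{1/3}$ and $\eta=d^{-8/3}T^{-1/3}$, which yields $T/L=d^{4/3}T^{2/3}$ and hence the stated $O(\nu d^{4/3}T^{2/3}\log T)$. (This is the parameter choice the paper's appendix proof actually uses; the values quoted in the theorem statement in the main text appear to be a typo, and you inherited them. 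Your proof is repaired simply by replacing the last substitution with the balanced choice above.)
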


\paragraph{About the computational complexity} The computational cost mainly comes from two tasks: (1) Calculating the inverse and square root of the Hessian matrix of the regularizer; (2) Minimizing the convex function over a convex body. These tasks are commonly performed, so $\mathtt{BanditMLSM}$ can be implemented efficiently.

\section{Bandit DR-submodular Maximization}\label{sect:DRSM}

\OnlyInShort{
Combining RFTL with a self-concordant barrier and non-oblivious technique, we can also improve the result of the general bandit DR-submodular maximization problem where the online reward functions are not required to be multi-linear functions. Due to the space limitation, the algorithmic details and the proof are deferred to the \append{DRSM}. Here we only give the regret bound of our algorithm.
\begin{theorem}\label{thm:DRSM}
    If there is a $\nu$-self-concordant barrier of $\mathcal{K}$. Then there is an algorithm that attains the following regret upper bound in any $(\mathcal{K},\mathcal{F}_{DS})$-bandit instance:
    \[\mathcal{R}_{1-1/e}(T)\leq O(\nu d^{1/2}T^{3/4}\log T).\]
\end{theorem}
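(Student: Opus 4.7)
The plan is to design $\mathtt{BanditDRSM}$, which preserves the block-based RFTL framework of \algo{MLSM} but replaces the multi-linear-specific gradient estimator by the generic ellipsoid estimator of \lem{ellipsoid estimator} scaled by a shrinkage factor $\delta\in(0,1)$, which is the lever that allows smoothness to be exploited. Split $T$ into $Q=T/L$ blocks of length $L$, maintain block centres $\boldsymbol{x}_q$ by RFTL with the $\nu$-self-concordant barrier $\Phi$ as regularizer, play $\boldsymbol{x}_q$ except in one exploration round $t_q$ per block, and in that round set $\boldsymbol{H}_q=\delta(\nabla^2\Phi(\boldsymbol{x}_q))^{-1/2}$, sample $\boldsymbol{v}_q\sim\mathbb{S}_{d-1}$ and $z_q$ from the density $e^{z-1}/(1-e^{-1})$, play the point $\boldsymbol{y}_{t_q}=z_q(\boldsymbol{x}_q+\boldsymbol{H}_q\boldsymbol{v}_q)$ (which lies in $\mathcal{K}$ because the Dikin ellipsoid is contained in $\mathcal{K}$ and $\mathcal{K}$ is convex containing $\boldsymbol{0}$), and form $\widetilde{\nabla}_q = d(1-e^{-1})z_q^{-1}f_{t_q}(\boldsymbol{y}_{t_q})\boldsymbol{H}_q^{-1}\boldsymbol{v}_q$. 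A computation parallel to \lem{multi-linear estimator} and \lem{linear estimator} shows that $\widetilde{\nabla}_q$ is unbiased for the gradient of an appropriately smoothed block-average auxiliary, while $L_1$-Lipschitz continuity and $f_t(\boldsymbol{0})=0$ give $|z_q^{-1}f_{t_q}(\boldsymbol{y}_{t_q})|=O(L_1D)$ and the self-concordance identity $\|\boldsymbol{H}_q^{-1}\boldsymbol{v}\|_{\Phi,\boldsymbol{x}_q,*}=\|\boldsymbol{v}\|/\delta$ yields $\mathbb{E}\bigl[\|\widetilde{\nabla}_q\|_{\Phi,\boldsymbol{x}_q,*}^2\bigr]=O(d^2L_1^2D^2/\delta^2)$.

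The regret analysis then follows the pattern of \thm{MLSM}, except that smoothing now introduces a bias, and the improvement over the prior $T^{5/6}$ bound comes from reading that bias at the function-value level rather than the gradient level. For each round, decompose
\[
    (1-1/e)f_t(\boldsymbol{x}^*)-f_t(\boldsymbol{x}_q)
    \le (1-1/e)(f_t-f_t^{\boldsymbol{H}_q})(\boldsymbol{x}^*)+\bigl[(1-1/e)f_t^{\boldsymbol{H}_q}(\boldsymbol{x}^*)-f_t^{\boldsymbol{H}_q}(\boldsymbol{x}_q)\bigr]+(f_t^{\boldsymbol{H}_q}-f_t)(\boldsymbol{x}_q);
\]
using $L_2$-smoothness together with $\mathbb{E}_{\boldsymbol{v}\sim\mathbb{B}_d}[\boldsymbol{H}_q\boldsymbol{v}]=\boldsymbol{0}$ (which kills the first-order Taylor term in the smoothing bias), each outer bracket is $O(L_2\delta^2D^2)$, summing to $O(TL_2\delta^2D^2)$ over the horizon. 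Smoothing preserves monotone DR-submodularity because second partials pass through the expectation, so \lem{auxiliary function} applies to $f_t^{\boldsymbol{H}_q}$ and the middle bracket reduces to a linear-gradient regret that \thm{AHR}, applied to RFTL on $\widetilde{\nabla}_q$, bounds by $O\bigl(L\eta Qd^2/\delta^2+L\nu\log T/\eta\bigr)$.

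Adding the $O(Q)=O(T/L)$ exploration cost yields a total regret of the form $O\bigl(T/L+\eta Td^2/\delta^2+L\nu\log T/\eta+TL_2\delta^2\bigr)$. Optimizing $\eta$ to balance the two \thm{AHR} terms, then $\delta$ to balance variance against the $\delta^2$ value bias, and finally $L\sim T^{1/4}$ to balance exploration against exploitation produces the claimed $\widetilde{O}(\nu d^{1/2}T^{3/4})$ bound. The principal obstacle is exactly this bias bookkeeping: with only $L_2$-smoothness in hand, bounding the smoothing bias at the gradient level gives only $O(L_2\delta)$ and would yield the inferior $T^{4/5}$ rate, so the proof must be arranged so that the smoothing bias always appears at the value level. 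Concretely, this requires controlling at the value level the residual discrepancy between the ``smoothing of the auxiliary'' $(\overline{F}_q)^{\boldsymbol{H}_q}$ (which $\widetilde{\nabla}_q$ naturally targets) and the ``auxiliary of the smoothed'' $(\overline{f}_q^{\boldsymbol{H}_q})_*$ (which \lem{auxiliary function} applied to $f_t^{\boldsymbol{H}_q}$ yields), and verifying this discrepancy is also $O(\delta^2)$ at the value level rather than $O(\delta)$ at the gradient level.
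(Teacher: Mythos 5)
Your proposal is correct and follows essentially the same route as the paper's proof of this theorem: the same block-RFTL scheme with a self-concordant barrier, the same $\delta$-shrunk ellipsoid estimator combined with the non-oblivious sampling of $z_q$ (you merely absorb $\delta$ into $\boldsymbol{H}_q$), the same unbiasedness and $O(d^2L_1^2D^2/\delta^2)$ dual-local-norm bounds, the same decomposition into a Minkowski-projection error, value-level $O(L_2\delta^2D^2)$ smoothing biases at $\boldsymbol{x}^*$ and at $\boldsymbol{x}_q$ (killed at first order by $\E_{\boldsymbol{v}\sim\mathbb{B}_d}[\boldsymbol{v}]=\boldsymbol{0}$), the \thm{AHR} linear-regret term, and the $O(T/L)$ exploration cost, balanced with the same parameter choices to give $O(\nu d^{1/2}T^{3/4}\log T)$. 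The one point where you are arguably more careful than the paper is your closing remark: the mismatch between the smoothed auxiliary $(\overline{F}_q)^{\delta\boldsymbol{H}_q}$ (which the estimator targets, with effective radius $z\delta$ at scale $z$) and the auxiliary function of $\overline{f}_q^{\delta\boldsymbol{H}_q}$ (fixed radius $\delta$) is silently identified in the paper's proof, whereas you correctly flag it as a residual discrepancy that must be verified to stay at the $O(\delta^2)$ value level.
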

}

\OnlyInFull{
\begin{algorithm}[t]
	\caption{ $\mathtt{BanditDRSM}(\eta,\delta,L,\Phi)$}
	\label{algo:DRSM}

	\textbf{Input}: Smoothing radius $\delta$, block size $L$, block number $Q=T/L$, learning rate $\eta$, potential function $\Phi$
	
	\begin{algorithmic}[1]
	   \STATE initiate $\boldsymbol{x}_1\in \mbox{int} (\mathcal{K})$ such that $\nabla \Phi(\boldsymbol{x}_1) = 0$ 
	   \FOR{$q=1,2,\ldots,Q$}
	        \STATE Draw $t_q\sim \mbox{Unif}\{(q-1)L+1,(q-1)L+2,\ldots,qL\}$
	        \FOR{$t=(q-1)L+1, (q-1)L+2,\ldots, qL$}
	            \IF{$t=t_q$}
	                \STATE sample $z_q$ from $\mathbf{Z}$ where $P(\mathbf{Z}<z) =\int_{0}^{z} \frac{e^{u-1}}{1-e^{-1}}\mathbb{I}\left[u\in [0,1]\right]d u$
	                \STATE $\boldsymbol{H}_q = \left(\nabla^2 \Phi (z_q\cdot \boldsymbol{x}_q)\right)^{-1/2}$
	                \STATE draw $\boldsymbol{v}_q \sim \mathbb{S}_{d-1}$
	                \STATE play $\boldsymbol{y}_t = z_q\cdot \boldsymbol{x}_q+\delta z_q\cdot\boldsymbol{H}_q\boldsymbol{v}_q$
	                \STATE $\widetilde{\nabla} \overline{F}_q(\boldsymbol{x}_q) \gets (1-1/e)\frac{d}{\delta z_q}f_{t_q}(\boldsymbol{y}_t)\boldsymbol{H}_q^{-1}\boldsymbol{v}_q $
	                \STATE $\boldsymbol{x}_{q+1} \gets \argmin\limits_{\boldsymbol{x}} \sum_{s=1}^q\langle -\eta \widetilde{\nabla}\overline{F}_s(\boldsymbol{x}_s),\boldsymbol{x}\rangle + \Phi(\boldsymbol{x})$
	            \ELSE
	                \STATE play $\boldsymbol{y}_t = \boldsymbol{x}_q$
	            \ENDIF
	        \ENDFOR
	   \ENDFOR
	\end{algorithmic}
\end{algorithm}

In this section we present our algorithm $\mathtt{BanditDRSM}$ for general bandit monotone DR-submodular maximization, the pseudocode is shown in \algo{DRSM}. $\mathtt{BanditDRSM}$ is very similar from $\mathtt{BanditMLSM}$, it also divides $T$ rounds into $Q$ equal size blocks. We use again $\overline{f}_q(\boldsymbol{x})$ and $\overline{F}_q(\boldsymbol{x})$ to denote the average function of $q$-th block and the auxiliary function of it, defined as \eq{average} and \eq{average nonoblivious}. $\mathtt{BanditDRSM}$ runs RFTL with self-concordant regularizer on vector sequence $\{\nabla\overline{F}_q(\boldsymbol{x}_q)\}_{q=1}^Q$. Here the difference compared with $\mathtt{BanditMLSM}$ is, we cannot find an unbiased estimator for $\nabla\overline{F}_q(\boldsymbol{x}_q)$. We use the ellipsoid estimator directly to estimate $ \nabla\overline{F}^{\delta\boldsymbol{H}_q}_q(\boldsymbol{x}_q)$, the gradient of the $\delta\boldsymbol{H}_q$-smoothed function, here $\boldsymbol{H}_q=(\nabla^2 \Phi(z_q\cdot \boldsymbol{x}_q))^{-1/2}$ as the same as $\mathtt{BanditMLSM}$, $\delta$ is a parameter to be determined. Specifically, in block $q$, we select an uniform random exploration round $t_q\in [(q-1)L+1,qL]\cap \mathbb{Z}$, a random direction $\boldsymbol{v}_q\in\mathbb{S}_{d-1}$, $z_q\sim Z$ where $\Pr(Z\leq z)=\int_0^z \frac{e^{u-1}}{1-e^{-1}}\mathbb{I}[u\in\{0,1\}]du$. In round $t_q$, we play $\boldsymbol{y}_{t_q}=z_q\cdot \boldsymbol{x}_q+\delta z_q\cdot \boldsymbol{H}_q^{-1}\boldsymbol{v}_q$ and feedback the gradient estimate as follow to RFTL,
\begin{align}\label{eq:general estimator}
    \widetilde{\nabla}\overline{F}(\boldsymbol{x}_q):= (1-1/e)d\cdot f_{t_q}(\boldsymbol{y}_{t_q})\boldsymbol{H}^{-1}_q\boldsymbol{v}_q.
\end{align}
We prove that $\widetilde{\nabla}\overline{F}(\boldsymbol{x}_q)$ is an unbiased gradient estimator for the $\delta\boldsymbol{H}_q$-smoothed function $\overline{F}_q^{\delta\boldsymbol{H}_q}(\boldsymbol{x}_q)$. Moreover, the dual local norm of the estimator can be bounded as $O(\frac{d^2}{\delta^2})$. To formalize the above arguments, we have the following lemma.
\begin{lemma}
    \label{lem:estimator}
    Let $\widetilde{\nabla}\overline{F}_q(\boldsymbol{x}_q)$ be defined as \eq{general estimator}. Assume $f_t$ for $t\in [(q-1)L+1,qL]$ is $L_1$-lipschitz, $f_t(\boldsymbol{0})=0$ and $\|\boldsymbol{x}+\delta \boldsymbol{H}_q \boldsymbol{v}\|\leq D$, then following holds,
    \begin{itemize}
        \item[(i)] $\E \left[\widetilde{\nabla}\overline{F}_q(\boldsymbol{x}_q)\mid \mathcal{H}_{q-1}\right]=\nabla \overline{F}_q^{\delta\boldsymbol{H}_q}(\boldsymbol{x}_q) $.
        \item[(ii)] $\|\widetilde{\nabla}\overline{F}_q(\boldsymbol{x}_q)\|_{\boldsymbol{x}_q,*}^2\leq \frac{(1-e)^2d^2 L_1^2 D^2}{\delta^2}$
    \end{itemize}
\end{lemma}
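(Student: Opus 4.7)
The plan is to establish (i) by a chain of conditional expectations that successively reduce the random exploration back to the smoothed auxiliary function, and then invoke the ellipsoid estimator of \lem{ellipsoid estimator}; for (ii), the plan is to expand the dual local norm and bound each piece using the self-concordant structure of $\boldsymbol{H}_q$ together with the Lipschitz assumption and the assumption $f_t(\boldsymbol{0})=0$.

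For (i), first observe that the exploration point factors multiplicatively as $\boldsymbol{y}_{t_q}=z_q\bigl(\boldsymbol{x}_q+\delta\boldsymbol{H}_q\boldsymbol{v}_q\bigr)$. Taking conditional expectation over $t_q$, which is uniform on the block, replaces $f_{t_q}$ by the block average $\overline{f}_q$ from \eq{average}. Next, taking expectation over $z_q$, whose density on $[0,1]$ is $\frac{e^{z-1}}{1-e^{-1}}$, yields
\[\int_0^1\frac{e^{z-1}}{1-e^{-1}}\cdot\frac{1-1/e}{z}\,\overline{f}_q\bigl(z(\boldsymbol{x}_q+\delta\boldsymbol{H}_q\boldsymbol{v}_q)\bigr)\,dz=\overline{F}_q(\boldsymbol{x}_q+\delta\boldsymbol{H}_q\boldsymbol{v}_q),\]
where the $(1-1/e)$ prefactor cancels the normalization of the density, leaving exactly the integrand $e^{z-1}/z$ of the auxiliary function defined in \eq{average nonoblivious}. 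Finally, taking expectation over $\boldsymbol{v}_q\sim\mathbb{S}_{d-1}$ gives $\frac{d}{\delta}\,\E_{\boldsymbol{v}}[\overline{F}_q(\boldsymbol{x}_q+\delta\boldsymbol{H}_q\boldsymbol{v})\boldsymbol{H}_q^{-1}\boldsymbol{v}]$, which equals $\nabla\overline{F}_q^{\delta\boldsymbol{H}_q}(\boldsymbol{x}_q)$ by \lem{ellipsoid estimator} instantiated with the matrix $\delta\boldsymbol{H}_q$.

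For (ii), expand the dual local norm:
\[\|\widetilde{\nabla}\overline{F}_q(\boldsymbol{x}_q)\|_{\boldsymbol{x}_q,*}^{2}=(1-1/e)^2\,\frac{d^2}{\delta^2 z_q^2}\,f_{t_q}(\boldsymbol{y}_{t_q})^2\,\|\boldsymbol{H}_q^{-1}\boldsymbol{v}_q\|_{\boldsymbol{x}_q,*}^{2}.\]
Since $\boldsymbol{H}_q=(\nabla^2\Phi(\boldsymbol{x}_q))^{-1/2}$, the matrix $\boldsymbol{H}_q^{-1}(\nabla^2\Phi(\boldsymbol{x}_q))^{-1}\boldsymbol{H}_q^{-1}$ reduces to the identity, so $\|\boldsymbol{H}_q^{-1}\boldsymbol{v}_q\|_{\boldsymbol{x}_q,*}^{2}=\|\boldsymbol{v}_q\|_2^{2}=1$. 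By $L_1$-Lipschitzness together with $f_{t_q}(\boldsymbol{0})=0$ and the assumption $\|\boldsymbol{x}_q+\delta\boldsymbol{H}_q\boldsymbol{v}_q\|\leq D$, we get $|f_{t_q}(\boldsymbol{y}_{t_q})|\leq L_1\|\boldsymbol{y}_{t_q}\|\leq L_1 z_q D$, whose square contributes $L_1^{2}z_q^{2}D^{2}$. The $z_q^{2}$ factor cancels the $1/z_q^{2}$ in the coefficient, leaving the claimed $(1-1/e)^2 d^{2} L_1^{2} D^{2}/\delta^{2}$.

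The main obstacle is the careful bookkeeping in (i): the density $\frac{e^{z-1}}{1-e^{-1}}$ of $z_q$ must combine with the $1/z_q$ factor in the estimator and the $(1-1/e)$ prefactor to exactly reconstruct the integrand $e^{z-1}/z$ appearing in \eq{average nonoblivious}, and this reconstruction relies on the multiplicative factoring $\boldsymbol{y}_{t_q}=z_q(\boldsymbol{x}_q+\delta\boldsymbol{H}_q\boldsymbol{v}_q)$, which in turn requires $\boldsymbol{H}_q$ to be independent of $z_q$ (so the definition $\boldsymbol{H}_q=(\nabla^2\Phi(\boldsymbol{x}_q))^{-1/2}$ is the intended reading). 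Once this identity is in place, the rest of the proof is a routine application of \lem{ellipsoid estimator} and the definition of the dual local norm.
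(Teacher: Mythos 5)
Your proof is correct and follows essentially the same route as the paper's: the same cancellation of the $z_q$-density against the $1/z_q$ factor to reconstruct the auxiliary-function integrand, the same application of the ellipsoid estimator with matrix $\delta\boldsymbol{H}_q$, and the same dual-local-norm computation using $\boldsymbol{H}_q^{-1}(\nabla^2\Phi(\boldsymbol{x}_q))^{-1}\boldsymbol{H}_q^{-1}=I$ together with $L_1$-Lipschitzness and $f_t(\boldsymbol{0})=0$. The only cosmetic difference is that you integrate over $t_q,z_q$ before invoking the ellipsoid lemma on $\overline{F}_q$, whereas the paper applies the lemma to $f_{t_q,z_q}$ first and then averages; both orderings are valid and land on $\nabla\overline{F}_q^{\delta\boldsymbol{H}_q}(\boldsymbol{x}_q)$.
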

Intuitively, we can control the regret of $\{\boldsymbol{x}_q\}_{q=1}^Q$ w.r.t.~the linear function sequence $\{\langle \cdot ,\nabla \overline{F}_q^{\delta\boldsymbol{H}_q}(\boldsymbol{x}_q) \rangle \}_{q=1}^Q$ by using \thm{AHR}. We further prove that $\overline{f}_q^{\delta\boldsymbol{H}_q}$ is also DR-submodular, and $\overline{F}_q^{\delta\boldsymbol{H}_q}(\boldsymbol{x}_q)$ is the auxiliary function of $\overline{f}_q^{\delta\boldsymbol{H}_q}$. This allows us to control the $(1-1/e)$-regret of $\{\boldsymbol{x}_q\}$ w.r.t.~$\{\overline{f}_q^{\delta \boldsymbol{H}_q}\}$ by using \lem{auxiliary function}. A key observation here is $\|\overline{f}_q^{\delta \boldsymbol{H}_q}-\overline{f}_q\|_{\infty}\leq O(\delta^2)$ assuming the online functions are smooth, which means we can bound the $(1-1/e)$-regret of $\{\boldsymbol{x}_q\}$ w.r.t.~$\{\overline{f}_q\}$ in term of the $(1-1/e)$-regret w.r.t.~$\{\overline{f}_q^{\delta \boldsymbol{H}_q}\}$ with an extra $O(\delta^2)$ additive term. Previous works \cite{zhang2019online,niazadeh2021online} use the FKM estimator, where the sample sphere is fixed(which can be seen as a special case of the ellipsoid estimator when $\boldsymbol{H}_q=I$), to prevent the sample action jump out $\mathcal{K}$, they must run their algorithm on a smaller interior $\mathcal{K}_{\delta}$ which is $\delta$-far from $\partial \mathcal{K}$. So this only guarantees the regret competing with the point in $\mathcal{K}_{\delta}$, this adds a $O(\delta)$ term to the overall regret, which is bigger than $O(\delta^2)$ since the $\delta$ is set to $o(1)$ latter.

With this improved gradient estimator and non-oblivious technique, we prove a $\widetilde{O}(T^{3/4})$ $(1-1/e)$-regret of $\mathtt{BanditDRSM}$.

\begin{theorem}\label{thm:DRSM}
    Set $\eta = D^{-2}d^{-1}T^{-1/2}$, $\delta = d^{1/4}T^{-1/8}$, $L=d^{-1/2}T^{1/4}$, $Q=T/L=d^{1/2}T^{3/4}$ in \algo{DRSM}. If $\Phi$ is a $\nu$-self concordant function of $\mathcal{K}$, then the expected $(1-1/e)$-regret of \algo{DRSM} can be bounded as
    \begin{align*}
        \mathcal{R}_{1-1/e}(T)&\leq O(\nu d^{1/2}T^{3/4}\log(T))
    \end{align*}
\end{theorem}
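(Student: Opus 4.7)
The plan is to mirror the analysis of \thm{MLSM} but work with the \emph{smoothed} auxiliary function instead of the exact one, and pay the price as a controllable additive term. Let $\boldsymbol{x}^*$ be an arbitrary competitor in $\mathcal{K}$. First I would decompose the $(1-1/e)$-regret block by block: in each block $q$ the algorithm plays $\boldsymbol{x}_q$ in $L-1$ rounds and an exploration point $\boldsymbol{y}_{t_q}=z_q\boldsymbol{x}_q+\delta z_q\boldsymbol{H}_q\boldsymbol{v}_q$ in one round. Since $|f_t|\leq M$, the exploration rounds contribute at most $O(MQ)$ to the total regret. The remaining ``exploitation'' contribution equals $\sum_{q=1}^{Q} L\bigl((1-1/e)\overline{f}_q(\boldsymbol{x}^*)-\overline{f}_q(\boldsymbol{x}_q)\bigr)$, and this is what the RFTL machinery has to control. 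As in the multi-linear case, I also replace $\boldsymbol{x}^*$ by $(1-1/T)\boldsymbol{x}^*$ to keep $\Phi(\boldsymbol{x}^*)-\Phi(\boldsymbol{x}_1)=O(\nu\log T)$ (self-concordance), at the additive cost of $O(L_1 D)$ from the Lipschitz assumption.

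Next I would pass from $\overline{f}_q$ to its smoothed version $\overline{f}_q^{\delta\boldsymbol{H}_q}$. Because $\overline{f}_q$ is $L_2$-smooth and the Ellipsoid smoothing averages against the uniform distribution on $\mathbb{B}_d$ (whose first moment is zero), a second-order Taylor argument gives $\|\overline{f}_q^{\delta\boldsymbol{H}_q}-\overline{f}_q\|_\infty=O(\delta^2)$ uniformly in $q$; this is exactly where the improved exponent in $T$ over \cite{zhang2019online,niazadeh2021online} comes from, since they incur an $O(\delta)$ error from working on the shrunken feasible set. Applying \lem{auxiliary function} to the smoothed DR-submodular function yields
\begin{equation*}
(1-1/e)\overline{f}_q^{\delta\boldsymbol{H}_q}(\boldsymbol{x}^*)-\overline{f}_q^{\delta\boldsymbol{H}_q}(\boldsymbol{x}_q)\leq \bigl\langle \boldsymbol{x}^*-\boldsymbol{x}_q,\nabla\overline{F}_q^{\delta\boldsymbol{H}_q}(\boldsymbol{x}_q)\bigr\rangle,
\end{equation*}
so the exploitation regret is bounded by $L\sum_q \langle \boldsymbol{x}^*-\boldsymbol{x}_q,\nabla\overline{F}_q^{\delta\boldsymbol{H}_q}(\boldsymbol{x}_q)\rangle+O(\delta^2 T)$.

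Now I would invoke \thm{AHR} with the unbiased estimator $\widetilde{\nabla}\overline{F}_q(\boldsymbol{x}_q)$ from \lem{estimator}(i): this gives
\begin{equation*}
\sum_{q=1}^{Q}\mathbb{E}\bigl\langle \boldsymbol{x}^*-\boldsymbol{x}_q,\nabla\overline{F}_q^{\delta\boldsymbol{H}_q}(\boldsymbol{x}_q)\bigr\rangle \leq \eta\sum_{q=1}^Q\mathbb{E}\|\widetilde{\nabla}\overline{F}_q(\boldsymbol{x}_q)\|^2_{\boldsymbol{x}_q,*}+\frac{O(\nu\log T)}{\eta},
\end{equation*}
and \lem{estimator}(ii) bounds each squared dual norm by $O(d^2L_1^2D^2/\delta^2)$. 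Combining all terms, the total regret is bounded (up to constants and $L_1,D$) by
\begin{equation*}
L\eta Q\cdot \frac{d^2}{\delta^2}+\frac{L\nu\log T}{\eta}+\delta^2 T+MQ.
\end{equation*}
Plugging in $\eta=D^{-2}d^{-1}T^{-1/2}$, $\delta=d^{1/4}T^{-1/8}$, $L=d^{-1/2}T^{1/4}$, $Q=d^{1/2}T^{3/4}$, each term evaluates to $O(\nu d^{1/2}T^{3/4}\log T)$, matching the claimed bound.

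The main obstacle is the step that seems innocuous in the write-up: verifying that $\overline{f}_q^{\delta\boldsymbol{H}_q}$ is still monotone DR-submodular and that $\overline{F}_q^{\delta\boldsymbol{H}_q}$ serves as its auxiliary function in the sense required by \lem{auxiliary function}. The subtlety is that the Ellipsoid smoothing with matrix $\delta\boldsymbol{H}_q$ interacts with the integral $\int_0^1 \frac{e^{z-1}}{z}(\cdot)dz$ via the substitution $\boldsymbol{x}\mapsto z\boldsymbol{x}$, so the effective smoothing radius at level $z$ becomes $z\delta\boldsymbol{H}_q$---this is precisely why the exploration point scales the perturbation as $\delta z_q\boldsymbol{H}_q\boldsymbol{v}_q$ rather than $\delta\boldsymbol{H}_q\boldsymbol{v}_q$, and one must check the $\mathbb{E}_{\boldsymbol{v}}$ and $\mathbb{E}_{z_q}$ combine so that the estimator's expectation coincides with $\nabla\overline{F}_q^{\delta\boldsymbol{H}_q}(\boldsymbol{x}_q)$. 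DR-submodularity of the smoothed function follows because the Hessian transforms as $\boldsymbol{H}_q^\top\nabla^2\overline{f}_q\boldsymbol{H}_q$ under the averaging, and with some care one checks that the off-diagonal sign condition is preserved; monotonicity is straightforward. The remaining bookkeeping is routine: the Dikin ellipsoid property of $\Phi$ ensures $\boldsymbol{y}_{t_q}\in\mathcal{K}$ since $\delta z_q\leq 1$, and the integrability of $e^{z-1}/z$ on $[0,1]$ (used in the distribution of $z_q$) keeps all expectations finite.
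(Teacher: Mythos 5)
Your proposal is correct and follows essentially the same route as the paper's proof of \thm{DRSM}: block decomposition with an $O(MQ)$ exploration cost, \thm{AHR} applied to the unbiased estimator of $\nabla\overline{F}_q^{\delta\boldsymbol{H}_q}$ with the $O(d^2L_1^2D^2/\delta^2)$ dual-norm bound of \lem{estimator}, \lem{auxiliary function} applied to the smoothed average $\overline{f}_q^{\delta\boldsymbol{H}_q}$ (whose monotone DR-submodularity is exactly what the paper checks in \lem{submodular smooth}), and the $O(L_2\delta^2D^2)$ comparison between $\overline{f}_q^{\delta\boldsymbol{H}_q}$ and $\overline{f}_q$, with identical parameter accounting. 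The one imprecision is the competitor shift: scaling $\boldsymbol{x}^*$ to $(1-1/T)\boldsymbol{x}^*$ toward the origin need not control $\Phi$, since $\boldsymbol{0}$ is only assumed to lie in $\mathcal{K}$ (possibly on its boundary); the paper instead projects $\boldsymbol{x}^*$ onto the Minkowski set $\mathcal{K}_{\gamma,\boldsymbol{x}_1}$ with pole $\boldsymbol{x}_1=\argmin_{\mathcal{K}}\Phi$ and $\gamma=1/T$, which is what \lem{potential} requires and incurs the same $O(L_1D)$ additive cost.
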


The idea of using a self-concordant regularizer RFTL on smooth online functions is motivated by \cite{saha2011improved}. Where the authors studied the bandit convex optimization problem, and they find that RFTL with a self-concordant regularizer works well when the convex functions are smooth. We find this idea also works here in the bandit DR-submodular maximization problem, while the previous works all assume the smoothness of online functions, they do not make good use of this assumption.}

\section{A Continuous Approach for Submodular Full-Bandit}\label{sect:reduction}
In this section, we show reductions from two selected submodular full-bandit problems to the bandit multi-linear DR-submodular maximization problem. All proofs in this section are deferred to \append{app} due to space limitations.
\subsection{Reduction Framework}
A natural reduction for our task is to consider the multi-linear extension of the submodular function. That is, we consider the multi-linear extension of each submodular set function, running the $\mathtt{BanditMLSM}$ on the function sequence of the multi-linear extensions. If we could estimate the function value of the multi-linear extension unbiasedly by using only one query to the corresponding discrete submodular function, then we would complete the reduction successfully. This idea is already considered in \cite{zhang2019online}. 
However, it does not work in the full-bandit setting here. The main reason is that the definition of multi-linear extension uses information of the values of the submodular set function on all subsets, including those not satisfying the constraint. 
This makes it impossible to find an unbiased estimator for the multi-linear extension under bandit feedback setting. 
To address this problem, \citet{zhang2019online} consider 
\wei{When the citation is the subject, use \citet{zhang2019online} instead of \cite{zhang2019online} so that the authors are the subject. Please check this globally.}
\zongqi{I've already checked the citations.}
a relaxed responsive bandit model, where they allow the algorithm to query the function value of an infeasible action and gain zero reward. Through this relaxation, they prove a $O(T^{8/9})$ $(1-1/e)$-regret upper bound for bandit submodular maximization with a matroid constraint. We do not make this relaxation and consider the original full-bandit model, that is, the algorithm must play a feasible action each round.

Assume we want to transform a $(\mathcal{S},\mathcal{G})$-bandit to a bandit multi-linear DR-submodular maximization instance, where $\mathcal{S}$ is a finite set and we use $g_t\in\mathcal{G}$ to denote the online reward function. The central component of our reduction framework is a mapping from a product of standard simplexes, denoted as $\mathcal{K}$, to the set of all distributions over $\mathcal{S}$, denoted as $\Delta(\mathcal{S})$. The $d$-dimensional standard simplex is a set $\{(x_1,x_2,\ldots,x_d) \mid x_1+\cdots+x_d\leq 1, x_i\geq 0, \forall i\}$.

\begin{algorithm}[t]
        \caption{$\mathtt{MLSMWrapper}(\eta,L,\Phi,\mbox{EXT})$}
        \label{algo:MLSMW}
    
        \textbf{Input}: learning rate $\eta$, block size $L$, self-concordant barrier $\Phi$, an extension mapping $\mbox{EXT}$
        \begin{algorithmic}[1]
           \FOR{$t=1,2,\ldots,T$}
                \STATE Get $\boldsymbol{y}_t$ from $\mathtt{BanditMLSM4PS}(\eta,L,\Phi)$
                \STATE Sample $S_t$ from distribution $\mbox{EXT}(\boldsymbol{y}_t)$
                \STATE Play $S_t$ and feed $g_t(S_t)$ back to $\mathtt{BanditMLSM4PS}(\eta,L,\Phi)$
           \ENDFOR
        \end{algorithmic}
\end{algorithm}

We denote the extension mapping as $\mbox{EXT}: \mathcal{K}\rightarrow \Delta(\mathcal{S})$. The dimension $d$ of the set $\mathcal{K}$ varies with different $\mathcal{S}$ and $\mathcal{G}$.
The extension mapping naturally defines an extension of any function $g\in G$, that is,
$f(\boldsymbol{x})=\E_{S\in \mbox{EXT}(\boldsymbol{x})}[g(S)]$.
This extension has a good property, if we sample an element $S\in\mathcal{S}$ according to the distribution $\mbox{EXT}(\boldsymbol{x})$, then $g(S)$ is an unbiased estimator of $f(\boldsymbol{x})$. The idea is to run $\mathtt{Bandit MLSM}$ on such extensions $\{f_t\}_{t=1}^T$ of the online functions sequence $\{g_t\}_{t=1}^T$. When we received an action $\boldsymbol{y}_t$ from $\mathtt{Bandit MLSM}$, we sample an action $S_t\in \mathcal{S}$ from $\mbox{EXT}(\boldsymbol{y}_t)$, and feed $g_t(S_t)$ back to $\mathtt{Bandit MLSM}$. However, if we replace the $f_t(\boldsymbol{y}_{t_q})$ with $g_t(S_t)$ in \eq{lq}, the estimator $\widetilde{l}(\boldsymbol{H}_q\boldsymbol{v}_q)$ can be unbounded when $z_q$ is very small which makes the regret uncontrollable. Fortunately, when $\mathcal{K}$ is a product of simplexes, we can slightly modify $\mathtt{BanditMLSM}$ to address this problem. We denote the modified algorithm as $\mathtt{BanditMLSM4PS}$. In this algorithm, we use another estimator to substitute \eq{lq} when $z_q< \frac{1}{2}$. That is, we draw $\boldsymbol{u}_q \in \{\boldsymbol{e}_1,\ldots,\boldsymbol{e}_d\}$ uniformly at random. Then we let $\boldsymbol{y}_{t_q}= z_q\boldsymbol{x}_q$ or $\boldsymbol{y}_{t_q}= z_q\boldsymbol{x}_q+\frac{1}{2}\boldsymbol{u}_q$ with equal probability. The estimator is set to be 
$
    \widetilde{l}_q(\boldsymbol{H}_q\boldsymbol{v}_q):=\left\{
        \begin{aligned}
            &-4(1-1/e)d\langle \boldsymbol{H}_q\boldsymbol{v}_q,\boldsymbol{u}_q\rangle f_{t_q}(\boldsymbol{y}_{t_q}) \quad \mbox{if }\boldsymbol{y}_{t_q}=z_q\boldsymbol{x}_q,\\
            &4(1-1/e)d\langle \boldsymbol{H}_q\boldsymbol{v}_q,\boldsymbol{u}_q\rangle f_{t_q}(\boldsymbol{y}_{t_q}) \quad \mbox{if } \boldsymbol{y}_{t_q}=z_q\boldsymbol{x}_q+\frac{1}{2}\boldsymbol{u}_q.
        \end{aligned}
        \right.
$
To show the estimator is feasible, we need to prove that $z_q\boldsymbol{x}_q+\frac{1}{2}\boldsymbol{u}_q\in \mathcal{K}$ such that the value $f_{t_q}(z_q\boldsymbol{x}_q+\frac{1}{2}\boldsymbol{u}_q)$ can be observed in bandit feedback model. Consider the simplex to which the basis vector $\boldsymbol{u}_q$ belongs, without loss of generality, we assume that $\boldsymbol{u}_q=\boldsymbol{e}_1$ and $\boldsymbol{e}_1,\boldsymbol{e}_2,\ldots,\boldsymbol{e}_{d_1}$ form the basis of the simplex. Then $x_1+\ldots+x_{d_1}\leq 1$, which means $z_q\sum_{i=1}^{d_1}x_i<\frac{1}{2}$, therefore $\frac{1}{2}+z_q\sum_{i=1}^{d_1}x_i\leq 1$, $z_q\boldsymbol{x}_q+\frac{1}{2}\boldsymbol{u}_q\in \mathcal{K}$.

The reduction algorithm is shown in \algo{MLSMW} and the detailed pseudo-code of $\mathtt{BanditMLSM4PS}$ can be found in \algo{MLSM4PS} of \append{app}. For product simplexes, we give an $O(d)$-self-concordant barrier in \append{self concordant}. To obtain the regret guarantee, we need to make sure that the extension induced by the extension mapping satisfies the assumption $\mathtt{BanditMLSM4PS}$ requires. Formally, we prove the following lemma.

\begin{lemma}\label{lem:reduction}
    For a finite set $\mathcal{S}$, and a function family $\mathcal{G}\subseteq \mathcal{S}^{\mathbb{R}_+}$, where $\mathcal{S}^{\mathbb{R}_+}$ is the set of all functions that map element in $\mathcal{S}$ to $\mathbb{R}^+$. If there is an extension mapping $\mbox{EXT}:\mathcal{K}\rightarrow \Delta(\mathcal{S})$  satisfying following conditions:
    \begin{enumerate}
        \item $\mathcal{K}\subseteq \mathbb{R}^d$ is a product of standard simplexes.
        \item For any $g\in \mathcal{G}$, $f(\boldsymbol{x})=\E_{S\in \mbox{EXT}(\boldsymbol{x})}[g(S)]$ is a multi-linear, monotone, DR-submodular function, and $f$ is $L_1$-lipschitz continuous, $f(\boldsymbol{0})=0$.
        \item For any $S\in \mathcal{S}$, there exist $\boldsymbol{x}\in\mathcal{K}$ such that $\mbox{EXT}(\boldsymbol{x}) = \boldsymbol{1}_{s}$. Where $\boldsymbol{1}_{S}$ assign probability $1$ to $S$ and $0$ to other elements of $\mathcal{S}$.
    \end{enumerate}
     then the algorithm $\mathtt{MLSMWrapper}$ attains expected 
     $(1-1/e)$-regret \[\mathcal{R}_{1-1/e}(T)\leq O\left(d^{5/3}T^{2/3}\log (T)\right)\] on $(\mathcal{S},\mathcal{G})$-bandit.
\end{lemma}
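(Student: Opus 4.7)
The plan is to lift the discrete submodular bandit to a continuous bandit multi-linear DR-submodular maximization on $\mathcal{K}$, run $\mathtt{BanditMLSM4PS}$ on the extensions, and translate the continuous $(1-1/e)$-regret back. Define $f_t(\boldsymbol{x}) := \mathbb{E}_{S\sim\mathrm{EXT}(\boldsymbol{x})}[g_t(S)]$; by condition~(2) each $f_t$ is monotone, multi-linear, DR-submodular, $L_1$-Lipschitz, with $f_t(\boldsymbol{0})=0$, and the sequence $\{f_t\}$ is a valid $(\mathcal{K},\mathcal{F}_{MDS})$-instance. The central observation is $\mathbb{E}[g_t(S_t)\mid \boldsymbol{y}_t]=f_t(\boldsymbol{y}_t)$ whenever $S_t\sim\mathrm{EXT}(\boldsymbol{y}_t)$; hence feeding $g_t(S_t)$ in place of a direct query to $f_t(\boldsymbol{y}_t)$ inside the estimator $\widetilde{l}_q(\boldsymbol{H}_q\boldsymbol{v}_q)$ preserves its conditional expectation.

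Next I would reprove the analogues of \lem{multi-linear estimator} and \lem{linear estimator} for the \emph{modified} estimator that $\mathtt{BanditMLSM4PS}$ uses when $z_q<1/2$. Unbiasedness follows from the multi-linearity of $f_{t_q}$: the finite difference $f_{t_q}(z_q\boldsymbol{x}_q+\tfrac{1}{2}\boldsymbol{u}_q)-f_{t_q}(z_q\boldsymbol{x}_q)$ equals $\tfrac{1}{2}\langle \nabla f_{t_q}(z_q\boldsymbol{x}_q),\boldsymbol{u}_q\rangle$ exactly, and averaging over the uniform choice of $\boldsymbol{u}_q\in\{\boldsymbol{e}_1,\dots,\boldsymbol{e}_d\}$ together with the density of $z_q$ reproduces $\nabla\overline{F}_q(\boldsymbol{x}_q)$. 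For the variance, using $\boldsymbol{H}_q=(\nabla^2\Phi(\boldsymbol{x}_q))^{-1/2}$ gives $\|\boldsymbol{H}_q^{-1}\boldsymbol{v}_q\|_{\Phi,\boldsymbol{x}_q,*}^2=1$, so the squared dual local norm of the modified estimator is $O(d^2\langle \boldsymbol{H}_q\boldsymbol{v}_q,\boldsymbol{u}_q\rangle^2 f_{t_q}(\boldsymbol{y}_{t_q})^2)$; averaging over $\boldsymbol{u}_q$ yields $\mathbb{E}_{\boldsymbol{u}_q}\langle \boldsymbol{H}_q\boldsymbol{v}_q,\boldsymbol{u}_q\rangle^2=\tfrac{1}{d}\|\boldsymbol{H}_q\boldsymbol{v}_q\|^2\le \tfrac{1}{d}D^2$ (the Dikin ellipsoid is contained in $\mathcal{K}\subseteq D\mathbb{B}_d$), and combined with $|f_{t_q}|\le L_1 D$ this gives a bound independent of $1/z_q$. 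Feasibility of $z_q\boldsymbol{x}_q+\tfrac12\boldsymbol{u}_q$ was established in the simplex-coordinate argument preceding the lemma, so the estimator is well defined.

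With these two properties in hand I follow the blueprint of \thm{MLSM}: apply \thm{AHR} to $\{\widetilde{\nabla}\overline{F}_q(\boldsymbol{x}_q)\}_{q=1}^Q$ with the $\nu$-self-concordant barrier $\Phi$ of $\mathcal{K}$, use \lem{auxiliary function} to convert the linear-regret bound against $\nabla\overline{F}_q(\boldsymbol{x}_q)$ into $(1-1/e)$-regret against $\overline{f}_q$, and then transfer block-average regret to per-round regret by the uniform sampling of $t_q$, absorbing an additive $O(L\cdot L_1 D)$ cost per block for the exploration round. Since $\mathcal{K}$ is a product of standard simplexes of total dimension $d$, \append{self concordant} supplies an $O(d)$-self-concordant barrier, so $\nu=O(d)$. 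Tuning $L$ and $\eta$ as in \thm{MLSM} balances the three contributions and yields the advertised $O(d^{5/3}T^{2/3}\log T)$ bound; the improvement over $\nu d^{4/3}$ comes precisely from the smaller second-moment bound of the modified estimator compared to \lem{linear estimator}.

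Finally, to convert the continuous regret into discrete regret, fix any $S^*\in\arg\max_{S\in\mathcal{S}}\sum_t g_t(S)$; condition~(3) provides $\boldsymbol{x}^*\in\mathcal{K}$ with $\mathrm{EXT}(\boldsymbol{x}^*)=\boldsymbol{1}_{S^*}$, so $f_t(\boldsymbol{x}^*)=g_t(S^*)$ for every $t$. Together with $\mathbb{E}[g_t(S_t)]=\mathbb{E}[f_t(\boldsymbol{y}_t)]$, the discrete $(1-1/e)$-regret of $\mathtt{MLSMWrapper}$ equals the continuous $(1-1/e)$-regret of $\mathtt{BanditMLSM4PS}$ on $\{f_t\}$, completing the proof. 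I expect the main technical obstacle to be the piecewise estimator: one must verify unbiasedness and the dimension-dependent variance bound uniformly across the two cases $z_q<1/2$ and $z_q\ge 1/2$, relying on multi-linearity in the first case and on the original estimator in the second, and the bookkeeping of the extra factor of $\langle \boldsymbol{H}_q\boldsymbol{v}_q,\boldsymbol{u}_q\rangle$ in the modified branch is where the $d$-dependence that drives the $d^{5/3}$ exponent is actually paid.
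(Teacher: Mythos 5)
Your overall architecture is exactly the paper's: lift $\{g_t\}$ to the extensions $\{f_t\}$, observe that $g_t(S_t)$ is an unbiased surrogate for $f_t(\boldsymbol{y}_t)$ inside the estimator, re-verify unbiasedness of the piecewise estimator (using exact multi-linearity of the finite difference in the $z_q<\tfrac12$ branch), bound its dual local norm, invoke \thm{AHR} and \lem{auxiliary function}, use the $O(d)$-self-concordant barrier of the product of simplexes from \append{self concordant}, fix $\boldsymbol{x}^*$ via condition~(3), and re-tune $\eta,L,Q$. All of that matches the paper's proof.

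The one substantive error is your variance accounting and the conclusion you draw from it. You claim the modified estimator's squared dual local norm is $O(d^2\langle\boldsymbol{H}_q\boldsymbol{v}_q,\boldsymbol{u}_q\rangle^2 f_{t_q}^2)$ and that averaging $\mathbb{E}_{\boldsymbol{u}_q}\langle\boldsymbol{H}_q\boldsymbol{v}_q,\boldsymbol{u}_q\rangle^2=\tfrac1d\|\boldsymbol{H}_q\boldsymbol{v}_q\|^2$ buys a factor of $d$, and you attribute the final $d^{5/3}$ exponent to this ``smaller second moment.'' This does not hold: first, the inner-product factor appears only in the $z_q<\tfrac12$ branch; when $z_q\ge\tfrac12$ the estimator is $\pm 2(1-1/e)\tfrac{d}{z_q}g_{t_q}(S_{t_q})$ with no such factor, and that branch already contributes $\Theta(d^4M^2)$ to $\E\|\widetilde{\nabla}\overline{F}'_q\|^2_{\Phi,\boldsymbol{x}_q,*}$ (the paper's bound is $16(1-1/e)^2d^4M^2$, the same $d^4$ order as \lem{linear estimator}; the paper controls the $z_q<\tfrac12$ branch simply via $|\langle\boldsymbol{H}_q\boldsymbol{v}_q,\boldsymbol{u}_q\rangle|\le D_\infty=1$ for a product of simplexes). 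Second, the $d^{5/3}$ does not come from an improved second moment at all: it comes from re-balancing $\eta,L,Q$ against the barrier term $\nu L\log T/\eta$ with $\nu=O(d)$, which turns the naive $\nu d^{4/3}=d^{7/3}$ into $d^{5/3}$ while the second moment stays $O(d^4M^2)$. If you literally carried your $\tfrac1d$-savings through both branches you would (incorrectly) land at $d^{4/3}$; if you noticed the $z_q\ge\tfrac12$ branch blocks the savings, your stated mechanism for the exponent evaporates and you must instead re-derive the tuning $\eta=d^{-7/3}T^{-1/3}$, $L=d^{-5/3}T^{1/3}$ explicitly. Relatedly, bounding $|f_{t_q}|\le L_1D$ is dangerous here, since for these extensions $L_1$ itself scales like $M\sqrt{d}$; you should use $|g_{t_q}(S_{t_q})|\le M$ directly as the paper does, or the hidden $d$-dependence corrupts the final exponent.
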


\subsection{Bandit Monotone Submodular Maximization with Partition Matroid Constraint}
\label{sect:BMSMPM}
We consider a $(\mathcal{S}_{PM},\mathcal{G}_{MS})$-bandit this section, here $\mathcal{S}_{PM}$ is a partition matroid, and $\mathcal{G}_{MS}$ is the family of monotone submodular set function. We assume the functions in $\mathcal{G}_{MS}$ take value $0$ on the empty set.
\begin{definition}[Partition Matroid]
    Let $G$ be a finite ground set. A set system $\mathcal{S}\subseteq 2^G$ is called a partition matroid if there exist $K>0$ and positive integers $r_1,r_2,\ldots,r_K$ such that $G$ can be partitioned into $K$ subsets $G = \bigcup_{k=1}^K G_k$, and $\mathcal{S} = \{A \mid A\in 2^G \mbox{ and } |A\cap G_k|\leq r_k \mbox{ $\forall\ k $}\}$.
\end{definition}
By \lem{reduction}, all we need is to find an appropriate extension mapping.  Let $\Delta_{d}$ be a $d$-dimensional standard simplex, Let $\mathcal{K} = \prod_{k=1}^K \left(\prod_{i=1}^{r_k} \Delta_{|G_k|}^{k,i}\right)$ be the product of standard simplexes.
Here $\Delta_{|G_k|}^{k,i}$ is a $|G_k|$-dimensional standard simplex and $(k,i)$ is the index of this simplex. Next, we construct an extension mapping $\mbox{EXT}_{PM}:\mathcal{K}\rightarrow\mathcal{S}_{PM}$.

For $\boldsymbol{x}\in\mathcal{K}$, write $\boldsymbol{x}=(x_{k,i,s})_{(k,i,s)\in \Lambda}$, $\Lambda=\{(k,i,s)\mid 1\leq k\leq K, 1\leq i\leq r_k, s\in G_k, k,i\in \mathbb{N}\}$ is the index set. $x_{k,i,s}$ means the coordinate of the simplex $\Delta_{|G_k|}^{k,i}$, and
\wei{missing a reference!}
\zongqi{Revised.}
$\boldsymbol{x}\in \mathbb{R}_+^{\sum_{k=1}^K r_k|G_k|}$ satisfies $\sum_{s\in G_k} x_{k,i,s}\leq 1, \ \forall k,i$. We see the point in the standard simplex $\Delta^{k,i}_{|G_k|}$ as a probability distribution over $G_k\cup \{\circ\}$ where $\circ\notin G_k$ is an extra element which means no element in $G_k$ is chosen. We sample elements according to the coordinate of each simplex independently, then $\boldsymbol{x}$ can be seen as a probability distribution over the set $\Omega := \prod_{k=1}^K \left(G_k\cup \{\circ\}\right)^{r_k}$, we use $\mbox{pre-EXT}_{PM}(\boldsymbol{x})$ to denote this distribution on $\Omega$. 
We now define a mapping $\rho:\Omega \longrightarrow \mathcal{S}$ as follows. 
For $\omega\in\Omega$, assume $\omega$ can be represented as $\omega = (\omega_{k,i})_{(k,i)\in \Gamma}$, where $\omega_{k,i} \in G_k\cup \{\circ\}$ 
\wei{I saw both $\{o\}$ and $\{\circ\}$. Please be consistent.}\zongqi{Revised.}
and $\Gamma=\{(k,i)\mid 1\leq k\leq K, 1\leq i\leq r_k, k,i\in\mathbb{N}\}$ is the index set. Then $\rho(\omega)= \{\omega_{k,i}\mid (k,i)\in\Gamma\}\backslash \{\circ\}$.

It's easy to check $\rho(\omega)\in \mathcal{S}$. Thus, for $\boldsymbol{x}$, we first sample an $\omega\sim \mbox{pre-EXT}_{PM}(\boldsymbol{x})$, then map the sample to $\rho(\omega)\in \mathcal{S}$. This process defines a distribution over $\mathcal{S}$. We let this distribution be $\mbox{EXT}_{PM}(\boldsymbol{x})$.
\begin{lemma}\label{lem:EXTPM}
    For $\mathcal{G}_{MS}$, the extension mapping $\mbox{EXT}_{PM}:\mathcal{K}\rightarrow \Delta(\mathcal{S}_{PM})$ satisfies the conditions in \lem{reduction}. Moreover, $\mathcal{K}$ is in a $\sum_{k=1}^K r_k $ dimensional real vector space. For any $g\in \mathcal{G}_{MS}$, the continuous extension $f(\boldsymbol{x})=\E_{S\in \mbox{EXT}_{PM}(\boldsymbol{x})}[g(S)]$ is $M\sqrt{\sum_{k=1}^K r_k|G_k|}$-lipschitz. 
\end{lemma}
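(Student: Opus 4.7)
The plan is to verify the three conditions of \lem{reduction} separately, then derive the Lipschitz constant. Condition 1 is immediate from the construction of $\mathcal{K}$ as $\prod_{k,i}\Delta^{k,i}_{|G_k|}$. For condition 3, given $S\in\mathcal{S}_{PM}$ with $|S\cap G_k|\le r_k$, I would enumerate $S\cap G_k=\{s_{k,1},\ldots,s_{k,|S\cap G_k|}\}$ arbitrarily and set $x_{k,i,s_{k,i}}=1$ for $i\le |S\cap G_k|$ with every other coordinate of slot $(k,i)$ zero; the remaining slots are left at $\boldsymbol{0}$ so $\omega_{k,i}=\circ$ almost surely. Then $\rho(\omega)=S$ with probability $1$, so $\mbox{EXT}_{PM}(\boldsymbol{x})=\boldsymbol{1}_S$.

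For condition 2, I would introduce the slack coordinate $x_{k,i,\circ}:=1-\sum_{s\in G_k}x_{k,i,s}$ so that slot $(k,i)$ is an independent categorical draw over $G_k\cup\{\circ\}$, and write the extension explicitly as
\[f(\boldsymbol{x})=\sum_{\omega\in\Omega}\prod_{(k,i)\in\Gamma}x_{k,i,\omega_{k,i}}\cdot g(\rho(\omega)).\]
Since each slot $(k,i)$ contributes exactly one variable $x_{k,i,\omega_{k,i}}$ per monomial, after expanding each $x_{k,i,\circ}$ into a linear combination the resulting polynomial still has degree at most $1$ in every variable, giving multi-linearity. Evaluating at the origin, $x_{k,i,\circ}=1$ and all other factors are $0$, so $\rho(\omega)=\emptyset$ deterministically and $f(\boldsymbol{0})=g(\emptyset)=0$.

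Monotonicity and DR-submodularity follow by differentiating. Isolating slot $(k,i)$ and letting $\omega'$ range over the remaining slots with joint distribution $P(\omega')$,
\[\frac{\partial f}{\partial x_{k,i,s}}=\sum_{\omega'}P(\omega')\bigl[g(\rho(\omega')\cup\{s\})-g(\rho(\omega'))\bigr]\ge 0\]
by monotonicity of $g$ (the marginal is $0$ whenever $s\in\rho(\omega')$, which is harmless). For the cross partial, if $(k,i)=(k',i')$ and $s\neq s'$, the expression above does not depend on $x_{k,i,s'}$, so the cross partial vanishes. Otherwise, isolating both slots gives
\[\frac{\partial^2 f}{\partial x_{k,i,s}\partial x_{k',i',s'}}=\sum_{\omega'}P(\omega')\bigl[g(\rho(\omega')\cup\{s,s'\})-g(\rho(\omega')\cup\{s\})-g(\rho(\omega')\cup\{s'\})+g(\rho(\omega'))\bigr]\le 0\]
by submodularity of $g$ applied to the pair of sets $\rho(\omega')\subseteq\rho(\omega')\cup\{s'\}$ with element $s$. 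For the Lipschitz constant, the partial-derivative formula shows $|\partial f/\partial x_{k,i,s}|\le M$ since $g$ takes values in $[0,M]$, so $\|\nabla f(\boldsymbol{x})\|_2\le M\sqrt{\sum_k r_k|G_k|}$, which yields the claimed Lipschitz bound.

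The main subtlety is handling the case where $s$ or $s'$ already appears in $\rho(\omega')$ through another slot: set union collapses duplicates, so some marginal gains are zero, but the required inequalities (monotonicity $\ge 0$, submodularity $\le 0$) still hold termwise, so taking a non-negative convex combination preserves the sign. A secondary point is making the ``isolate the slot'' bookkeeping rigorous; this is straightforward because the joint distribution factorizes across slots, letting the partial derivative commute with the expectation over the other slots.
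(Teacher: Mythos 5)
Your proof is correct, and it verifies the same three conditions in the same order as the paper, with the same key ingredients: an explicit factorized polynomial representation of $f$ giving multi-linearity and $f(\boldsymbol{0})=0$, first partials expressed as expected marginal gains of $g$ (hence $\ge 0$ and $\le M$, yielding the $M\sqrt{\sum_k r_k|G_k|}$ Lipschitz bound via $\|\cdot\|_2\le\sqrt{d}\,\|\cdot\|_\infty$), and cross partials expressed as expected second-order differences (hence $\le 0$). The one genuine difference is in how you reach those derivative formulas: the paper first invokes the finite-difference identity for multi-linear functions (\lem{multi-linear}) and then proves a distributional coupling claim (\cla{coupling}) showing that the perturbed points' pre-extensions are pushforwards of $\mbox{pre-EXT}_{PM}(\boldsymbol{x})$ under the maps $\rho_\vee^{k,i,s}$, $\rho_\wedge^{k,i,s}$, followed by a four-case analysis; you instead differentiate the factorized polynomial directly, summing out the relevant slot(s) analytically, which sidesteps the coupling machinery and the normalizing denominators $1-\sum_{s'\ne s}x_{k,i,s'}$ entirely. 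This is shorter and arguably cleaner, and your handling of the duplicate/already-present element cases (termwise sign preservation under set-union collapse, including two slots of the same partition selecting the same $s$) is exactly the content of the paper's \lem{monotone submodular}, so nothing is missing. The only nitpick is that you do not explicitly state the ambient dimension of $\mathcal{K}$, but it is immediate from the construction and is needed only to instantiate $d$ in \lem{reduction}.
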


\begin{corollary}\label{cor:BMSMPM}
    There is an algorithm attaining the expected $(1-1/e)$-regret of
    $\mathcal{R}_{1-1/e}(T)\leq O\left(\left(\sum_{k=1}^K r_k|G_k|\right)^{5/3}T^{2/3}\log T\right)$
    on any $(\mathcal{S}_{PM},\mathcal{G}_{MS})$-bandit.
\end{corollary}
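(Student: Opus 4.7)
The plan is to deduce the corollary as a direct application of \lem{reduction} to the specific extension mapping $\mbox{EXT}_{PM}$, with the properties verified in \lem{EXTPM}. In other words, the algorithm claimed by the corollary is $\mathtt{MLSMWrapper}(\eta, L, \Phi, \mbox{EXT}_{PM})$ for an appropriate choice of $\eta, L$ and the $O(d)$-self-concordant barrier $\Phi$ on $\mathcal{K}=\prod_{k=1}^K\prod_{i=1}^{r_k}\Delta_{|G_k|}^{k,i}$ (this barrier is the one referenced in the paragraph after \algo{MLSMW}).

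First I would invoke \lem{EXTPM} to verify all three hypotheses of \lem{reduction} for the pair $(\mathcal{S}_{PM}, \mathcal{G}_{MS})$. Condition (1) is immediate because $\mathcal{K}$ is by construction a product of standard simplexes, one for each $(k,i)$ with $k\in[K], i\in[r_k]$. Condition (2) is precisely the content of \lem{EXTPM}: the continuous extension $f(\boldsymbol{x}) = \E_{S\sim \mbox{EXT}_{PM}(\boldsymbol{x})}[g(S)]$ is multi-linear, monotone, DR-submodular, vanishes at $\boldsymbol{0}$, and is $L_1$-Lipschitz with $L_1 = M\sqrt{\sum_{k=1}^K r_k|G_k|}$. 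Condition (3) is checked by observing that any feasible $S\in \mathcal{S}_{PM}$ can be written as $S = \bigcup_k S_k$ with $|S_k|\leq r_k$, and one simply puts a unit mass at each element of $S_k$ across the $r_k$ simplexes of block $k$ (padding with the dummy coordinate $\circ$ for unused slots), giving a point $\boldsymbol{x}\in\mathcal{K}$ with $\mbox{EXT}_{PM}(\boldsymbol{x}) = \boldsymbol{1}_S$.

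Next I would identify the ambient dimension $d$ of the product-of-simplexes $\mathcal{K}$. Each simplex $\Delta_{|G_k|}^{k,i}$ contributes $|G_k|$ coordinates, and there are $r_k$ such simplexes per block $k$, so $d = \sum_{k=1}^K r_k|G_k|$. Plugging this $d$ into the regret bound of \lem{reduction} yields
\[\mathcal{R}_{1-1/e}(T) \leq O\!\left(\Bigl(\sum_{k=1}^K r_k|G_k|\Bigr)^{5/3} T^{2/3}\log T\right),\]
which matches the claim.

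The only nontrivial point is that the hidden constant in \lem{reduction} will also depend on the Lipschitz constant $L_1$ and on the diameter of $\mathcal{K}$; since $L_1 = M\sqrt{d}$ and the diameter of a product of simplexes is $O(\sqrt{K}) = O(\sqrt{d})$, these contribute at most polynomially bounded (actually $O(d)$) factors that can be absorbed into the $d^{5/3}$ term, or tracked separately but still yielding the same dependence claimed. I expect the main obstacle to be purely bookkeeping: ensuring that the parameters $\eta, L$ chosen in \lem{reduction} (inherited from \thm{MLSM}) are still the right choices once $L_1$ scales as $\sqrt{d}$ and that the $M$-dependence is hidden consistently. No new ideas are required beyond invoking the two lemmas already established.
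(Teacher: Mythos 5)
Your proposal is correct and matches the paper's proof exactly: the paper also obtains \cor{BMSMPM} as a direct consequence of \lem{reduction} applied to $\mbox{EXT}_{PM}$ via \lem{EXTPM}, with $d=\sum_{k=1}^K r_k|G_k|$. Your extra bookkeeping about $L_1=M\sqrt{d}$ and the diameter is harmless; in the proof of \lem{reduction} the dominant term depends on $M$ rather than $L_1$, and the $L_1$-dependent term is constant in $T$, so the stated bound follows as you conclude.
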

\subsection{Bandit Sequential Submodular Maximization}\label{sect:BSSM}
Bandit sequential submodular maximization is first studied in \cite{niazadeh2021online}. It is motivated by online retailing platforms where the platform needs to show its products in sequence. There are many types of customers who have different patience and preference. 
Rarely customers will see all the products in the list. 
They will stop browsing the product after seeing some products according to their patience, and the click probability after a customer sees a set of products is submodular. 
This situation can be formalized into an $(\mathcal{S}_{OL},\mathcal{G}_{SS})$-bandit. Let $G$ be the ground set of all products, and the constraint $\mathcal{S}_{OL}$ is the set of all ordered lists of length $|G|$ consisting of elements in $G$. $\mathcal{G}_{SS}$ consists of function $g:\mathcal{S}_{OL} \rightarrow [0, M]$ in this form,
\[g(S) = \sum_{i=1}^{|G|} \lambda_i g_i(\{S_j \mid j\leq i\}), \]
where $\lambda_i$'s with $\lambda_i\geq 0$ are positive weights, $g_i$'s are monotone submodular set functions, and $S_j$ is the $i$-th element in the ordered list $S$. 
If we interpret $g(S)$ as a click probability, then $M=1$.

For technical reasons, we assume that there is a dummy element $\circ$ in $G$, which has $0$ marginal gain for all $g_i$. That is, $\forall i, \forall S\subseteq G$, we have $g_i(S\cup \{\circ \})=g_i(S)$. We denote $G'=G\backslash \{\circ\}$ This assumption can be satisfied by adding a non-clickable item that is not related to the products to $G'$.

Next, we construct an extension mapping $\mbox{EXT}_{SS}$. Let $\mathcal{K}$ be the cartesian product of standard simplexes, $\mathcal{K} = \prod_{i=1}^{|G|} \Delta_{|G'|}^{i}$. We see $\boldsymbol{x}\in\mathcal{K}$ as $|G|$ probability distributions over $G$, the component of $\boldsymbol{x}\in\mathcal{K}$ in $\Delta_{|G'|}^i$ represents the distribution of the $i$-th element of the ordered list, all these distributions are independent. Any $\boldsymbol{x}\in\mathcal{K}$ can be seen as a distribution over $\mathcal{S}_{OL}$. Let this distribution be $\mbox{EXT}_{SS}(\boldsymbol{x})$.

\begin{lemma}\label{lem:BSSM}
    For $\mathcal{G}_{SS}$, the extension mapping $\mbox{EXT}_{SS}:\mathcal{K}\rightarrow \Delta(\mathcal{S}_{OL})$ satisfies the conditions in \lem{reduction}. Moreover, $\mathcal{K}$ is in a $|G|^2-|G| $ dimensional real vector space. For any $g\in \mathcal{G}_{SS}$, the continuous extension $f(\boldsymbol{x})=\E_{S\in \mbox{EXT}_{SS}(\boldsymbol{x})}[g(S)]$ is $M|G|$-lipschitz. 
\end{lemma}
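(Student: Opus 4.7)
The plan is to verify each hypothesis of Lemma \ref{lem:reduction} for $\mbox{EXT}_{SS}$ and then read off the dimension and Lipschitz constant. Condition 1 holds by the very construction of $\mathcal{K}$: it is a product of $|G|$ copies of the standard simplex $\Delta_{|G'|}$, each of dimension $|G'| = |G|-1$, giving total dimension $|G|(|G|-1) = |G|^2 - |G|$. Condition 3 is equally immediate: for a target list $S = (s_1, \ldots, s_{|G|})$, in each simplex $i$ set the coordinate corresponding to $s_i$ to $1$ when $s_i \in G'$, or leave that simplex at the origin when $s_i = \circ$; this $\boldsymbol{x}$ realizes $\mbox{EXT}_{SS}(\boldsymbol{x}) = \boldsymbol{1}_S$.

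For Condition 2 I will expand the expectation as
\[
    f(\boldsymbol{x}) = \sum_{\boldsymbol{s} \in G^{|G|}} g(\boldsymbol{s}) \prod_{i=1}^{|G|} p_i(s_i, \boldsymbol{x}), \qquad p_i(s,\boldsymbol{x}) = \begin{cases} x_{i,s} & s \in G',\\ 1 - \sum_{s' \in G'} x_{i,s'} & s = \circ.\end{cases}
\]
Multi-linearity is then immediate because each factor $p_i$ is affine in the coordinates of simplex $i$ and constant in those of the other simplexes. For $f(\boldsymbol{0}) = 0$, the sampled list at $\boldsymbol{x} = \boldsymbol{0}$ is entirely $\circ$, so by the dummy assumption every prefix set collapses to $\emptyset$ and the standard normalization $g_k(\emptyset) = 0$ does the rest. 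A one-line differentiation then gives
\[
    \frac{\partial f}{\partial x_{i,s}}(\boldsymbol{x}) = \E_{\boldsymbol{s}_{-i}}\!\left[ g(\boldsymbol{s}_{-i}, s) - g(\boldsymbol{s}_{-i}, \circ) \right] = \sum_{k \geq i} \lambda_k\, \E\!\left[ g_k(T_k^{-i} \cup \{s\}) - g_k(T_k^{-i}) \right] \geq 0,
\]
where $T_k^{-i}$ is the random prefix set at length $k$ with position $i$ suppressed; the nonnegativity uses monotonicity of each $g_k$, which both yields monotonicity of $f$ and shows $\partial f / \partial x_{i,s} \in [0, M]$.

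The main technical step is DR-submodularity, and my plan is to check nonpositivity of every mixed partial $\partial^2 f / \partial x_{i,s}\, \partial x_{j,t}$, which suffices by multi-linearity. When $i = j$ (same simplex, different coordinates) the mixed partial is identically zero, because $p_i$ is the only factor involving either variable and is affine in each separately. When $i \neq j$, differentiating once more in $x_{j,t}$ yields
\[
    \E_{\boldsymbol{s}_{-i,-j}}\!\left[ g(\cdot,s,t) - g(\cdot,\circ,t) - g(\cdot,s,\circ) + g(\cdot,\circ,\circ) \right].
\]
Expanding $g = \sum_k \lambda_k g_k$, every $k < \max(i,j)$ contributes zero because the relevant position lies outside the length-$k$ prefix and the two differences cancel, while each $k \geq \max(i,j)$ contributes exactly $\lambda_k [g_k(R \cup \{s,t\}) - g_k(R \cup \{t\}) - g_k(R \cup \{s\}) + g_k(R)]$ for the random prefix set $R$, which is nonpositive by submodularity of $g_k$; I will also handle the degenerate cases $s = t$ or $s, t \in R$ separately, where the bracket collapses to a nonpositive monotonicity-only expression.

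Finally, the Lipschitz constant falls out of the uniform partial-derivative bound: $\|\nabla f\|_2^2 \leq \sum_{(i,s)} M^2 = M^2 |G|(|G|-1) \leq M^2 |G|^2$, so $f$ is $M|G|$-Lipschitz. The step I expect to be the most error-prone is the case analysis in the cross-position mixed partial, in particular verifying that the dummy assumption cleanly zeroes out the contributions from $k < \max(i,j)$ so that only the submodularity-diamond terms survive; everything else reduces to routine bookkeeping on the multi-linear expansion.
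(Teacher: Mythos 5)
Your proposal is correct and follows essentially the same route as the paper's proof: direct verification of conditions 1 and 3, multi-linearity from the product-of-affine-factors expansion, first partials as marginal gains of the $g_k$'s (nonnegative by monotonicity and bounded by $M$, giving the $M\sqrt{|G|(|G|-1)}\leq M|G|$ Lipschitz bound), and cross-position second partials reducing to the submodularity diamond for each $g_k$ with prefix length $k\geq\max(i,j)$, the other terms vanishing. The degenerate cases you flag ($s=t$ or an element already in the prefix) are exactly what the paper isolates into its auxiliary \lem{monotone submodular}, so nothing is missing.
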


\begin{corollary}\label{cor:BSSM}
    There is an algorithm for attaining the expected $(1-1/e)$-regret of
    $\mathcal{R}_{1-1/e}(T)\leq O\left((|G|)^{10/3}T^{2/3}\log T\right)$ on any $(\mathcal{S}_{OL},\mathcal{G}_{SS})$-bandit.
\end{corollary}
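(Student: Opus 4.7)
The proof plan is to apply the reduction framework of \lem{reduction} with the extension mapping $\mbox{EXT}_{SS}$ constructed in \sect{BSSM}. Concretely, the algorithm to run is $\mathtt{MLSMWrapper}(\eta, L, \Phi, \mbox{EXT}_{SS})$ with appropriately tuned $\eta$ and $L$, together with an $O(d)$-self-concordant barrier $\Phi$ of $\mathcal{K}$ (as provided in \append{self concordant}). The entire argument then reduces to (a) verifying the three hypotheses of \lem{reduction} for $\mbox{EXT}_{SS}$, and (b) reading off the regret dependence on the ambient dimension of $\mathcal{K}$.

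Verifying the hypotheses is precisely the content of \lem{BSSM}, which I would invoke directly. By construction, $\mathcal{K} = \prod_{i=1}^{|G|} \Delta_{|G'|}^{i}$ is a product of standard simplexes, handling hypothesis (1). The induced extension $f(\boldsymbol{x}) = \E_{S\sim \mbox{EXT}_{SS}(\boldsymbol{x})}[g(S)]$ is multi-linear because positional distributions are independent and each $g_i$ is evaluated on a random prefix whose indicator factorizes as a product of coordinates across distinct positional simplexes; monotonicity and DR-submodularity of $f$ then follow from those of each $g_i$. The normalization $f(\boldsymbol{0}) = 0$ crucially invokes the dummy-element assumption: at $\boldsymbol{x}=\boldsymbol{0}$ every simplex places all of its mass on $\circ$, so the sampled list consists entirely of dummies, forcing every $g_i$ value to zero. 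Hypothesis (3) is immediate by setting each positional simplex to the vertex indicator of the corresponding element of the target ordered list.

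Given the hypotheses, the bound follows by direct substitution. The ambient dimension of $\mathcal{K}$ is $d = |G|\cdot |G'| = |G|^2 - |G| = \Theta(|G|^2)$, so \lem{reduction} yields
\[
\mathcal{R}_{1-1/e}(T) \;\le\; O\!\left(d^{5/3}\, T^{2/3} \log T\right) \;=\; O\!\left(|G|^{10/3}\, T^{2/3} \log T\right),
\]
which matches the claimed bound. The only nontrivial step is the verification of DR-submodularity inside \lem{BSSM}, which I would handle by a case analysis on $\partial^2 f / \partial x_{j,s}\partial x_{k,t}$ according to whether the two coordinates lie in the same positional simplex ($j=k$, in which case the mixed partial vanishes because $f$ is linear in each simplex's coordinates) or in distinct simplexes ($j \neq k$, where the submodularity of the underlying $g_i$ evaluated on the induced prefix distribution is decisive). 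Since that analysis is encapsulated in \lem{BSSM}, no further obstacle arises for \cor{BSSM} itself.
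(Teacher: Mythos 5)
Your proposal is correct and follows essentially the same route as the paper: invoke \lem{BSSM} to verify the three hypotheses of \lem{reduction} for $\mbox{EXT}_{SS}$, note that the ambient dimension is $d=|G|(|G|-1)=\Theta(|G|^2)$, and substitute into the $O(d^{5/3}T^{2/3}\log T)$ bound to obtain $O(|G|^{10/3}T^{2/3}\log T)$. The additional sketch you give of the internals of \lem{BSSM} (multi-linearity from independent positional simplexes, $f(\boldsymbol{0})=0$ via the dummy element, and the same-simplex versus distinct-simplex case split for the mixed partials) accurately mirrors the paper's own verification.
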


\section{Conclusion}
In this paper, we propose two bandit algorithms, $\mathtt{BanditMLSM}$ for monotone multilinear DR-submodular functions and $\mathtt{BanditDRSM}$ for general monotone DR-submodular functions. We then show an approach to design the $\widetilde{O}(T^{2/3})$ $(1-1/e)$-regret algorithm for two special combinatorial full-bandits submodular maximization problems, that is, reducing the combinatorial bandits to a multilinear DR-submodular bandit.

There are some remaining open problems that need to be studied. Firstly, we notice that $\widetilde{O}(T^{2/3})$-type regret bounds show up frequently in the submodular bandit literature. However, as far as we know no one has proved or disproved the optimality of this bound, which may be an interesting and challenging problem. Secondly, we still know less about the relationship between the combinatorial constraint and sublinear regret. For submodular set functions, we show that one can achieve sublinear regret with partition matroid constraint in this paper. However, we conjecture that it does not hold for all matroid constraints. How to characterize such a relationship is also a fascinating open question. 

\section*{Acknowledgements}
We thank the anonymous reviewers for their suggestions in the presentation of the article. This work was supported in part by the National Natural Science Foundation of China Grants No. 61832003, 62272441.

\bibliography{main}

\begin{thebibliography}{29}
\providecommand{\natexlab}[1]{#1}
\providecommand{\url}[1]{\texttt{#1}}
\expandafter\ifx\csname urlstyle\endcsname\relax
  \providecommand{\doi}[1]{doi: #1}\else
  \providecommand{\doi}{doi: \begingroup \urlstyle{rm}\Url}\fi

\bibitem[Abernethy et~al.(2008)Abernethy, Hazan, and
  Rakhlin]{abernethy2008competing}
Abernethy, J., Hazan, E.~E., and Rakhlin, A.
\newblock Competing in the dark: An efficient algorithm for bandit linear
  optimization.
\newblock In \emph{21st Annual Conference on Learning Theory, COLT 2008}, pp.\
  263--273, 2008.

\bibitem[Bian et~al.(2017{\natexlab{a}})Bian, Levy, Krause, and
  Buhmann]{bian2017continuous}
Bian, A., Levy, K.~Y., Krause, A., and Buhmann, J.~M.
\newblock Continuous dr-submodular maximization: structure and algorithms.
\newblock In \emph{Proceedings of the 31st International Conference on Neural
  Information Processing Systems}, pp.\  486--496, 2017{\natexlab{a}}.

\bibitem[Bian et~al.(2017{\natexlab{b}})Bian, Mirzasoleiman, Buhmann, and
  Krause]{bian2017guaranteed}
Bian, A.~A., Mirzasoleiman, B., Buhmann, J., and Krause, A.
\newblock Guaranteed non-convex optimization: Submodular maximization over
  continuous domains.
\newblock In \emph{Artificial Intelligence and Statistics}, pp.\  111--120.
  PMLR, 2017{\natexlab{b}}.

\bibitem[Chen et~al.(2017)Chen, Krause, and Karbasi]{chen2017interactive}
Chen, L., Krause, A., and Karbasi, A.
\newblock Interactive submodular bandit.
\newblock In \emph{Proceedings of the 31st International Conference on Neural
  Information Processing Systems}, pp.\  140--151, 2017.

\bibitem[Chen et~al.(2018)Chen, Hassani, and Karbasi]{chen2018online}
Chen, L., Hassani, H., and Karbasi, A.
\newblock Online continuous submodular maximization.
\newblock In \emph{International Conference on Artificial Intelligence and
  Statistics}, pp.\  1896--1905. PMLR, 2018.

\bibitem[Chen et~al.(2016)Chen, Hu, Li, Li, Liu, and Lu]{chen2016combinatorial}
Chen, W., Hu, W., Li, F., Li, J., Liu, Y., and Lu, P.
\newblock Combinatorial multi-armed bandit with general reward functions.
\newblock In \emph{Proceedings of the 30th International Conference on Neural
  Information Processing Systems}, pp.\  1659--1667, 2016.

\bibitem[Filmus \& Ward(2014)Filmus and Ward]{filmus2014monotone}
Filmus, Y. and Ward, J.
\newblock Monotone submodular maximization over a matroid via non-oblivious
  local search.
\newblock \emph{SIAM Journal on Computing}, 43\penalty0 (2):\penalty0 514--542,
  2014.

\bibitem[Flaxman et~al.(2005)Flaxman, Kalai, and McMahan]{flaxman2005online}
Flaxman, A.~D., Kalai, A.~T., and McMahan, H.~B.
\newblock Online convex optimization in the bandit setting: gradient descent
  without a gradient.
\newblock In \emph{Proceedings of the sixteenth annual ACM-SIAM symposium on
  Discrete algorithms}, pp.\  385--394, 2005.

\bibitem[Foster \& Rakhlin(2021)Foster and Rakhlin]{foster2021submodular}
Foster, D.~P. and Rakhlin, A.
\newblock On submodular contextual bandits.
\newblock \emph{arXiv preprint arXiv:2112.02165}, 2021.

\bibitem[Hazan et~al.(2016)]{hazan2016introduction}
Hazan, E. et~al.
\newblock Introduction to online convex optimization.
\newblock \emph{Foundations and Trends{\textregistered} in Optimization},
  2\penalty0 (3-4):\penalty0 157--325, 2016.

\bibitem[Li et~al.(2020)Li, Kong, Tang, Li, and Chen]{li2020online}
Li, S., Kong, F., Tang, K., Li, Q., and Chen, W.
\newblock Online influence maximization under linear threshold model.
\newblock \emph{Advances in Neural Information Processing Systems},
  33:\penalty0 1192--1204, 2020.

\bibitem[Nesterov \& Nemirovskii(1994)Nesterov and
  Nemirovskii]{nesterov1994interior}
Nesterov, Y. and Nemirovskii, A.
\newblock \emph{Interior-point polynomial algorithms in convex programming}.
\newblock SIAM, 1994.

\bibitem[Niazadeh et~al.(2020)Niazadeh, Roughgarden, and
  Wang]{niazadeh2020optimal}
Niazadeh, R., Roughgarden, T., and Wang, J.~R.
\newblock Optimal algorithms for continuous non-monotone submodular and
  dr-submodular maximization.
\newblock \emph{The Journal of Machine Learning Research}, 21\penalty0
  (1):\penalty0 4937--4967, 2020.

\bibitem[Niazadeh et~al.(2021)Niazadeh, Golrezaei, Wang, Susan, and
  Badanidiyuru]{niazadeh2021online}
Niazadeh, R., Golrezaei, N., Wang, J.~R., Susan, F., and Badanidiyuru, A.
\newblock Online learning via offline greedy algorithms: Applications in market
  design and optimization.
\newblock In \emph{Proceedings of the 22nd ACM Conference on Economics and
  Computation}, pp.\  737--738, 2021.

\bibitem[Nie et~al.(2022)Nie, Agarwal, Umrawal, Aggarwal, and
  Quinn]{nie2022explore}
Nie, G., Agarwal, M., Umrawal, A.~K., Aggarwal, V., and Quinn, C.~J.
\newblock An explore-then-commit algorithm for submodular maximization under
  full-bandit feedback.
\newblock In \emph{The 38th Conference on Uncertainty in Artificial
  Intelligence}, 2022.

\bibitem[Nie et~al.(2023)Nie, Nadew, Zhu, Aggarwal, and
  Quinn]{nie2023framework}
Nie, G., Nadew, Y.~Y., Zhu, Y., Aggarwal, V., and Quinn, C.~J.
\newblock A framework for adapting offline algorithms to solve combinatorial
  multi-armed bandit problems with bandit feedback.
\newblock \emph{arXiv preprint arXiv:2301.13326}, 2023.

\bibitem[Raut et~al.(2020)Raut, Sadeghi, and Fazel]{raut2020online}
Raut, P.~S., Sadeghi, O., and Fazel, M.
\newblock Online dr-submodular maximization with stochastic cumulative
  constraints.
\newblock \emph{arXiv preprint arXiv:2005.14708}, 2020.

\bibitem[Saha \& Tewari(2011)Saha and Tewari]{saha2011improved}
Saha, A. and Tewari, A.
\newblock Improved regret guarantees for online smooth convex optimization with
  bandit feedback.
\newblock In \emph{Proceedings of the fourteenth international conference on
  artificial intelligence and statistics}, pp.\  636--642. JMLR Workshop and
  Conference Proceedings, 2011.

\bibitem[Streeter \& Golovin(2008)Streeter and Golovin]{streeter2008online}
Streeter, M. and Golovin, D.
\newblock An online algorithm for maximizing submodular functions.
\newblock In \emph{Proceedings of the 21st International Conference on Neural
  Information Processing Systems}, pp.\  1577--1584, 2008.

\bibitem[Streeter et~al.(2009)Streeter, Golovin, and
  Krause]{streeter2009online}
Streeter, M., Golovin, D., and Krause, A.
\newblock Online learning of assignments.
\newblock In \emph{Proceedings of the 22nd International Conference on Neural
  Information Processing Systems}, pp.\  1794--1802, 2009.

\bibitem[Thang \& Srivastav(2021)Thang and Srivastav]{thang2021online}
Thang, N.~K. and Srivastav, A.
\newblock Online non-monotone dr-submodular maximization.
\newblock In \emph{Proceedings of the AAAI Conference on Artificial
  Intelligence}, volume~35, pp.\  9868--9876, 2021.

\bibitem[Vaswani et~al.(2015)Vaswani, Lakshmanan, Schmidt,
  et~al.]{vaswani2015influence}
Vaswani, S., Lakshmanan, L., Schmidt, M., et~al.
\newblock Influence maximization with bandits.
\newblock \emph{arXiv preprint arXiv:1503.00024}, 2015.

\bibitem[Wang \& Chen(2017)Wang and Chen]{wang2017improving}
Wang, Q. and Chen, W.
\newblock Improving regret bounds for combinatorial semi-bandits with
  probabilistically triggered arms and its applications.
\newblock In \emph{Proceedings of the 31st International Conference on Neural
  Information Processing Systems}, pp.\  1161--1171, 2017.

\bibitem[Wu et~al.(2019)Wu, Li, Wang, Chen, and Wang]{wu2019factorization}
Wu, Q., Li, Z., Wang, H., Chen, W., and Wang, H.
\newblock Factorization bandits for online influence maximization.
\newblock In \emph{Proceedings of the 25th ACM SIGKDD International Conference
  on Knowledge Discovery \& Data Mining}, pp.\  636--646, 2019.

\bibitem[Yue \& Guestrin(2011)Yue and Guestrin]{yue2011linear}
Yue, Y. and Guestrin, C.
\newblock Linear submodular bandits and their application to diversified
  retrieval.
\newblock In \emph{Proceedings of the 24th International Conference on Neural
  Information Processing Systems}, pp.\  2483--2491, 2011.

\bibitem[Zhang et~al.(2019)Zhang, Chen, Hassani, and Karbasi]{zhang2019online}
Zhang, M., Chen, L., Hassani, H., and Karbasi, A.
\newblock Online continuous submodular maximization: from full-information to
  bandit feedback.
\newblock In \emph{Proceedings of the 33rd International Conference on Neural
  Information Processing Systems}, pp.\  9210--9221, 2019.

\bibitem[Zhang et~al.(2022{\natexlab{a}})Zhang, Deng, Chen, Hu, and
  Yang]{zhang2022stochastic}
Zhang, Q., Deng, Z., Chen, Z., Hu, H., and Yang, Y.
\newblock Stochastic continuous submodular maximization: Boosting via
  non-oblivious function.
\newblock In \emph{International Conference on Machine Learning}, pp.\
  26116--26134. PMLR, 2022{\natexlab{a}}.

\bibitem[Zhang et~al.(2022{\natexlab{b}})Zhang, Deng, Chen, Zhou, Hu, and
  Yang]{zhang2022online}
Zhang, Q., Deng, Z., Chen, Z., Zhou, K., Hu, H., and Yang, Y.
\newblock Online learning for non-monotone submodular maximization: From full
  information to bandit feedback.
\newblock \emph{arXiv preprint arXiv:2208.07632}, 2022{\natexlab{b}}.

\bibitem[Zhang et~al.(2022{\natexlab{c}})Zhang, Chen, Sun, and
  Zhang]{zhang2022OIM}
Zhang, Z., Chen, W., Sun, X., and Zhang, J.
\newblock Online influence maximization with node-level feedback using standard
  offline oracles.
\newblock In \emph{Proceedings of the AAAI Conference on Artificial
  Intelligence}, volume~36, pp.\  9153--9161, 2022{\natexlab{c}}.

\end{thebibliography}
\bibliographystyle{icml2023}

\newpage
\appendix
\onecolumn
\section{Technical Lemmas}
This section provides some technical lemmas that will be used in the proofs of this appendix later.
\begin{definition}[Minkowski function and Minkowski set]\label{def:Minkowski}
    Let $\mathcal{K}$ be a compact convex set, the Minkowski function $\pi_{\boldsymbol{x}}:\mathcal{K} \rightarrow \mathbb{R}$ parameterized by a pole $\boldsymbol{x}\in \mbox{int} (\mathcal{K})$ is defined as $\pi_{\boldsymbol{x}}(\boldsymbol{y})\triangleq \mbox{inf}\{t\geq 0 \mid x+t^{-1}(y-x)\in \mathcal{K}\}$. Given $\delta\in \mathbb{R}^+$ and $\boldsymbol{x}_1\in \mbox{int}(\mathcal{K})$, we define the Minkowski set $\mathcal{K}_{\gamma,\boldsymbol{x}_1}\triangleq\{\boldsymbol{x}\in \mathcal{K}\mid \pi_{\boldsymbol{x}_1}(\boldsymbol{x})\leq (1+\gamma)^{-1}\}$.
\end{definition}

The following lemma provides an upper bound of the difference between the function value of a self-concordant barrier at two different points.
\begin{lemma}[\cite{nesterov1994interior}]
\label{lem:potential}
Let $\Phi$ be a $\nu$-self-concordant barrier over a compact convex set $\mathcal{K}$, then for all $x,y\in \mbox{int}(\mathcal{K})$:
\[\Phi(y)-\Phi(x)\leq \nu \log \frac{1}{1-\pi_x(y)}.\]
\end{lemma}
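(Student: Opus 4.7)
The plan is to reduce the inequality to a one-dimensional ODE estimate for the restriction of $\Phi$ to the ray from $x$ through $y$, so that the barrier-parameter clause of Definition~\ref{def:self concordant}(2) does all the work. Concretely, let $\lambda := \pi_x(y)$ (so $\lambda < 1$ because $y \in \mbox{int}(\mathcal{K})$) and let $y^\star := x + \lambda^{-1}(y - x)$ be the exit point of this ray on $\partial \mathcal{K}$; I would then study $\tilde\psi(s) := \Phi(x + s(y^\star - x))$ on $s \in [0, 1)$. This parametrization is chosen so that $\tilde\psi(0) = \Phi(x)$, $\tilde\psi(\lambda) = \Phi(y)$, and by the clause of Definition~\ref{def:self concordant}(1) requiring $\Phi$ to diverge along any sequence approaching $\partial\mathcal{K}$, one has $\tilde\psi(s) \to +\infty$ as $s \to 1^-$. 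The target bound $\nu \log \frac{1}{1-\lambda}$ should emerge naturally as the ``budget'' for how much $\tilde\psi$ can rise on $[0,\lambda]$ before its inevitable blow-up at $s=1$.

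Next, I would turn the definition of a $\nu$-self-concordant barrier into a differential inequality on $\tilde\psi$. Applying part (2) of Definition~\ref{def:self concordant} with direction $\boldsymbol{h} = y^\star - x$ at the point $x + s(y^\star - x)$, the scalar inequality $|\nabla \Phi[\boldsymbol{h}]|^2 \leq \nu \nabla^2 \Phi[\boldsymbol{h},\boldsymbol{h}]$ restricts to
\[(\tilde\psi'(s))^2 \;\leq\; \nu\,\tilde\psi''(s), \qquad s \in [0,1),\]
while convexity of $\Phi$ gives $\tilde\psi'' \geq 0$. Since $\tilde\psi$ is convex and blows up at $s=1$, it attains its minimum at some $s_0 \in [0,1)$, and $\tilde\psi'$ is nondecreasing with $\tilde\psi'(s) \uparrow +\infty$ as $s \to 1^-$. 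On $[s_0, 1)$ the estimate rearranges to $-\tfrac{d}{ds}\bigl(1/\tilde\psi'(s)\bigr) \geq 1/\nu$; integrating from $s$ up to $s' < 1$ and sending $s' \to 1^-$ (so that $1/\tilde\psi'(s') \to 0$) yields the pointwise bound $\tilde\psi'(s) \leq \nu/(1-s)$. Integrating this once more from $s_0$ to $\lambda$ (and handling the trivial case $\lambda \leq s_0$ separately, where $\tilde\psi(\lambda) \leq \tilde\psi(0)$ and the bound is free) gives $\tilde\psi(\lambda) - \tilde\psi(0) \leq \nu \log \frac{1 - s_0}{1 - \lambda} \leq \nu \log \frac{1}{1 - \lambda}$, which is exactly the claim.

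The step I expect to be the main obstacle is the passage from $(\tilde\psi')^2 \leq \nu \tilde\psi''$ to the pointwise bound $\tilde\psi'(s) \leq \nu/(1-s)$. This relies on $\tilde\psi'(s') \to +\infty$ as $s' \to 1^-$, not merely $\tilde\psi(s') \to +\infty$, so that the limit $1/\tilde\psi'(s') \to 0$ may be taken after integrating. I would justify this by noting that a convex function on $[0,1)$ with bounded derivative would itself be bounded, contradicting Definition~\ref{def:self concordant}(1). Once this is in hand, the remaining ODE integration and case split on the sign of $\tilde\psi'$ are routine; notably, the third-derivative clause of self-concordance plays no role in this particular lemma.
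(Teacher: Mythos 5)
Your argument is correct. Note that the paper itself offers no proof of this lemma --- it is imported verbatim from Nesterov--Nemirovskii \cite{nesterov1994interior} --- so there is no in-paper argument to compare against; what you have written is essentially the standard textbook derivation. The key steps all check out: restricting to the ray through the boundary exit point $y^\star$ gives $\tilde\psi(0)=\Phi(x)$, $\tilde\psi(\lambda)=\Phi(y)$, and blow-up at $s=1$; squaring the barrier-parameter clause of Definition~\ref{def:self concordant}(2) in the fixed direction $\boldsymbol{h}=y^\star-x$ yields $(\tilde\psi')^2\leq\nu\tilde\psi''$; and your justification that $\tilde\psi'(s)\to+\infty$ as $s\to 1^-$ (a convex function with bounded derivative on $[0,1)$ stays bounded) is exactly what is needed to kill the boundary term when integrating $-\frac{d}{ds}(1/\tilde\psi')\geq 1/\nu$. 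Two microscopic points you may as well make explicit: the degenerate case $y=x$ (where $\lambda=0$ and $y^\star$ is undefined) is trivial, and the rearrangement to $-\frac{d}{ds}(1/\tilde\psi')\geq 1/\nu$ is only performed on the subinterval where $\tilde\psi'>0$, which is legitimate since $\tilde\psi'$ is nondecreasing and the bound $\tilde\psi'(s)\leq\nu/(1-s)$ is vacuous where $\tilde\psi'\leq 0$. You are also right that the third-derivative clause of self-concordance is never used here.
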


The following lemma is already proved in \cite{abernethy2008competing}, we include the proof for completeness.
\begin{lemma}[\cite{abernethy2008competing}]\label{lem:Minkowski projection}
    Let $\mathcal{K}$ be a compact convex set, $\boldsymbol{x}\in \mbox{int}(\mathcal{K})$ with diameter $D$, $\boldsymbol{x}^*\in \mathcal{K}$ and $\hat{\boldsymbol{x}}^*\triangleq \argmin_{\boldsymbol{z}\in\mathcal{K}_{\gamma,\boldsymbol{x}}}\|\boldsymbol{z}-\boldsymbol{x}^*\|$ be the projection of $\boldsymbol{x}^*$ onto the Minkowski set $\mathcal{K}_{\gamma,\boldsymbol{x}}$, then
    \[\|\boldsymbol{x}^*-\hat{\boldsymbol{x}}^*\|\leq \gamma D\]
\end{lemma}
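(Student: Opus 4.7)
The plan is to exhibit an explicit point in $\mathcal{K}_{\gamma,\boldsymbol{x}}$ that is within $\gamma D$ of $\boldsymbol{x}^*$; since $\hat{\boldsymbol{x}}^*$ is by definition the closest point of $\mathcal{K}_{\gamma,\boldsymbol{x}}$ to $\boldsymbol{x}^*$, this immediately yields the desired bound. The natural candidate is the convex combination
\[\boldsymbol{z} \;\triangleq\; \boldsymbol{x} + \frac{1}{1+\gamma}\,(\boldsymbol{x}^* - \boldsymbol{x}),\]
which ``shrinks'' $\boldsymbol{x}^*$ toward the pole $\boldsymbol{x}$ by the factor $(1+\gamma)^{-1}$ that defines the Minkowski set.

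First I would verify that $\boldsymbol{z}\in\mathcal{K}_{\gamma,\boldsymbol{x}}$. By construction, $\boldsymbol{x}+(1+\gamma)(\boldsymbol{z}-\boldsymbol{x})=\boldsymbol{x}^*\in\mathcal{K}$, so by the definition of the Minkowski function (with $t=(1+\gamma)^{-1}$, so $t^{-1}=1+\gamma$) we get $\pi_{\boldsymbol{x}}(\boldsymbol{z})\leq (1+\gamma)^{-1}$. Hence $\boldsymbol{z}\in\mathcal{K}_{\gamma,\boldsymbol{x}}$ by \defi{Minkowski}. Since $\mathcal{K}$ is convex and contains both $\boldsymbol{x}$ and $\boldsymbol{x}^*$, $\boldsymbol{z}\in\mathcal{K}$ as well, so there is no feasibility issue.

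Next I would compute the distance
\[\|\boldsymbol{x}^*-\boldsymbol{z}\| \;=\; \left\|\boldsymbol{x}^*-\boldsymbol{x} - \tfrac{1}{1+\gamma}(\boldsymbol{x}^*-\boldsymbol{x})\right\| \;=\; \frac{\gamma}{1+\gamma}\,\|\boldsymbol{x}^*-\boldsymbol{x}\| \;\leq\; \frac{\gamma}{1+\gamma}\,D \;\leq\; \gamma D,\]
where I used that the diameter of $\mathcal{K}$ is $D$ and both $\boldsymbol{x},\boldsymbol{x}^*\in\mathcal{K}$. Finally, by the defining optimality of $\hat{\boldsymbol{x}}^*$ and the fact that $\boldsymbol{z}$ is a feasible competitor in the projection problem,
\[\|\boldsymbol{x}^*-\hat{\boldsymbol{x}}^*\| \;\leq\; \|\boldsymbol{x}^*-\boldsymbol{z}\| \;\leq\; \gamma D,\]
which completes the proof.

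There is essentially no obstacle here: the only small subtlety is to correctly translate the inequality $\pi_{\boldsymbol{x}}(\boldsymbol{z})\leq (1+\gamma)^{-1}$ from the definition (which involves an infimum over $t$ with $\boldsymbol{x}+t^{-1}(\boldsymbol{z}-\boldsymbol{x})\in\mathcal{K}$) by choosing the specific $t=(1+\gamma)^{-1}$; once that is set up, the rest is a one-line scaling computation. No appeal to the self-concordance of $\Phi$ or to \lem{potential} is needed for this particular lemma; it is a purely geometric fact about Minkowski shrinkage.
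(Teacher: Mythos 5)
Your proposal is correct and matches the paper's own proof essentially verbatim: both construct the shrunk point $\boldsymbol{x}+\frac{1}{1+\gamma}(\boldsymbol{x}^*-\boldsymbol{x})$, verify it lies in $\mathcal{K}_{\gamma,\boldsymbol{x}}$ via the Minkowski function, and conclude by the optimality of the projection together with the bound $\frac{\gamma}{1+\gamma}\|\boldsymbol{x}^*-\boldsymbol{x}\|\leq\gamma D$. No differences worth noting.
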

\begin{proof}
    Consider the point $\boldsymbol{y}$ in the segment $[\boldsymbol{x},\boldsymbol{x}^*]$ satisfying $\frac{\|\boldsymbol{y}-\boldsymbol{x}\|}{\|\boldsymbol{x}^*-\boldsymbol{x}\|}=\frac{1}{1+\gamma}$. Since $\boldsymbol{x}+(1+\gamma)(\boldsymbol{y}-\boldsymbol{x})=\boldsymbol{x}^*\in \mathcal{K}$, we can deduce that $\boldsymbol{y}\in \mathcal{K}_{\gamma,\boldsymbol{x}}$. Thus,
    \[\|\hat{\boldsymbol{x}}^*-\boldsymbol{x}^*\|\leq \|\boldsymbol{y}-\boldsymbol{x}^*\|=\left(1-\frac{1}{1+\gamma}\right)\|\boldsymbol{x}^*-\boldsymbol{x}\|\leq \gamma D.\]
\end{proof}

The following two lemmas show that the average auxiliary functions and the $\boldsymbol{H}$-smoothed functions both inherent good properties of the original online functions. And they will be used later.

\begin{lemma}\label{lem:average function}
If $\forall t\in [(q-1)L+1,qL]$, $f_t$ is twice differentiable, $L_1$-lipschitz and $L_2$-smooth, monotone, DR-submodular, then following holds for the average functions $\overline{f}_q$, $\overline{F}_q$.
\begin{itemize}
    \item[(i)] $\overline{f}_q$ is $L_1$-lipschitz and $L_2$-smooth.
    \item[(ii)] $\overline{f}_q$ is a monotone DR-submodular function.
    \item[(iii)] $\overline{F}_q$ is $\frac{L_2}{e}$-smooth.
    \item[(iv)] $\overline{F}_q$ is a monotone DR-submodular function.
\end{itemize}
\end{lemma}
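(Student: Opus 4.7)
The four claims split naturally into two groups. Parts (i) and (ii) concern $\overline{f}_q$, which is simply the average $\frac{1}{L}\sum_{t} f_t$. All four properties in question (being $L_1$-Lipschitz, $L_2$-smooth, monotone, and DR-submodular) are preserved under nonnegative convex combinations, so these follow immediately. Concretely, for Lipschitz continuity and smoothness I would apply the triangle inequality inside the sum; for monotonicity I would use that $\boldsymbol{x}\geq\boldsymbol{y}$ implies $f_t(\boldsymbol{x})\geq f_t(\boldsymbol{y})$ for each $t$; for DR-submodularity I would either use the inequality definition termwise or, since $f_t$ is twice differentiable, note that $\frac{\partial^2 \overline{f}_q}{\partial x_i\partial x_j}=\frac{1}{L}\sum_t\frac{\partial^2 f_t}{\partial x_i\partial x_j}\leq 0$.

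For (iii) and (iv) I would first use \lem{auxiliary function} applied to the average $\overline{f}_q$ (which we already know from (ii) is monotone DR-submodular with $\overline{f}_q(\boldsymbol{0})=0$), giving the gradient formula
\begin{equation*}
\nabla \overline{F}_q(\boldsymbol{x})=\int_0^1 e^{z-1}\,\nabla \overline{f}_q(z\boldsymbol{x})\,dz.
\end{equation*}
For smoothness (iii), I would estimate
\begin{equation*}
\|\nabla \overline{F}_q(\boldsymbol{x})-\nabla \overline{F}_q(\boldsymbol{y})\|\leq \int_0^1 e^{z-1}\,\|\nabla \overline{f}_q(z\boldsymbol{x})-\nabla \overline{f}_q(z\boldsymbol{y})\|\,dz \leq L_2\|\boldsymbol{x}-\boldsymbol{y}\|\int_0^1 z\,e^{z-1}\,dz,
\end{equation*}
using the $L_2$-smoothness from (i). The constant is fixed by integration by parts: $\int_0^1 z e^{z-1}\,dz = [z e^{z-1}]_0^1-\int_0^1 e^{z-1}\,dz = 1-(1-e^{-1}) = e^{-1}$, yielding exactly $L_2/e$.

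For (iv), I would differentiate under the integral sign (justified by the smoothness of each $f_t$ on the compact domain, so the Leibniz rule applies). Monotonicity of $\overline{F}_q$ reduces to $\frac{\partial \overline{F}_q}{\partial x_i}(\boldsymbol{x})=\int_0^1 e^{z-1}\frac{\partial \overline{f}_q}{\partial x_i}(z\boldsymbol{x})\,dz\geq 0$, since the integrand is pointwise nonnegative by monotonicity of $\overline{f}_q$. DR-submodularity follows analogously from the second-derivative characterization: $\frac{\partial^2 \overline{F}_q}{\partial x_i\partial x_j}(\boldsymbol{x})=\int_0^1 z\,e^{z-1}\frac{\partial^2 \overline{f}_q}{\partial x_i\partial x_j}(z\boldsymbol{x})\,dz\leq 0$, since $z e^{z-1}\geq 0$ on $[0,1]$ and $\overline{f}_q$ is DR-submodular by (ii).

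None of these steps is conceptually hard; the only mild wrinkle is formally justifying differentiation under the integral near $z=0$, where the $1/z$ factor in the original definition of $\overline{F}_q$ appears. But once one rewrites the gradient using \lem{auxiliary function}, the $1/z$ cancels and the integrand $e^{z-1}\nabla \overline{f}_q(z\boldsymbol{x})$ is bounded and continuous on $[0,1]$, so dominated convergence applies without issue. The quantitative factor $1/e$ in (iii) is the only numerical content, obtained from the integration by parts above.
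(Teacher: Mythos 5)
Your proof is correct and follows essentially the same route as the paper's: parts (i) and (ii) by termwise application of the triangle inequality and the derivative characterizations, and parts (iii) and (iv) by pulling the gradient inside the integral (so the $1/z$ cancels against the chain-rule factor) and evaluating $\int_0^1 z e^{z-1}\,dz = 1/e$. The only cosmetic difference is that you phrase (iii) and (iv) in terms of the already-averaged $\overline{f}_q$ while the paper carries the sum $\frac{1}{L}\sum_t f_t$ through the integral explicitly; the computations are identical.
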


\begin{proof}
\begin{itemize}
    \item[(i)] 
    \begin{align*}
        \|\overline{f}_q(\boldsymbol{x})-\overline{f}_q(\boldsymbol{y})\|&=\frac{1}{L}\left\|\sum_{t=(q-1)L+1}^{qL} f_t(\boldsymbol{x})-\sum_{t=(q-1)L+1}^{qL} f_t(\boldsymbol{y})\right\|
        \\&\leq \frac{1}{L}\sum_{t=(q-1)L+1}^{qL}\|f_t(\boldsymbol{x})-f_t(\boldsymbol{y})\|
        \\&\leq \frac{1}{L}\sum_{t=(q-1)L+1}^{qL}L_1\|\boldsymbol{x}-\boldsymbol{y}\|=L_1\|\boldsymbol{x}-\boldsymbol{y}\|
    \end{align*}
    \begin{align*}
        \|\nabla\overline{f}_q(\boldsymbol{x})-\nabla\overline{f}_q(\boldsymbol{y})\|&= \frac{1}{L}\left\|\sum_{t=(q-1)L+1}^{qL} \nabla f_t(\boldsymbol{x})-\sum_{t=(q-1)L+1}^{qL} \nabla f_t(\boldsymbol{y})\right\|
        \\&\leq \frac{1}{L}\sum_{t=(q-1)L+1}^{qL}\|\nabla f_t(\boldsymbol{x})-\nabla f_t(\boldsymbol{y})\|
        \\&\leq \frac{1}{L}\sum_{t=(q-1)L+1}^{qL}L_2 \|\boldsymbol{x}-\boldsymbol{y}\|\leq L_2\|\boldsymbol{x}-\boldsymbol{y}\|
    \end{align*}
    \item[(ii)]
        For any $i\in [d]$, 
        \begin{align*}
            \frac{\partial \overline{f}_q}{\partial x_i}(\boldsymbol{x}) &= \frac{1}{L}\sum_{t=(q-1)L+1}^{qL}\frac{\partial f_t}{\partial x_i}(\boldsymbol{x})\geq 0
        \end{align*}
        For any $i\in [d],j\in [d]$, 
        \begin{align*}
            \frac{\partial^2}{\partial x_i\partial x_j}\overline{f}_q(\boldsymbol{x})=\frac{1}{L}\sum_{t=(q-1)L+1}^{qL}\frac{\partial^2 f_t}{\partial x_i\partial x_j}(\boldsymbol{x})\leq 0
        \end{align*}
        Thus $\overline{f}_q$ is monotone DR-submodular.
    \item[(iii)] 
    \begin{align*}
        \|\nabla\overline{F}_q(\boldsymbol{x})-\nabla\overline{F}_q(\boldsymbol{y})\| &= \left\|\nabla\int_{0}^1 \frac{e^{z-1}}{zL}\sum_{t=(q-1)L+1}^{qL} f_t(z\cdot\boldsymbol{x})dz-\nabla\int_{0}^1 \frac{e^{z-1}}{zL}\sum_{t=(q-1)L+1}^{qL} f_t(z\cdot\boldsymbol{y})dz\right\|
        \\&=\left\|\int_{0}^1 \frac{e^{z-1}}{L}\sum_{t=(q-1)L+1}^{qL} \nabla f_t(z\cdot\boldsymbol{x})dz-\int_{0}^1 \frac{e^{z-1}}{L}\sum_{t=(q-1)L+1}^{qL} \nabla f_t(z\cdot\boldsymbol{y})dz\right\|
        \\&\leq \int_{0}^1 \frac{e^{z-1}}{L}\sum_{t=(q-1)L+1}^{qL} \|\nabla f_t(z\cdot\boldsymbol{x})dz- \nabla f_t(z\cdot\boldsymbol{y})\| dz
        \\&\leq\int_{0}^1 \frac{e^{z-1}}{L}\sum_{t=(q-1)L+1}^{qL} L_2 z \|\boldsymbol{x}-\boldsymbol{y}\| dz
        \\&= L_2\int_{0}^1ze^{z-1}dz  \|\boldsymbol{x}-\boldsymbol{y}\|= \frac{L_2}{e}\|\boldsymbol{x}-\boldsymbol{y}\|
    \end{align*}
    \item[(iv)] For any $i\in [d]$,
    \begin{align*}
        \frac{\partial}{\partial \boldsymbol{x}_i} \overline{F}_q(\boldsymbol{x})&=\frac{\partial}{\partial \boldsymbol{x}_i}\int_{0}^1 \frac{e^{z-1}}{zL}\sum_{t=(q-1)L+1}^{qL} f_t(z\cdot\boldsymbol{x})dz
        \\&=\int_{0}^1 \frac{e^{z-1}}{L}\sum_{t=(q-1)L+1}^{qL} \frac{\partial}{\partial \boldsymbol{x}_i}f_t(z\cdot\boldsymbol{x})dz
        \\&\geq 0
    \end{align*}
    For any $i\in [d], j\in [d]$,
    \begin{align*}
        \frac{\partial}{\partial \boldsymbol{x}_i\boldsymbol{x}_j}\overline{F}_q(\boldsymbol{x})&=\int_{0}^1 \frac{ze^{z-1}}{L}\sum_{t=(q-1)L+1}^{qL} \frac{\partial}{\partial \boldsymbol{x}_i\boldsymbol{x}_j}f_t(z\cdot\boldsymbol{x})dz
        \\&\leq 0
    \end{align*}
    Thus $\overline{F}_q$ is monotone and DR-submodular.
\end{itemize}
\end{proof}

\begin{lemma}\label{lem:submodular smooth}
    Following properties hold for $\boldsymbol{H}$-smoothed version of a twice differentiable function $f(\boldsymbol{x})$.
    \begin{itemize}
        \item[(i)] If $f(\boldsymbol{x})$ is a monotone DR-submodular function, then for any invertible matrix $\boldsymbol{H}$, its $\boldsymbol{H}$-smoothed version $f^{\boldsymbol{H}}(\boldsymbol{x})$ is a monotone DR-submodular function. 
        \item[(ii)] If $f(\boldsymbol{x})$ is $L_1$-lipschitz continuous and $L_2$-smooth, then $f^{\boldsymbol{H}}(\boldsymbol{x})$ is $L_1$-lipschitz continuous and $L_2$-smooth. 
    \end{itemize}
\end{lemma}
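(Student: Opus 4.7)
The plan is that both parts follow from the fact that $\boldsymbol{H}$-smoothing amounts to taking an expectation over translations of $f$, which commutes with differentiation and preserves pointwise inequalities. Writing $f^{\boldsymbol{H}}(\boldsymbol{x}) = \E_{\boldsymbol{v}\sim \mathbb{B}_d}[f(\boldsymbol{x}+\boldsymbol{H}\boldsymbol{v})]$, I would first justify exchanging differentiation and expectation. Since $f$ is assumed twice continuously differentiable and we integrate over the bounded domain $\mathbb{B}_d$, a standard dominated convergence argument shows that $\partial_i f^{\boldsymbol{H}}(\boldsymbol{x}) = \E_{\boldsymbol{v}}[\partial_i f(\boldsymbol{x}+\boldsymbol{H}\boldsymbol{v})]$ and $\partial_i\partial_j f^{\boldsymbol{H}}(\boldsymbol{x}) = \E_{\boldsymbol{v}}[\partial_i\partial_j f(\boldsymbol{x}+\boldsymbol{H}\boldsymbol{v})]$; the chain rule is trivial here since $\boldsymbol{H}\boldsymbol{v}$ is independent of $\boldsymbol{x}$, so the partial of the translated function w.r.t.~$x_i$ is just the $i$-th partial of $f$ evaluated at the translated point.

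For part (i), monotonicity of $f$ gives $\partial_i f(\boldsymbol{x}+\boldsymbol{H}\boldsymbol{v})\geq 0$ for every realization of $\boldsymbol{v}$, so the expectation $\partial_i f^{\boldsymbol{H}}(\boldsymbol{x})\geq 0$ and $f^{\boldsymbol{H}}$ is monotone. Similarly, DR-submodularity of $f$ (which, under second-order differentiability, is equivalent to $\partial_i\partial_j f\leq 0$ for all $i,j$) implies $\partial_i\partial_j f(\boldsymbol{x}+\boldsymbol{H}\boldsymbol{v})\leq 0$ pointwise, hence $\partial_i\partial_j f^{\boldsymbol{H}}(\boldsymbol{x})\leq 0$, which is the DR-submodularity of $f^{\boldsymbol{H}}$.

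For part (ii), both inequalities follow from Jensen's inequality applied to the norm. For Lipschitzness,
\begin{align*}
|f^{\boldsymbol{H}}(\boldsymbol{x})-f^{\boldsymbol{H}}(\boldsymbol{y})|
&\leq \E_{\boldsymbol{v}}\bigl[|f(\boldsymbol{x}+\boldsymbol{H}\boldsymbol{v})-f(\boldsymbol{y}+\boldsymbol{H}\boldsymbol{v})|\bigr]
\leq L_1\|\boldsymbol{x}-\boldsymbol{y}\|,
\end{align*}
using that the translations $\boldsymbol{H}\boldsymbol{v}$ cancel inside the distance. For smoothness, using $\nabla f^{\boldsymbol{H}}(\boldsymbol{x})=\E_{\boldsymbol{v}}[\nabla f(\boldsymbol{x}+\boldsymbol{H}\boldsymbol{v})]$,
\begin{align*}
\|\nabla f^{\boldsymbol{H}}(\boldsymbol{x})-\nabla f^{\boldsymbol{H}}(\boldsymbol{y})\|
&\leq \E_{\boldsymbol{v}}\bigl[\|\nabla f(\boldsymbol{x}+\boldsymbol{H}\boldsymbol{v})-\nabla f(\boldsymbol{y}+\boldsymbol{H}\boldsymbol{v})\|\bigr]
\leq L_2\|\boldsymbol{x}-\boldsymbol{y}\|.
\end{align*}

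There is essentially no hard step here; the only thing to be careful about is writing the interchange of differentiation with the expectation cleanly (which is why twice-differentiability of $f$ is assumed in the hypothesis) and noting that the chain factor from the translation $\boldsymbol{x}\mapsto \boldsymbol{x}+\boldsymbol{H}\boldsymbol{v}$ is just the identity, so no factor of $\boldsymbol{H}$ appears on either side and the Lipschitz/smoothness constants are preserved verbatim.
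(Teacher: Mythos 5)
Your proof is correct and follows essentially the same route as the paper's: differentiate under the expectation (the paper invokes the Leibniz integral rule where you invoke dominated convergence), then observe that the pointwise sign conditions and the Lipschitz/smoothness bounds are preserved after averaging over the translations $\boldsymbol{H}\boldsymbol{v}$. No substantive difference.
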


\begin{proof}
    \begin{itemize}
        \item[(i)] By Leibnez integral rule, for any $i\in [d]$,
    \begin{align*}
        \frac{\partial}{\partial \boldsymbol{x}_i} f^{\boldsymbol{H}}(\boldsymbol{x}) &= \int_{\boldsymbol{v}\in\mathbb{B}_{d}}\frac{1}{\mbox{Vol}(\mathbb{B}_d)}\frac{\partial}{\partial \boldsymbol{x}_i }f(\boldsymbol{x}+\boldsymbol{H} \boldsymbol{v})d\boldsymbol{v}
        \\&\geq 0
    \end{align*}
    The last inequality is because $\frac{\partial}{\partial \boldsymbol{x}_i}f(\boldsymbol{x}+\boldsymbol{H}\boldsymbol{v})\geq 0$ for any $i\in [d]$.
    \begin{align*}
        \frac{\partial}{\partial \boldsymbol{x}_i \boldsymbol{x}_j}f^{\boldsymbol{H}}(\boldsymbol{x}) &= \int_{\boldsymbol{v}\in\mathbb{B}_{d}}\frac{1}{\mbox{Vol}(\mathbb{B}_d)}\frac{\partial}{\partial \boldsymbol{x}_i \boldsymbol{x}_j}f(\boldsymbol{x}+\boldsymbol{H} \boldsymbol{v})d\boldsymbol{v}
        \\&\leq 0
    \end{align*}
    The last inequality is because $\frac{\partial}{\partial \boldsymbol{x}_i \boldsymbol{x}_j}f(\boldsymbol{x}+\boldsymbol{H}\boldsymbol{v})\leq 0$ for any $i,j\in [d]$.
    
        \item[(ii)] 
            \begin{align*}
                f^{\boldsymbol{H}}(\boldsymbol{x})- f^{\boldsymbol{H}}(\boldsymbol{y})&= \int_{\boldsymbol{v}\in\mathbb{B}_d} \frac{1}{\mbox{Vol}(\mathbb{B}_d)}\left(f(\boldsymbol{x}+\boldsymbol{H}\boldsymbol{v})-f(\boldsymbol{y}+\boldsymbol{H}\boldsymbol{v})\right)d\boldsymbol{v}
                \\&\leq \int_{\boldsymbol{v}\in\mathbb{B}_d} \frac{1}{\mbox{Vol}(\mathbb{B}_d)}L_1\|\boldsymbol{x}+\boldsymbol{H}\boldsymbol{v}-\boldsymbol{y}-\boldsymbol{H}\boldsymbol{v}\|d\boldsymbol{v}
                \\&=L_1 \|\boldsymbol{x}-\boldsymbol{y}\|
            \end{align*}
            Thus, $f^{\boldsymbol{H}}(\boldsymbol{x})$ is $L_1$-lipschitz continuous.
            \begin{align*}
                \nabla f^{\boldsymbol{H}}(\boldsymbol{x})- \nabla f^{\boldsymbol{H}}(\boldsymbol{y})&= \nabla \int_{\boldsymbol{v}\in\mathbb{B}_d} \frac{1}{\mbox{Vol}(\mathbb{B}_d)}\left(f(\boldsymbol{x}+\boldsymbol{H}\boldsymbol{v})-f(\boldsymbol{y}+\boldsymbol{H}\boldsymbol{v})\right)d\boldsymbol{v}
                \\&=\int_{\boldsymbol{v}\in\mathbb{B}_d} \frac{1}{\mbox{Vol}(\mathbb{B}_d)}\nabla\left(f(\boldsymbol{x}+\boldsymbol{H}\boldsymbol{v})-f(\boldsymbol{y}+\boldsymbol{H}\boldsymbol{v})\right)d\boldsymbol{v}
                \\&\leq \int_{\boldsymbol{v}\in\mathbb{B}_d} \frac{1}{\mbox{Vol}(\mathbb{B}_d)}L_2 \|\boldsymbol{x}-\boldsymbol{y}\|d\boldsymbol{v}
                \\&=L_2\|\boldsymbol{x}-\boldsymbol{y}\|
            \end{align*}
    \end{itemize}
    
\end{proof}

\section{Missing Proofs in \sect{BMMDSM}}
\label{append:BMMDSM}

We first show a property of multi-linear functions, which is the key observation of our estimator for the gradient of multi-linear functions.
\begin{lemma}\label{lem:multi-linear}
    If $f:\mathcal{K}\rightarrow \mathbb{R}$ is a multi-linear function, where $\mathcal{K}\subseteq \mathbb{R}^d$, then for any basis vector $\boldsymbol{e}_i, i\in [d]$ and any $\boldsymbol{x}\in\mathcal{K}$ and $\lambda>0$ satisfying $\boldsymbol{x}+\lambda \boldsymbol{e}_i \in\mathcal{K}$. We have,
    \begin{align}\label{eq:multi-linear}
        \frac{\partial f(\boldsymbol{x})}{\partial \boldsymbol{x}_i}=\frac{f(\boldsymbol{x}+\lambda\boldsymbol{e}_i)-f(\boldsymbol{x})}{\lambda}
    \end{align}
\end{lemma}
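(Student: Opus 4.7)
The plan is to exploit the defining property of multi-linearity: each variable appears with degree at most one in every monomial, so the function is affine in each variable when the others are held fixed.

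First, I would decompose $f$ by separating out the dependence on $x_i$. Group the monomials of $f$ into those that contain the factor $x_i$ and those that do not. This gives a representation
\begin{equation*}
    f(\boldsymbol{x}) = A(\boldsymbol{x}_{-i})\, x_i + B(\boldsymbol{x}_{-i}),
\end{equation*}
where $\boldsymbol{x}_{-i}$ denotes the vector of the remaining coordinates, and $A$ and $B$ are themselves multi-linear in those coordinates. This is valid precisely because the degree of $x_i$ in every term of $f$ is at most $1$.

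Next, I would compute both sides of the target identity using this decomposition. On the one hand, $\partial f(\boldsymbol{x})/\partial x_i = A(\boldsymbol{x}_{-i})$ directly. On the other hand,
\begin{equation*}
    f(\boldsymbol{x}+\lambda \boldsymbol{e}_i) - f(\boldsymbol{x}) = A(\boldsymbol{x}_{-i})(x_i + \lambda) + B(\boldsymbol{x}_{-i}) - A(\boldsymbol{x}_{-i})\, x_i - B(\boldsymbol{x}_{-i}) = \lambda\, A(\boldsymbol{x}_{-i}),
\end{equation*}
so dividing by $\lambda$ yields the same quantity $A(\boldsymbol{x}_{-i})$, and \eqref{eq:multi-linear} follows.

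There is essentially no obstacle here; the only thing worth noting is that the hypothesis $\boldsymbol{x}, \boldsymbol{x}+\lambda \boldsymbol{e}_i \in \mathcal{K}$ ensures that both function evaluations on the right-hand side are well-defined on the domain of $f$, so no extension argument is required. The proof is purely algebraic and does not invoke any limit, so it holds for every admissible $\lambda>0$ rather than only in the limit $\lambda \to 0$, which is precisely the feature that makes this identity useful for constructing the one-point gradient estimator in $\mathtt{BanditMLSM}$.
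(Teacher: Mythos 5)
Your proof is correct and follows essentially the same route as the paper's, which simply notes that fixing all coordinates except $x_i$ makes $f$ affine in $x_i$ and concludes by linearity. Your explicit decomposition $f(\boldsymbol{x}) = A(\boldsymbol{x}_{-i})\,x_i + B(\boldsymbol{x}_{-i})$ just spells out the same observation in more detail.
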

\begin{proof}
    When we fix the components of $\boldsymbol{x}$ except $\boldsymbol{x}_i$, $f$ is a linear function of $\boldsymbol{x}_i$. Thus \eq{multi-linear} directly comes from the linearity.
\end{proof}

\begin{customlem}{3.1}
     Let $\mathcal{H}_{q}$ be the history of the algorithm in the first $q$ blocks, that is, the realization of $t_s,z_s,\boldsymbol{v}_s,\boldsymbol{u}_s,\forall s\leq q$. Then $\E[\widetilde{l}_q(\boldsymbol{H}_q\boldsymbol{v}_q)\mid \mathcal{H}_{q-1},\boldsymbol{v}_q]= l_q(\boldsymbol{H}_q\boldsymbol{v}_q)$.
\end{customlem}
\begin{proof}[Proof of \lem{multi-linear estimator}]
    When $z_q>0$, we have
    \begin{align*}
        \E[\widetilde{l}_q(\boldsymbol{H}_q\boldsymbol{v}_q)\mid \mathcal{H}_{q-1},\boldsymbol{v}_q,t_q,z_q] &=\frac{1}{2}(-2(1-1/e)\frac{d}{z_q}\cdot f_{t_q}(z_q\boldsymbol{x}_q))+\sum_{i=1}^d\frac{1}{2d}(2(1-1/e)\frac{d}{z_q}\cdot f_{t_q}(z_q\boldsymbol{x}_q+z_q\langle\boldsymbol{H}_q\boldsymbol{v}_q,\boldsymbol{e}_i\rangle\boldsymbol{e}_i)
        \\&=(1-1/e)\sum_{i=1}^d \frac{1}{z_q}\left(f_{t_q}(z_q\boldsymbol{x}_q+z_q\langle\boldsymbol{H}_q\boldsymbol{v}_q,\boldsymbol{e}_i\rangle\boldsymbol{e}_i)-f_{t_q}(z_q\boldsymbol{x}_q)\right)
        \\&=(1-1/e)\sum_{i=1}^d \frac{1}{z_q} z_q\langle \boldsymbol{H}_q \boldsymbol{v}_q,\boldsymbol{e}_i\rangle \frac{\partial f_{t_q}}{\partial \boldsymbol{x}_i}({z_q\cdot \boldsymbol{x}_q})
        \\&=(1-1/e)\sum_{i=1}^d \langle\boldsymbol{H}_q \boldsymbol{v}_q,\boldsymbol{e}_i\rangle \langle\boldsymbol{e}_i, \nabla f_{t_q}(z_q\cdot\boldsymbol{x}_q)\rangle
        \\&=(1-1/e)\langle \boldsymbol{H}_q\boldsymbol{v}_q,\nabla f_{t_q}(z_q\cdot \boldsymbol{x}_q)\rangle
    \end{align*}
    Then we take the expectations over $t_q$ and $z_q$, note the value of $\widetilde{l}(\boldsymbol{H}_q\boldsymbol{v}_q)$ when $z_q=0$ does not affect the result of the integral since $z_q=0$ is a zero measured event.
    \begin{align*}
        \E[\widetilde{l}_q(\boldsymbol{H}_q\boldsymbol{v}_q)\mid \mathcal{H}_{q-1},\boldsymbol{v}_q] &= \sum_{t_q=(q-1)L+1}^{qL} \int_{0}^1 \Pr(t_q,z_q) \E[\widetilde{l}_q(\boldsymbol{H}_q\boldsymbol{v}_q)\mid \mathcal{H}_{q-1},\boldsymbol{v}_q,t_q,z_q] dz_q
        \\&=\sum_{t_q=(q-1)L+1}^{qL} \int_{0}^1 \frac{1}{L}\frac{e^{z_q-1}}{1-1/e} (1-1/e) \langle \boldsymbol{H}_q\boldsymbol{v}_q,\nabla f_{t_q}(z_q\cdot \boldsymbol{x}_q)\rangle dz_q
        \\&=\left\langle \boldsymbol{H}_q\boldsymbol{v}_q, \int_0^1  \frac{e^{z_q-1}}{L}\sum_{t_q=(q-1)L+1}^{qL}\nabla f_{t_q}(z_q\cdot \boldsymbol{x}_q)dz_q\right\rangle 
    \end{align*}
    Since
    \begin{align*}
        \nabla \overline{F}_q(\boldsymbol{x}_q)&= \nabla\int_{0}^1 \frac{e^{z-1}}{zL}\sum_{t=(q-1)L+1}^{qL} f_t(z\cdot\boldsymbol{x}_q)dz
        \\&=\int_{0}^1 \frac{e^{z-1}}{L}\sum_{t=(q-1)L+1}^{qL} \nabla f_t(z\cdot\boldsymbol{x}_q)dz
    \end{align*}
    Therefore,
    \begin{align*}
        \E[\widetilde{l}_q(\boldsymbol{H}_q\boldsymbol{v}_q)\mid \mathcal{H}_{q-1},\boldsymbol{v}_q] &= \langle \boldsymbol{H}_q \boldsymbol{v}_q, \nabla \overline{F}_q(\boldsymbol{x}_q)\rangle 
        \\& = l_q(\boldsymbol{H}_q \boldsymbol{v}_q)
    \end{align*}
\end{proof}

\begin{customlem}{3.2}
    The following properties hold for $\widetilde{\nabla} \overline{F}_q(\boldsymbol{x}_q)$
    \begin{itemize}
        \item[(i)] $\mathbb{E}\left[\widetilde{\nabla} \overline{F}_q(\boldsymbol{x}_q)\mid \mathcal{H}_{q-1}\right]=\nabla \overline{F}_q(\boldsymbol{x}_q)$
        \item[(ii)] $\E\left[\|\widetilde{\nabla} \overline{F}_q(\boldsymbol{x}_q)\|_{\boldsymbol{x}_q,*}^2\mid \mathcal{H}_{q-1}\right]\leq 4(1-1/e)^2 L_1^2 D^2 d^4$
    \end{itemize}
\end{customlem}

\begin{proof}[Proof of \lem{linear estimator}]
    ~
\begin{itemize}
    \item[(i)] 
        \begin{align}
            \E\left[\widetilde{\nabla} \overline{F}_q(\boldsymbol{x}_q)\mid \mathcal{H}_{q-1}\right]&= \int_{\boldsymbol{v}_q\in\mathbb{S}_{d-1}} d\cdot \E[\widetilde{l}_q(\boldsymbol{H}_q\boldsymbol{v}_q)\mid \mathcal{H}_{q-1},\boldsymbol{v}_q]\boldsymbol{H}_q^{-1}\boldsymbol{v}_q d\boldsymbol{v}_q \notag
            \\&=\int_{\boldsymbol{v}_q\in\mathbb{S}_{d-1}} \frac{1}{\mbox{Vol}(\mathbb{S}_{d-1})} d\cdot l_q(\boldsymbol{H}_q\boldsymbol{v}_q) \boldsymbol{H}_q^{-1}\boldsymbol{v}_q d\boldsymbol{v}_q \label{eq:linear estimator 1} 
            \\&=\E_{\boldsymbol{v}_q\sim \mathbb{S}_{d-1}}[d\cdot l_q(\boldsymbol{\boldsymbol{H}_q\boldsymbol{v}_q})\boldsymbol{H}_q^{-1}\boldsymbol{v}_q]\notag
            \\&=\nabla l_q^{\boldsymbol{H}_q}(\boldsymbol{0}) \label{eq:linear estimator 2} 
            \\&=\nabla l_q(\boldsymbol{0})\label{eq:linear estimator 3} 
            \\&=\nabla \overline{F}_q(\boldsymbol{x}_q)\notag
        \end{align}
    \eq{linear estimator 1} is due to \lem{multi-linear estimator}, \eq{linear estimator 2} is due to \lem{ellipsoid estimator}, \eq{linear estimator 3} is because that $l_q$ is a linear function.
    
    \item[(ii)]
    \begin{align}
        &\E\left[\|\widetilde{\nabla} \overline{F}_q (\boldsymbol{x}_q)\|^2_{\boldsymbol{x}_q,*}\mid \mathcal{H}_{q-1}\right]\notag
        \\&=\E\left[\frac{1}{2} (1-1/e)^2(2d^2)^2 \frac{1}{z_q^2}f_{t_q}(z_q\cdot\boldsymbol{x}_q)^2\cdot \boldsymbol{v}_q^T \boldsymbol{H}_q\Phi(\boldsymbol{x}_q)^{-1} \boldsymbol{H}_q^{-1}\boldsymbol{v}_q\mid \mathcal{H}_{q-1}\right] \notag
        \\&\quad +\sum_{i=1}^d \E\left[\frac{1}{2d}(1-1/e)^2 4d^4 \frac{1}{z_q^2}f_{t_q}\left(z_q\cdot\boldsymbol{x}_q+z_q\langle \boldsymbol{H}_q\boldsymbol{v}_q,\boldsymbol{e}_i\rangle\boldsymbol{e}_i\right)^2\boldsymbol{v}_q^T\boldsymbol{H}_q\Phi(\boldsymbol{x}_q)\boldsymbol{H}^{-1}_q\boldsymbol{v}_q\mid \mathcal{H}_{q-1}\right] \notag
        \\&\leq 2(1-1/e)^2d^4 \frac{L_1^2 z_q^2\|\boldsymbol{x}_q\|^2}{z_q^2} \E\left[\boldsymbol{v}_q^T \boldsymbol{H}_q^{-1}\Phi(\boldsymbol{x}_q)^{-1} \boldsymbol{H}_q^{-1}\boldsymbol{v}_q\mid \mathcal{H}_{q-1}\right] \notag
        \\&\quad +2(1-1/e)^2d^4 \frac{L_1^2 z_q^2\|\boldsymbol{x}_q+\langle\boldsymbol{H}_q \boldsymbol{v}_q,\boldsymbol{e}_i\rangle \boldsymbol{e}_i\|^2}{z_q^2}  \E\left[\boldsymbol{v}_q^T \boldsymbol{H}_q^{-1}\Phi(\boldsymbol{x}_q)^{-1} \boldsymbol{H}_q^{-1}\boldsymbol{v}_q\mid \mathcal{H}_{q-1}\right] \label{eq:estimator lipschitz bound}
        \\&=4(1-1/e)^2d^4L_1^2 D^2\E\left[\boldsymbol{v}_q^T (\Phi(\boldsymbol{x}_q)^{-1/2})^{-1}\Phi(\boldsymbol{x}_q)^{-1} (\Phi(\boldsymbol{x}_q)^{-1/2})^{-1}\boldsymbol{v}_q\mid \mathcal{H}_{q-1}\right]\notag
        \\& \leq 4(1-1/e)^2d^4 L_1^2 D^2 \|\boldsymbol{v}_q\|^2_2 \notag
        \\&= 4(1-1/e)^2d^4 L_1^2 D^2 \label{eq:multilinear estimator last}
    \end{align}
    Inequality \eq{estimator lipschitz bound} is because $f_{t_q}$ is $L_1$-lipschitz continuous and $\boldsymbol{x}_q+\langle \boldsymbol{H}_q\boldsymbol{v}_q,\boldsymbol{e}_i\rangle\boldsymbol{e}_i$ is in the Dikin ellipsoid $\{\boldsymbol{x} \mid \|\boldsymbol{x}-\boldsymbol{x}_q\|_{\Phi,\boldsymbol{x}_q}\leq 1\}$, which is contained in $\mathcal{K}$. \eq{multilinear estimator last} is because $\boldsymbol{v}_q\in \mathbb{S}_{d-1}$, thus $\|\boldsymbol{v}_q\|=1$.
\end{itemize}
\end{proof}

\begin{customthm}{3.3}
    Set $\eta = d^{-4}T^{-2/3}$, $L=d^{-2}T^{1/3}$, $Q=T/L =d^2 T^{2/3}$ in \algo{MLSM}, if $\Phi$ is a $\nu$-self-concordant barrier of $\mathcal{K}$, then the expected $(1-1/e)$-regret of \algo{MLSM} can be bounded as
    \begin{align*}
        \mathcal{R}_{1-1/e}(T)&\leq (4(1-1/e)^2 L_1^2 D^2+M) d^{4/3} T^{2/3}+(1-1/e)L_1 D  + \nu d^{4/3} T^{2/3}\log (T)
    \end{align*}
\end{customthm}

\begin{proof}[Proof of \thm{MLSM}]
Set $\widetilde{\boldsymbol{g}}_q = \widetilde{\nabla} \overline{F}_q (\boldsymbol{x}_q)$, $\boldsymbol{g}_q =\nabla \overline{F}_q (\boldsymbol{x}_q) $ in \thm{AHR}. We have proved $\E\left[ \widetilde{\nabla} \overline{F}_q (\boldsymbol{x}_q)\mid \mathcal{H}_{q-1}\right] =\nabla \overline{F}_q (\boldsymbol{x}_q) $. Let $\hat{\boldsymbol{x}}^* \triangleq \argmin_{\boldsymbol{x}\in \mathcal{K}_{\gamma,\boldsymbol{x}_1}} \|\boldsymbol{x}^*-\boldsymbol{x}\|$ be the projection of $\boldsymbol{x}^*$ onto the Minkowski set $\mathcal{K}_{\gamma,\boldsymbol{x}_1}$ defined in \defi{Minkowski}, here the pole is $\boldsymbol{x}_1$, and $\gamma$ is a parameter to be determined later, $\boldsymbol{x}^* = \argmin_{\boldsymbol{x}\in\mathcal{K}}\sum_{t=1}^T f_t(\boldsymbol{x})$.  We have
\begin{equation}\label{eq:regret1}
\begin{aligned}
    \sum_{q=1}^Q \E\left[\langle \nabla \overline{F}_q(\boldsymbol{x}_q),\hat{\boldsymbol{x}}^*-\boldsymbol{x}_q \rangle\mid \mathcal{H}_{q-1}\right]&\leq \eta \sum_{q=1}^Q \E\left[\|\widetilde{\nabla} \overline{F}_q (\boldsymbol{x}_q)\|^2_{\boldsymbol{x}_q,*}\mid \mathcal{H}_{q-1}\right]+\frac{\Phi(\boldsymbol{\hat{\boldsymbol{x}}}^*)-\Phi(\boldsymbol{x}_1)}{\eta}
    \\&\leq 4(1-1/e)^2 L_1^2 D^2 \eta d^4 Q +\frac{\Phi(\hat{\boldsymbol{x}}^*)-\Phi(\boldsymbol{x}_1)}{\eta}
    \\&\leq 4(1-1/e)^2L_1^2 D^2 \eta d^4 Q+\frac{\nu \log (\frac{1}{1-(1+\gamma)^{-1}})}{\eta}
\end{aligned}
\end{equation}

The last inequality is because $\pi_{\boldsymbol{x}_1}(\hat{\boldsymbol{x}}^*)\leq (1+\gamma)^{-1}$ and \lem{potential}. Since $\overline{f}_q$ is a monotone DR-submodular function by \lem{average function} and $\overline{F}_q$ is its auxiliary function, we lower bound the left hand side of \eq{regret1} by \lem{auxiliary function},
\begin{align*}
    \sum_{q=1}^Q\E\left[\langle \nabla \overline{F}_q(\boldsymbol{x}_q),\hat{\boldsymbol{x}}^*-\boldsymbol{x}_q\rangle\mid \mathcal{H}_{q-1}\right]&\geq \sum_{q=1}^Q \E\left[\left((1-1/e)\overline{f}_q(\hat{\boldsymbol{x}}^*)-\overline{f}_q(\boldsymbol{x}_q)\right)\mid \mathcal{H}_{q-1}\right] 
    \\&=\sum_{q=1}^Q \sum_{t=(q-1)L+1}^{qL} \frac{1}{L}\E\left[\left((1-1/e)f_t(\hat{\boldsymbol{x}}^*)-f_t(\boldsymbol{x}_q)\right)\mid \mathcal{H}_{q-1}\right]
    \\&=\frac{1}{L}\sum_{t=1}^T \E\left[(1-1/e)f_t(\hat{\boldsymbol{x}}^*)-f_t\left(\boldsymbol{x}_{\lceil \frac{t}{L}\rceil}\right)\mid \mathcal{H}_{\lceil \frac{t}{L}\rceil - 1}\right]
    \\&=\underbrace{\frac{1}{L}\sum_{t=1}^T \E\left[(1-1/e)f_t(\hat{\boldsymbol{x}}^*)-(1-1/e)f_t(\boldsymbol{x}^*)\mid \mathcal{H}_{\lceil \frac{t}{L}\rceil-1}\right]}_{(A)}
    \\&\quad+\frac{1}{L}\sum_{t=1}^T\E\left[(1-1/e)f_t(\boldsymbol{x}^*)-f_t\left(\boldsymbol{y}_t\right)\mid \mathcal{H}_{\lceil \frac{t}{L}\rceil-1}\right] \\&\quad\quad +\underbrace{\frac{1}{L}\sum_{q=1}^Q\E\left[f_{t_q}(\boldsymbol{y}_{t_q})-f_{t_q}\left(\boldsymbol{x}_{\lceil \frac{t_q}{L}\rceil}\right)\mid \mathcal{H}_{q-1}\right]}_{(B)}
\end{align*}
Since $|f_t(\hat{\boldsymbol{x}}^*)-f_t(\boldsymbol{x}^*)|\leq L_1 \|\hat{\boldsymbol{x}}^*-\boldsymbol{x}^*\|\leq L_1\gamma D$ and $|f_{t_q}(\boldsymbol{y}_{t_q})-f_{t_q}\left(\boldsymbol{x}_{\lceil \frac{t_q}{L}\rceil}\right)|\leq M$, we have
\begin{align}
\label{eq:3.3.1}
    |(A)|\leq (1-1/e)\frac{L_1}{L}\gamma D T \quad \quad |(B)|\leq \frac{MQ}{L}
\end{align}

Therefore,
\begin{align*}
    \E\left[\sum_{t=1}^T(1-1/e)f_t(\boldsymbol{x}^*)-f_t(\boldsymbol{y}_t)\right]
    &\leq L\sum_{q=1}^Q\E\left[\langle \nabla \overline{F}_q(\boldsymbol{x}_q),\hat{\boldsymbol{x}}^*-\boldsymbol{x}_q\rangle\mid \mathcal{H}_{q-1}\right]-L\times(A) - L\times(B)
    \\&\leq  4(1-1/e)^2 L_1^2 D^2 \eta d^4 T +\frac{\nu L\log (\frac{1}{1-(1+\gamma)^{-1}})}{\eta}+(1-1/e)L_1\gamma D T  +MQ
\end{align*}
The last inequality is because of \eq{regret1} and \eq{3.3.1}. set $\eta = d^{-8/3}T^{-1/3}$, $L=d^{-4/3}T^{1/3}$, $Q=T/L =d^{4/3} T^{2/3}$, $\gamma=\frac{1}{T}$, we have,
\begin{align*}
    \E\left[\sum_{t=1}^T(1-1/e)f_t(\boldsymbol{x}^*)-f_t(\boldsymbol{y}_t)\right]&\leq  (4(1-1/e)^2 L_1^2 D^2+M) d^{4/3} T^{2/3}+(1-1/e)L_1 D  + \nu d^{4/3} T^{2/3}\log (T+1)\\& = O(\nu d^{4/3} T^{2/3}\log (T) )
\end{align*}
\end{proof}

\section{Bandit DR-submodular Maximization}
\label{append:DRSM}
\begin{algorithm}[t]
	\caption{ $\mathtt{BanditDRSM}(\eta,\delta,L,\Phi)$}
	\label{algo:DRSM}

	\textbf{Input}: Smoothing radius $\delta$, block size $L$, block number $Q=T/L$, learning rate $\eta$,self-concordant barrier $\Phi$
	
	\begin{algorithmic}[1]
	   \STATE initiate $\boldsymbol{x}_1\in \mbox{int} (\mathcal{K})$ such that $\nabla \Phi(\boldsymbol{x}_1) = 0$ 
	   \FOR{$q=1,2,\ldots,Q$}
	        \STATE Draw $t_q\sim \mbox{Unif}\{(q-1)L+1,(q-1)L+2,\ldots,qL\}$
	        \FOR{$t=(q-1)L+1, (q-1)L+2,\ldots, qL$}
	            \IF{$t=t_q$}
                        \STATE $\boldsymbol{H}_q = \left(\nabla^2 \Phi (\boldsymbol{x}_q)\right)^{-1/2}$
	                \STATE sample $z_q$ from $\mathbf{Z}$ where $P(\mathbf{Z}\leq z) =\int_{0}^{z} \frac{e^{u-1}}{1-e^{-1}}\mathbb{I}\left[u\in [0,1]\right]d u$
	                \STATE draw $\boldsymbol{v}_q \sim \mathbb{S}_{d-1}$
	                \STATE play $\boldsymbol{y}_t = z_q\cdot \boldsymbol{x}_q+\delta z_q\cdot\boldsymbol{H}_q\boldsymbol{v}_q$
	                \STATE $\widetilde{\nabla} \overline{F}_q(\boldsymbol{x}_q) \gets (1-1/e)\frac{d}{\delta z_q}f_{t_q}(\boldsymbol{y}_t)\boldsymbol{H}_q^{-1}\boldsymbol{v}_q $
	                \STATE $\boldsymbol{x}_{q+1} \gets \argmin\limits_{\boldsymbol{x}\in\mathcal{K}} \sum_{s=1}^q\langle -\eta \widetilde{\nabla}\overline{F}_s(\boldsymbol{x}_s),\boldsymbol{x}\rangle + \Phi(\boldsymbol{x})$
	            \ELSE
	                \STATE play $\boldsymbol{y}_t = \boldsymbol{x}_q$
	            \ENDIF
	        \ENDFOR
	   \ENDFOR
	\end{algorithmic}
\end{algorithm}

In this section we present our algorithm $\mathtt{BanditDRSM}$ for general bandit monotone DR-submodular maximization, the pseudocode is shown in \algo{DRSM}. $\mathtt{BanditDRSM}$ is very similar to $\mathtt{BanditMLSM}$, it also divides $T$ rounds into $Q$ equal size blocks. We use again $\overline{f}_q(\boldsymbol{x})$ and $\overline{F}_q(\boldsymbol{x})$ to denote the average function of $q$-th block and the auxiliary function of it, defined as \eq{average} and \eq{average nonoblivious}. $\mathtt{BanditDRSM}$ runs RFTL with self-concordant regularizer on vector sequence $\{\nabla\overline{F}_q(\boldsymbol{x}_q)\}_{q=1}^Q$. Here the difference compared with $\mathtt{BanditMLSM}$ is, we cannot find an unbiased estimator for $\nabla\overline{F}_q(\boldsymbol{x}_q)$. We use the ellipsoid estimator directly to estimate $ \nabla\overline{F}^{\delta\boldsymbol{H}_q}_q(\boldsymbol{x}_q)$, the gradient of the $\delta\boldsymbol{H}_q$-smoothed function, here $\boldsymbol{H}_q=(\nabla^2 \Phi( \boldsymbol{x}_q))^{-1/2}$ is the same as $\mathtt{BanditMLSM}$, $\delta$ is a parameter to be determined. Specifically, in block $q$, we select a uniform random exploration round $t_q\in [(q-1)L+1,qL]\cap \mathbb{Z}$, a random direction $\boldsymbol{v}_q\in\mathbb{S}_{d-1}$, $z_q\sim Z$ where $\Pr(Z\leq z)=\int_0^z \frac{e^{u-1}}{1-e^{-1}}\mathbb{I}[u\in\{0,1\}]du$. In round $t_q$, we play $\boldsymbol{y}_{t_q}=z_q\cdot \boldsymbol{x}_q+\delta z_q\cdot \boldsymbol{H}_q\boldsymbol{v}_q$ and feedback the gradient estimate as follow to RFTL and define the estimator $\widetilde{\nabla}\overline{F}(\boldsymbol{x}_q)$.
\begin{align}\label{eq:general estimator}
    \widetilde{\nabla}\overline{F}(\boldsymbol{x}_q):= (1-1/e)\frac{d}{z_q\delta}\cdot f_{t_q}(\boldsymbol{y}_{t_q})\boldsymbol{H}^{-1}_q\boldsymbol{v}_q.
\end{align}
$\boldsymbol{y}_{t_q}=z_q(\boldsymbol{x}_q+\delta \boldsymbol{H}_q\boldsymbol{v}_q)$, if we let $\delta\leq 1$, then $\boldsymbol{x}_q+\delta\boldsymbol{H}_q\boldsymbol{v}_q$ is in the Dikin ellipsoid $\{\boldsymbol{x}\mid \|x-x_q\|_{\Phi,\boldsymbol{x}_q}\leq 1\}$, therefore $\boldsymbol{x}_q+\delta\boldsymbol{H}_q^{-1}\boldsymbol{v}_q\in \mathcal{K}$. Since $\boldsymbol{0}\in \mathcal{K}$, $z_q\in [0,1]$ and $\mathcal{K}$ is convex, $\boldsymbol{y}_{t_q}\in\mathcal{K}$. When $z_q=0$, we define $\widetilde{\nabla}\overline{F}(\boldsymbol{x}_q):=0$, since $\Pr(z_q=0)=0$, the value of $\widetilde{\nabla}\overline{F}(\boldsymbol{x}_q)$ when $z_q=0$ does not matter.

We prove that $\widetilde{\nabla}\overline{F}(\boldsymbol{x}_q)$ is an unbiased gradient estimator for the $\delta\boldsymbol{H}_q$-smoothed function $\overline{F}_q^{\delta\boldsymbol{H}_q}(\boldsymbol{x}_q)$. Moreover, the dual local norm of the estimator can be bounded as $O(\frac{d^2}{\delta^2})$. To formalize the above arguments, we have the following lemma.
\begin{lemma}
    \label{lem:estimator}
    Let $\widetilde{\nabla}\overline{F}_q(\boldsymbol{x}_q)$ be defined as \eq{general estimator}. Assume $f_t$ for $t\in [(q-1)L+1,qL]$ is $L_1$-lipschitz, $f_t(\boldsymbol{0})=0$, then the following hold,
    \begin{itemize}
        \item[(i)] $\E \left[\widetilde{\nabla}\overline{F}_q(\boldsymbol{x}_q)\mid \mathcal{H}_{q-1}\right]=\nabla \overline{F}_q^{\delta\boldsymbol{H}_q}(\boldsymbol{x}_q) $.
        \item[(ii)] $\|\widetilde{\nabla}\overline{F}_q(\boldsymbol{x}_q)\|_{\boldsymbol{x}_q,*}^2\leq \frac{(1-e)^2d^2 L_1^2 D^2}{\delta^2}$
    \end{itemize}
\end{lemma}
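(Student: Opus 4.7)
The plan is to verify part (i) by conditioning on $\boldsymbol{v}_q$ and taking iterated expectations, and to verify part (ii) by a direct computation using the fact that $\boldsymbol{H}_q=(\nabla^2\Phi(\boldsymbol{x}_q))^{-1/2}$. The algebraic key is that the density $\frac{e^{z-1}}{1-1/e}\mathbb{I}[z\in[0,1]]$ of $z_q$ together with the leading constant $(1-1/e)$ in $\widetilde{\nabla}\overline{F}_q$ is designed to reproduce, after integration in $z$, the integrand $\frac{e^{z-1}}{z}$ that defines the auxiliary function in \eq{average nonoblivious}.

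For part (i), I first condition on $\boldsymbol{v}_q$ (and $\mathcal{H}_{q-1}$) and average over $t_q$ and $z_q$. Pulling the deterministic factor $\frac{d}{\delta}\boldsymbol{H}_q^{-1}\boldsymbol{v}_q$ outside, the remaining term $\E_{t_q,z_q}\!\left[\frac{(1-1/e)}{z_q}f_{t_q}(\boldsymbol{y}_{t_q})\right]$ becomes
$\sum_{t=(q-1)L+1}^{qL}\frac{1}{L}\int_0^1 \frac{e^{z-1}}{z}f_t\bigl(z(\boldsymbol{x}_q+\delta\boldsymbol{H}_q\boldsymbol{v}_q)\bigr)\,dz = \overline{F}_q(\boldsymbol{x}_q+\delta\boldsymbol{H}_q\boldsymbol{v}_q).$
So $\E[\widetilde{\nabla}\overline{F}_q(\boldsymbol{x}_q)\mid \mathcal{H}_{q-1},\boldsymbol{v}_q]=\tfrac{d}{\delta}\overline{F}_q(\boldsymbol{x}_q+\delta\boldsymbol{H}_q\boldsymbol{v}_q)\boldsymbol{H}_q^{-1}\boldsymbol{v}_q$, and then applying \lem{ellipsoid estimator} with $\boldsymbol{H}=\delta\boldsymbol{H}_q$ to the outer expectation over $\boldsymbol{v}_q\sim\mathbb{S}_{d-1}$ recovers $\nabla\overline{F}_q^{\delta\boldsymbol{H}_q}(\boldsymbol{x}_q)$.

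For part (ii), I expand the dual local norm. Since $\boldsymbol{H}_q^{-\top}(\nabla^2\Phi(\boldsymbol{x}_q))^{-1}\boldsymbol{H}_q^{-1}=I$ by the choice $\boldsymbol{H}_q=(\nabla^2\Phi(\boldsymbol{x}_q))^{-1/2}$, the quadratic form in $\boldsymbol{v}_q$ collapses to $\|\boldsymbol{v}_q\|_2^2=1$, leaving
$\|\widetilde{\nabla}\overline{F}_q(\boldsymbol{x}_q)\|_{\boldsymbol{x}_q,*}^2 = \Bigl(\tfrac{(1-1/e)d}{\delta z_q}\Bigr)^2 f_{t_q}(\boldsymbol{y}_{t_q})^2.$
Using $f_{t_q}(\boldsymbol{0})=0$, the $L_1$-Lipschitz property, and $\boldsymbol{y}_{t_q}=z_q(\boldsymbol{x}_q+\delta\boldsymbol{H}_q\boldsymbol{v}_q)$ gives $|f_{t_q}(\boldsymbol{y}_{t_q})|\leq L_1 z_q\|\boldsymbol{x}_q+\delta\boldsymbol{H}_q\boldsymbol{v}_q\|\leq L_1 z_q D$, where the last inequality uses \assump{convex body} together with the fact that, for $\delta\leq 1$, the point $\boldsymbol{x}_q+\delta\boldsymbol{H}_q\boldsymbol{v}_q$ lies in the Dikin ellipsoid and hence in $\mathcal{K}\subseteq D\mathbb{B}_d$. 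The $z_q$ cancels and the stated bound follows.

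The only mildly subtle step is the ``Dikin-ellipsoid-inside-$\mathcal{K}$'' observation that lets me control $\|\boldsymbol{x}_q+\delta\boldsymbol{H}_q\boldsymbol{v}_q\|$ by $D$; everything else is direct manipulation. Note in particular that the $1/z_q$ singularity at $z_q=0$ is harmless because (a) $\Pr(z_q=0)=0$ and (b) even pointwise, $f_{t_q}(\boldsymbol{y}_{t_q})$ vanishes at least linearly in $z_q$ by Lipschitzness and $f_{t_q}(\boldsymbol{0})=0$, so the estimator and its norm remain bounded almost surely.
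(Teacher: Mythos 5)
Your proposal is correct and follows essentially the same route as the paper: part (i) is an iterated-expectation argument combining the ellipsoid estimator lemma with the fact that the density of $z_q$ cancels against the $1/z_q$ factor to reproduce the $e^{z-1}/z$ weight in $\overline{F}_q$, and part (ii) is the same direct computation in which $\boldsymbol{H}_q^{-1}(\nabla^2\Phi(\boldsymbol{x}_q))^{-1}\boldsymbol{H}_q^{-1}=I$ collapses the dual norm and $f_{t_q}(\boldsymbol{0})=0$ plus Lipschitzness cancels the $1/z_q$ singularity. The only (immaterial) difference is the order of conditioning in (i) — you integrate out $t_q,z_q$ first and apply the ellipsoid lemma once to $\overline{F}_q$, whereas the paper conditions on $t_q,z_q$, applies the lemma to the scaled function $f_{t_q,z_q}$, and then integrates; the two are identical by Fubini.
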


\begin{proof}[Proof]
    \begin{itemize}
        \item[(i)] Let $\boldsymbol{H}=\delta \boldsymbol{H}_q$ in \lem{ellipsoid estimator}, let $f_{t_q,z_q}(\boldsymbol{x}) \triangleq f_{t_q}(z_q\cdot \boldsymbol{x})$,we have 
        \[\E \left[\widetilde{\nabla}\overline{F}_q(\boldsymbol{x}_q)\mid \mathcal{H}_{q-1},t_q,z_q\right]=\frac{1-1/e}{z_q}\nabla f^{\delta\boldsymbol{H}_q}_{t_q,z_q}(\boldsymbol{x}_q)\]
        Thus,
        \begin{align*}
            \E \left[\widetilde{\nabla}\overline{F}_q(\boldsymbol{x}_q)\mid \mathcal{H}_{q-1}\right]& = \sum_{t_q=(q-1)L+1}^{qL}\int_{0}^1 \Pr\left(t_q,z_q\mid \mathcal{H}_{q-1}\right)\frac{1-1/e}{z_q}\nabla f^{\delta\boldsymbol{H}_q}_{t_q,z_q}(\boldsymbol{x}_q)dz_q
            \\&=\sum_{t_q=(q-1)L+1}^{qL}\int_{0}^1 \frac{e^{z_q-1}}{(1-1/e)L}\frac{1-1/e}{z_q}\nabla f^{\delta\boldsymbol{H}_q}_{t_q,z_q}(\boldsymbol{x}_q)dz_q
            \\&=\sum_{t_q=(q-1)L+1}^{qL}\int_{0}^1 \frac{e^{z_q-1}}{z_qL}\nabla f^{\delta\boldsymbol{H}_q}_{t_q,z_q}(\boldsymbol{x}_q)dz_q
            \\&=\nabla \overline{F}_q^{\delta\boldsymbol{H}_q}(\boldsymbol{x}_q)
        \end{align*}
    
        \item[(ii)] 
        \begin{align*}
            \|\widetilde{\nabla}\overline{F}_q(\boldsymbol{x}_q)\|_{\boldsymbol{x}_q,*}^2&=(1-1/e)^2\frac{d^2}{\delta^2 z_q^2} f_{t_q}^2(z_q\cdot \boldsymbol{x}_q+\delta z_q\cdot\boldsymbol{H_q}\boldsymbol{v}_q) \boldsymbol{v}_q^T\boldsymbol{H}_q^{-1}\left(\nabla^2\Phi(\boldsymbol{x}_q)\right)^{-1}\boldsymbol{H}_q^{-1}\boldsymbol{v}_q
            \\&\leq (1-1/e)^2\frac{d^2}{\delta^2 z_q^2}z_q^2 L_1^2 \|\boldsymbol{x}_q + \delta\boldsymbol{H}_q \boldsymbol{v}_q\|^2\|\boldsymbol{v}_q\|^2
            \\&\leq \frac{(1-e)^2d^2 L_1^2 D^2}{\delta^2}
        \end{align*}
        The first inequality is because $f_{t_q}$ is $L_1$-lipschitz continuous.
    \end{itemize}
\end{proof}

Intuitively, we can control the regret of $\{\boldsymbol{x}_q\}_{q=1}^Q$ w.r.t.~the linear function sequence $\{\langle \cdot ,\nabla \overline{F}_q^{\delta\boldsymbol{H}_q}(\boldsymbol{x}_q) \rangle \}_{q=1}^Q$ by using \thm{AHR}. In \lem{submodular smooth}, We proved that $\overline{f}_q^{\delta\boldsymbol{H}_q}$ is also DR-submodular. Since $\overline{F}_q^{\delta\boldsymbol{H}_q}(\boldsymbol{x}_q)$ is the auxiliary function of $\overline{f}_q^{\delta\boldsymbol{H}_q}$, this allows us to control the $(1-1/e)$-regret of $\{\boldsymbol{x}_q\}$ w.r.t.~$\{\overline{f}_q^{\delta \boldsymbol{H}_q}\}$ by using \lem{auxiliary function}. A key observation here is $\|\overline{f}_q^{\delta \boldsymbol{H}_q}-\overline{f}_q\|_{\infty}\leq O(\delta^2)$ assuming the online functions are smooth, which means we can bound the $(1-1/e)$-regret of $\{\boldsymbol{x}_q\}$ w.r.t.~$\{\overline{f}_q\}$ in term of the $(1-1/e)$-regret w.r.t.~$\{\overline{f}_q^{\delta \boldsymbol{H}_q}\}$ with an extra $O(\delta^2)$ additive term. Previous works \cite{zhang2019online,niazadeh2021online} use the FKM estimator proposed in \cite{flaxman2005online}, where the sample sphere is fixed(which can be seen as a special case of the ellipsoid estimator when $\boldsymbol{H}_q=I$), to prevent the sample action jump out $\mathcal{K}$, they must run their algorithm on a smaller interior $\mathcal{K}_{\delta}$ which is $\delta$-far from $\partial \mathcal{K}$. So this only guarantees the regret competing with the point in $\mathcal{K}_{\delta}$, this adds an $O(\delta)$ term to the overall regret, which is bigger than $O(\delta^2)$ since the $\delta$ is set to $o(1)$ latter.

With this improved gradient estimator and non-oblivious technique, we prove a $\widetilde{O}(T^{3/4})$ $(1-1/e)$-regret of $\mathtt{BanditDRSM}$.

\begin{customthm}{4.1}[restatement]
        Set $\eta = D^{-2}d^{-1}T^{-1/2}$, $\delta = d^{1/4}T^{-1/8}$, $L=d^{-1/2}T^{1/4}$, $Q=T/L=d^{1/2}T^{3/4}$ in \algo{DRSM}. If $\Phi$ is a $\nu$-self concordant function of $\mathcal{K}$, then the expected $(1-1/e)$-regret of \algo{DRSM} can be bounded as
    \begin{align*}
        \mathcal{R}_{1-1/e}(T)&\leq O(\nu d^{1/2}T^{3/4}\log(T))
    \end{align*}
\end{customthm}
\begin{proof}[Proof of \thm{DRSM}]
     Let $\hat{\boldsymbol{x}}^* = \argmin_{\boldsymbol{x}\in \mathcal{K}_{\gamma,\boldsymbol{x}_1}} \|\boldsymbol{x}^*-\boldsymbol{x}\|$, where $\boldsymbol{x}^* = \argmax_{\boldsymbol{x}\in\mathcal{K}}\sum_{t=1}^T f_t(\boldsymbol{x}^*)$. Let $\boldsymbol{g}_q=\nabla \overline{F}_q^{\delta \boldsymbol{H}_q}$, $\widetilde{\boldsymbol{g}}_q=\widetilde{\nabla} \overline{F}_q(\boldsymbol{x}_q)$, $\boldsymbol{y}=\hat{\boldsymbol{x}}^*$  in \thm{AHR}. Since we proved $\widetilde{\nabla} \overline{F}_q (\boldsymbol{x}_q)$ is an unbiased estimate of $\nabla \overline{F}_q^{\delta \boldsymbol{H}_q}(\boldsymbol{x}_{q})$ in \lem{estimator}, we have 
    \begin{align*}
        \sum_{q=1}^Q \E \left[\langle \nabla \overline{F}_q^{\delta \boldsymbol{H}_{q}} (\boldsymbol{x}_q), \boldsymbol{\hat{\boldsymbol{x}}^*}-\boldsymbol{x}_q\rangle\mid \mathcal{H}_{q-1}\right] &\leq \eta \sum_{q=1}^Q \E \left[\|\widetilde{\nabla}\overline{F}_q(\boldsymbol{x}_q)\|_{\Phi,\boldsymbol{x}_q,*}^2\mid \mathcal{ H}_{q-1}\right] + \frac{\Phi(\hat{\boldsymbol{x}}^*) - \Phi(\boldsymbol{x}_1)}{\eta}
        \\&\leq \eta\sum_{q=1}^Q \frac{(1-e)^2d^2 L_1^2 D^2}{\delta^2}+\frac{\Phi(\hat{\boldsymbol{x}}^*) - \Phi(\boldsymbol{x}_1)}{\eta}
        \\&\leq \frac{(1-e)^2\eta Q d^2 L_1^2 D^2}{\delta^2}+\frac{\nu \log (\frac{1}{1-(1+\gamma)^{-1}})}{\eta}
    \end{align*}
    Since $\nabla \overline{F}_q^{\delta \boldsymbol{H}_q} (\boldsymbol{x})$ is monotone DR-submodular due to \lem{submodular smooth}, and it is the auxiliary function of $\overline{f}_q^{\delta\boldsymbol{H}_q}(\boldsymbol{x})$, by \lem{auxiliary function}, we have  
    \[ \sum_{q=1}^Q \E \left[\langle \nabla \overline{F}_q^{\delta \boldsymbol{H}_q} (\boldsymbol{x}_q), \hat{\boldsymbol{x}}^*-\boldsymbol{x}_q\rangle\mid \mathcal{H}_{q-1}\right]\geq \sum_{q=1}^Q \E\left[(1-1/e)\overline{f}_q^{\delta\boldsymbol{H}_q}(\hat{\boldsymbol{x}}^*)-\overline{f}_q^{\delta\boldsymbol{H}_q}(\boldsymbol{x}_q)\mid \mathcal{H}_{q-1}\right]\]
    The RHS can be further decomposed into several terms,
    \begin{equation}\label{eq:ABC}
        \begin{aligned}
        &\sum_{q=1}^Q \E\left[(1-1/e)\overline{f}_q^{\delta\boldsymbol{H}_q}(\hat{\boldsymbol{x}}^*)-\overline{f}_q^{\delta\boldsymbol{H}_q}(\boldsymbol{x}_q)\mid \mathcal{H}_{q-1}\right]
        \\&=\underbrace{\sum_{q=1}^Q \E\left[(1-1/e)\overline{f}_q^{\delta\boldsymbol{H}_q}(\hat{\boldsymbol{x}}^*)-(1-1/e)\overline{f}_q^{\delta\boldsymbol{H}_q}(\boldsymbol{x}^*)\mid \mathcal{H}_{q-1}\right]}_{(A)} +\underbrace{\sum_{q=1}^Q\E\left[(1-1/e)\overline{f}_q^{\delta\boldsymbol{H}_q}(\boldsymbol{x}^*)-(1-1/e)\overline{f}_q(\boldsymbol{x}^*)\mid \mathcal{H}_{q-1}\right]}_{(B)}
        \\&\quad \quad +\sum_{q=1}^Q \E\left[(1-1/e)\overline{f}_q(\boldsymbol{x}^*)-\overline{f}_q(\boldsymbol{x}_q)\mid \mathcal{H}_{q-1}\right] + \underbrace{\sum_{q=1}^Q \E\left[\overline{f}_q(\boldsymbol{x}_q)-\overline{f}_q^{\delta \boldsymbol{H}_q}(\boldsymbol{x}_q)\mid \mathcal{H}_{q-1}\right]}_{(C)}
        \end{aligned}
    \end{equation}

    \noindent \textbf{Bounding $(A)$}: Since $f_t(\boldsymbol{x})$ is $L_1$-lipschitz continuous for any $t$, $\overline{f}_q$ is also $L_1$-lipschitz continuous by \lem{average function}, thus $\overline{f}_q^{\delta \boldsymbol{H}_q}$ is $L_1$-lipschitz continuous by \lem{submodular smooth}. Since $\|\hat{\boldsymbol{x}}^*-\boldsymbol{x}^*\|\leq \gamma D$ by \lem{Minkowski projection},
    \begin{equation}\label{eq:A}
        \begin{aligned}
        \sum_{q=1}^Q \E\left[(1-1/e)\overline{f}_q^{\delta\boldsymbol{H}_q}(\hat{\boldsymbol{x}}^*)-(1-1/e)\overline{f}_q^{\delta\boldsymbol{H}_q}(\boldsymbol{x}^*)\mid \mathcal{H}_{q-1}\right]
        &\geq -\sum_{q=1}^Q (1-1/e)\E\left[|\overline{f}_q^{\delta\boldsymbol{H}_q}(\hat{\boldsymbol{x}}^*)-\overline{f}_q^{\delta\boldsymbol{H}_q}(\boldsymbol{x}^*)|\mid \mathcal{H}_{q-1}\right]
        \\&\geq -\sum_{q=1}^Q (1-1/e)L_1 \gamma D=-(1-1/e)L_1 \gamma D Q
        \end{aligned}
    \end{equation}
    
    \noindent \textbf{Bounding $(B)$}:  Since $f_t(\boldsymbol{x})$ is $L_2$-smooth for any $t$, by \lem{average function} and \lem{submodular smooth}, $\overline{f}_q^{\delta \boldsymbol{H}_q}$ is $L_2$-smooth. Thus,
    \begin{align*}
        \overline{f}_q^{\delta \boldsymbol{H}_q}(\boldsymbol{x}^*)-\overline{f}_q(\boldsymbol{x}^*)&=\frac{1}{\mbox{Vol}(\mathbb{B}_d)}\int_{\boldsymbol{v}\in\mathbb{B}_d} \overline{f}_q(\boldsymbol{x}^*+\delta \boldsymbol{H}_q\boldsymbol{v})-\overline{f}_q(\boldsymbol{x}^*)d\boldsymbol{v}
        \\&\geq \frac{1}{\mbox{Vol}(\mathbb{B}_d)}\int_{\boldsymbol{v}\in\mathbb{B}_d} \langle \nabla \overline{f}_q(\boldsymbol{x}^*),\delta\boldsymbol{H}_q\boldsymbol{v}\rangle -\frac{L_2}{2}\|\delta \boldsymbol{H}_q\boldsymbol{v}\|^2 d\boldsymbol{v}
        \\&=\frac{1}{\mbox{Vol}(\mathbb{B}_d)}\left\langle \nabla \overline{f}_q(\boldsymbol{x}^*),\delta\boldsymbol{H}_q\int_{v\in\mathbb{B}_d}\boldsymbol{v}d\boldsymbol{v}\right\rangle - \frac{1}{\mbox{Vol}(\mathbb{B}_d)}\int_{v\in\mathbb{B}_d}\frac{L_2}{2}\|\delta \boldsymbol{H}_q\boldsymbol{v}\|^2 d\boldsymbol{v}
        \\&\geq -\frac{1}{\mbox{Vol}(\mathbb{B}_d)}\int_{\boldsymbol{v}\in\mathbb{B}_d}\frac{L_2}{2}\delta^2 D^2 d\boldsymbol{v}
        \\&\geq -\frac{L_2\delta^2 D^2}{2}
    \end{align*}
    Therefore,
    \begin{align}\label{eq:B}
        \sum_{q=1}^Q\E\left[(1-1/e)\overline{f}_q^{\delta\boldsymbol{H}_q}(\hat{\boldsymbol{x}}^*)-(1-1/e)\overline{f}_q(\hat{\boldsymbol{x}}^*)\mid \mathcal{H}_{q-1}\right]\geq -\frac{(1-1/e)L_2\delta^2 D^2 Q}{2}
    \end{align}
    \noindent \textbf{Bounding $(C)$}: Similarly,
    \begin{align*}
        \overline{f}_q(\boldsymbol{x}_q)-\overline{f}_q^{\delta \boldsymbol{H}_q}(\boldsymbol{x}_q) &= \frac{1}{\mbox{Vol}(\mathbb{B}_d)}\int_{\boldsymbol{v}\in\mathbb{B}_d} \overline{f}_q(\boldsymbol{x}_q)-\overline{f}_q(\boldsymbol{x}_q+\delta \boldsymbol{H}_q \boldsymbol{v}) d\boldsymbol{v}
        \\&\geq  \frac{1}{\mbox{Vol}(\mathbb{B}_d)}\int_{\boldsymbol{v}\in\mathbb{B}_d} \left\langle \nabla \overline{f}_q(\boldsymbol{x}_q),\delta \boldsymbol{H}_q\boldsymbol{v}\right\rangle-\frac{L_2}{2}\|\delta \boldsymbol{H}_q\boldsymbol{v}\|^2 d\boldsymbol{v}
        \\&\geq -\frac{L_2\delta^2 D^2}{2}
    \end{align*}
    Therefore,
    \begin{align}\label{eq:C}
        \sum_{q=1}^Q \E\left[\overline{f}_q(\boldsymbol{x}_q)-\overline{f}_q^{\delta \boldsymbol{H}_q}(\boldsymbol{x}_q)\mid \mathcal{H}_{q-1}\right]\geq -\frac{L_2\delta^2 D^2Q}{2}
    \end{align}
    Put \eq{A},\eq{B},\eq{C} in \eq{ABC} and rearrange it,
    \begin{align*} 
        &\sum_{q=1}^Q \E\left[(1-1/e)\overline{f}_q(\boldsymbol{x}^*)-\overline{f}_q(\boldsymbol{x}_q)\mid \mathcal{H}_{q-1}\right]
        \\&\leq  \sum_{q=1}^Q \E \left[\langle \nabla \overline{F}_q^{\delta \boldsymbol{H}_q} (\boldsymbol{x}_q), \hat{\boldsymbol{x}}^*-\boldsymbol{x}_q\rangle\mid \mathcal{H}_{q-1}\right] +(1-1/e)L_1 \gamma D Q+\frac{(1-1/e)L_2\delta^2 D^2 Q}{2} +\frac{L_2\delta^2 D^2 Q}{2}
        \\&\leq  \frac{(1-e)^2\eta Q d^2 L_1^2 D^2}{\delta^2}+\frac{\nu \log (\frac{1}{1-(1+\gamma)^{-1}})}{\eta}+(1-1/e)L_1 \gamma D Q +\frac{(2-1/e)L_2\delta^2 D^2 Q}{2}
    \end{align*}
    Then we bound the expected regret,
    \begin{align*}
        \mathcal{R}_{1-1/e}(T)&= \sum_{t=1}^T \E\left[(1-1/e)f_t(\boldsymbol{x}^*)-f_t(\boldsymbol{y}_t)\mid \mathcal{H}_{\lceil\frac{t}{L}\rceil-1}\right]
        \\&= \sum_{t=1}^T\E\left[(1-1/e)f_t(\boldsymbol{x}^*)-f_t(\boldsymbol{x}_{\lceil \frac{t}{L} \rceil})\mid \mathcal{H}_{\lceil\frac{t}{L}\rceil-1}\right]+\sum_{q=1}^Q \E\left[f_{t_q}(\boldsymbol{x}_{q})-f_{t_q}(\boldsymbol{y}_{t_q})\mid \mathcal{H}_{q-1}\right]
        \\&\leq L\sum_{q=1}^Q \E\left[(1-1/e)\overline{f}_q(\boldsymbol{x}^*)-\overline{f}_q(\boldsymbol{x}_q)\mid \mathcal{H}_{q-1}\right]+MQ
        \\&\leq\frac{(1-e)^2\eta LQ d^2 L_1^2 D^2}{\delta^2}+\frac{\nu L\log (\frac{1}{1-(1+\gamma)^{-1}})}{\eta}+(1-1/e)L_1 \gamma D LQ +\frac{(2-1/e)L_2\delta^2 D^2 LQ}{2} +MQ
        \\&= \frac{(1-e)^2\eta d^2 L_1^2 D^2 T}{\delta^2}+\frac{\nu \log (\frac{1}{1-(1+\gamma)^{-1}})L}{\eta}+(1-1/e)L_1 \gamma D T+\frac{(2-1/e)L_2\delta^2 D^2 T}{2}+MQ
    \end{align*}
    Set $\eta = D^{-2}d^{-1}T^{-1/2}$, $\delta = D^{-1/2}d^{1/4}T^{-1/8}$, $L=D^{-1}d^{-1/2}T^{1/4}$, $Q=T/L=Dd^{1/2}T^{3/4}$, $\gamma=\frac{1}{T}$
    \begin{align*}
        R_{1-1/e}(T)&\leq (1-e)^2  L_1^2 Dd^{1/2} T^{3/4}+\nu Dd^{1/2}T^{3/4}\log (T+1)+(1-1/e)L_1D \\&\quad\quad + \frac{(2-1/e)L_2\delta^2 D d^{1/2}T^{3/4}}{2}+MDd^{1/2}T^{3/4}
        \\&=O(\nu d^{1/2}T^{3/4}\log(T))
    \end{align*}
\end{proof}

The idea of using a self-concordant regularizer RFTL on smooth online functions is motivated by \cite{saha2011improved}. Where the authors studied the bandit convex optimization problem, and they find that RFTL with the self-concordant regularizer works well when the convex functions are smooth. We find this idea also works here in the bandit DR-submodular maximization problem.

\OnlyInFull{
\section{Missing Proofs in \sect{DRSM}}

\begin{customlem}{15}
    Let $\widetilde{\nabla}\overline{F}_q(\boldsymbol{x}_q)$ be defined as \eq{general estimator}. Assume $f_t$ for $t\in [(q-1)L+1,qL]$ is $L_1$-lipschitz, $f_t(\boldsymbol{0})=0$ and $\|\boldsymbol{x}+\delta \boldsymbol{H}_q \boldsymbol{v}\|\leq D$, then following holds,
    \begin{itemize}
        \item[(i)] $\E \left[\widetilde{\nabla}\overline{F}_q(\boldsymbol{x}_q)\mid \mathcal{H}_{q-1}\right]=\nabla \overline{F}_q^{\delta\boldsymbol{H}_q}(\boldsymbol{x}_q) $.
        \item[(ii)] $\|\widetilde{\nabla}\overline{F}_q(\boldsymbol{x}_q)\|_{\boldsymbol{x}_q,*}^2\leq \frac{(1-e)^2d^2 L_1^2 D^2}{\delta^2}$
    \end{itemize}
\end{customlem}
\begin{proof}[Proof of \lem{estimator}]
    \begin{itemize}
        \item[(i)] Let $\boldsymbol{H}=\delta \boldsymbol{H}_q$ in \lem{ellipsoid estimator}, let $f_{t_q,z_q}(\boldsymbol{x}) \triangleq f_{t_q}(z_q\cdot \boldsymbol{x})$,we have 
        \[\E \left[\widetilde{\nabla}\overline{F}_q(\boldsymbol{x}_q)\mid \mathcal{H}_{q-1},t_q,z_q\right]=(1-1/e)\nabla f^{\delta\boldsymbol{H}_q}_{t_q,z_q}(\boldsymbol{x}_q)\]
        Thus,
        \begin{align*}
            \E \left[\widetilde{\nabla}\overline{F}_q(\boldsymbol{x}_q)\mid \mathcal{H}\right]& = \sum_{t_q=(q-1)L+1}^{qL}\int_{z_q=0}^1 \Pr\left(t_q,z_q\mid \mathcal{H}\right)(1-1/e)\nabla f^{\delta\boldsymbol{H}_q}_{t_q,z_q}(\boldsymbol{x}_q)dz_q
            \\&=\sum_{t_q=(q-1)L+1}^{qL}\int_{z_q=0}^1 \frac{e^{z_q-1}}{(1-1/e)L}(1-1/e)\nabla f^{\delta\boldsymbol{H}_q}_{t_q,z_q}(\boldsymbol{x}_q)dz_q
            \\&=\nabla \overline{F}_q^{\delta\boldsymbol{H}_q}(\boldsymbol{x}_q)
        \end{align*}
    
        \item[(ii)] 
        \begin{align*}
            \|\widetilde{\nabla}\overline{F}_q(\boldsymbol{x}_q)\|_{\boldsymbol{x}_q,*}^2&=(1-1/e)^2\frac{d^2}{\delta^2 z_q^2} f_{t_q}^2(z_q\cdot \boldsymbol{x}_q+\delta z_q\cdot\boldsymbol{H_q}\boldsymbol{v}_q) \boldsymbol{v}_q^T\boldsymbol{H}_q^{-1}\left(\nabla^2\Phi(\boldsymbol{x}_q)\right)^{-1}\boldsymbol{H}_q^{-1}\boldsymbol{v}_q
            \\&\leq (1-1/e)^2\frac{d^2}{\delta^2 z_q^2}z_q^2 L_1^2 \|\boldsymbol{x}_q + \delta\boldsymbol{H}_q \boldsymbol{v}_q\|^2\|\boldsymbol{v}_q\|^2
            \\&\leq \frac{(1-e)^2d^2 L_1^2 D^2}{\delta^2}
        \end{align*}
        The first inequality is because $f_{t_q}$ is $L_1$-lipschitz continuous.
    \end{itemize}
\end{proof}

\begin{customthm}{16}
    Set $\eta = D^{-2}d^{-1}T^{-1/2}$, $\delta = D^{-1/2}d^{1/4}T^{-1/8}$, $L=D^{-1}d^{-1/2}T^{1/4}$, $Q=T/L=Dd^{1/2}T^{3/4}$ in \algo{DRSM}. If $\Phi$ is a $\nu$-self concordant function of $\mathcal{K}$, then the expected $(1-1/e)$-regret of \algo{DRSM} can be bounded as
    \begin{align*}
        \E\left[\mathcal{R}_{1-1/e}(T)\right]&\leq O(\nu Dd^{1/2}T^{3/4}\log(T))
    \end{align*}    
\end{customthm}
\begin{proof}[Proof of \thm{DRSM}]
     Let $\hat{\boldsymbol{x}}^* = \argmin_{\boldsymbol{x}\in \mathcal{K}_{\gamma,\boldsymbol{x}_1}} \|\boldsymbol{x}^*-\boldsymbol{x}\|$, where $\boldsymbol{x}^* = \argmax_{\boldsymbol{x}\in\mathcal{K}}\sum_{t=1}^T f_t(\boldsymbol{x}^*)$. Let $\boldsymbol{g}_q=-\nabla \overline{F}_q^{\delta \boldsymbol{H}_q}$, $\widetilde{\boldsymbol{g}}_q=-\widetilde{\nabla} \overline{F}_q(\boldsymbol{x}_q)$, $\boldsymbol{y}=\hat{\boldsymbol{x}}^*$  in \thm{AHR}. Since we proved $\widetilde{\nabla} \overline{F}_q (\boldsymbol{x}_q)$ is an unbiased estimate of $\nabla \overline{F}_q^{\delta \boldsymbol{H}_q}(\boldsymbol{x}_{q})$ in \lem{estimator}, we have 
    \begin{align*}
        \sum_{q=1}^Q \E \left[\langle \nabla \overline{F}_q^{\delta \boldsymbol{H}_q} (\boldsymbol{x}_q), \boldsymbol{\hat{\boldsymbol{x}}^*}-\boldsymbol{x}_q\rangle\mid \mathcal{H}_{q-1}\right] &\leq \eta \sum_{q=1}^Q \E \left[\|\widetilde{\nabla}\overline{F}_q(\boldsymbol{x}_q)\|_{\boldsymbol{x}_q,*}^2\mid \mathcal{ H}_q\right] + \frac{\Phi(\hat{\boldsymbol{x}}^*) - \Phi(\boldsymbol{x}_1)}{\eta}
        \\&\leq \eta\sum_{q=1}^Q \frac{(1-e)^2d^2 L_1^2 D^2}{\delta^2}+\frac{\Phi(\hat{\boldsymbol{x}}^*) - \Phi(\boldsymbol{x}_1)}{\eta}
        \\&\leq \frac{(1-e)^2\eta Q d^2 L_1^2 D^2}{\delta^2}+\frac{\nu \log (\frac{1}{1-(1+\gamma)^{-1}})}{\eta}
    \end{align*}
    Since $\nabla \overline{F}_q^{\delta \boldsymbol{H}_q} (\boldsymbol{x})$ is monotone DR-submodular due to \lem{submodular smooth}, and it is the auxiliary function of $\overline{f}_q^{\delta\boldsymbol{H}_q}(\boldsymbol{x})$, by \lem{auxiliary function}, we have  
    \[ \sum_{q=1}^Q \E \left[\langle \nabla \overline{F}_q^{\delta \boldsymbol{H}_q} (\boldsymbol{x}_q), \hat{\boldsymbol{x}}^*-\boldsymbol{x}_q\rangle\mid \mathcal{H}_{q-1}\right]\geq \sum_{q=1}^Q \E\left[(1-1/e)\overline{f}_q^{\delta\boldsymbol{H}_q}(\hat{\boldsymbol{x}}^*)-\overline{f}_q^{\delta\boldsymbol{H}_q}(\boldsymbol{x}_q)\mid \mathcal{H}_{q-1}\right]\]
    The RHS can be further decomposed into several terms,
    \begin{equation}\label{eq:ABC}
        \begin{aligned}
        &\sum_{q=1}^Q \E\left[(1-1/e)\overline{f}_q^{\delta\boldsymbol{H}_q}(\hat{\boldsymbol{x}}^*)-\overline{f}_q^{\delta\boldsymbol{H}_q}(\boldsymbol{x}_q)\mid \mathcal{H}_{q-1}\right]
        \\&=\underbrace{\sum_{q=1}^Q \E\left[(1-1/e)\overline{f}_q^{\delta\boldsymbol{H}_q}(\hat{\boldsymbol{x}}^*)-(1-1/e)\overline{f}_q^{\delta\boldsymbol{H}_q}(\boldsymbol{x}^*)\mid \mathcal{H}_{q-1}\right]}_{(A)} +\underbrace{\sum_{q=1}^Q\E\left[(1-1/e)\overline{f}_q^{\delta\boldsymbol{H}_q}(\boldsymbol{x}^*)-(1-1/e)\overline{f}_q(\boldsymbol{x}^*)\mid \mathcal{H}_{q-1}\right]}_{(B)}
        \\&\quad \quad +\sum_{q=1}^Q \E\left[(1-1/e)\overline{f}_q(\boldsymbol{x}^*)-\overline{f}_q(\boldsymbol{x}_q)\mid \mathcal{H}_{q-1}\right] + \underbrace{\sum_{q=1}^Q \E\left[\overline{f}_q(\boldsymbol{x}_q)-\overline{f}_q^{\delta \boldsymbol{H}_q}(\boldsymbol{x}_q)\mid \mathcal{H}_{q-1}\right]}_{(C)}
        \end{aligned}
    \end{equation}

    \noindent \textbf{Bounding $(A)$}: Since $f_t(\boldsymbol{x})$ is $L_1$-lipschitz continuous for any $t$, $\overline{f}_q$ is also $L_1$-lipschitz continuous by \lem{average function}, thus $\overline{f}_q^{\delta \boldsymbol{H}_q}$ is $L_1$-lipschitz continuous by \lem{submodular smooth}. Since $\|\hat{\boldsymbol{x}}^*-\boldsymbol{x}^*\|\leq \gamma D$ by \lem{Minkowski projection},
    \begin{equation}\label{eq:A}
        \begin{aligned}
        \sum_{q=1}^Q \E\left[(1-1/e)\overline{f}_q^{\delta\boldsymbol{H}_q}(\hat{\boldsymbol{x}}^*)-(1-1/e)\overline{f}_q^{\delta\boldsymbol{H}_q}(\boldsymbol{x}^*)\mid \mathcal{H}_{q-1}\right]
        &\geq -\sum_{q=1}^Q (1-1/e)\E\left[|\overline{f}_q^{\delta\boldsymbol{H}_q}(\hat{\boldsymbol{x}}^*)-\overline{f}_q^{\delta\boldsymbol{H}_q}(\boldsymbol{x}^*)|\mid \mathcal{H}_{q-1}\right]
        \\&\geq -\sum_{q=1}^Q (1-1/e)L_1 \gamma D=-(1-1/e)L_1 \gamma D Q
        \end{aligned}
    \end{equation}
    
    \noindent \textbf{Bounding $(B)$}:  Since $f_t(\boldsymbol{x})$ is $L_2$-smooth for any $t$, by \lem{average function} and \lem{submodular smooth}, $\overline{f}_q^{\delta \boldsymbol{H}_q}$ is $L_2$-smooth. Thus,
    \begin{align*}
        \overline{f}_q^{\delta \boldsymbol{H}_q}(\boldsymbol{x}^*)-\overline{f}_q(\boldsymbol{x}^*)&=\frac{1}{\mbox{Vol}(\mathbb{B}_d)}\int_{\boldsymbol{v}\in\mathbb{B}_d} \overline{f}_q(\boldsymbol{x}^*+\delta \boldsymbol{H}_q\boldsymbol{v})-\overline{f}_q(\boldsymbol{x}^*)d\boldsymbol{v}
        \\&\geq \frac{1}{\mbox{Vol}(\mathbb{B}_d)}\int_{\boldsymbol{v}\in\mathbb{B}_d} \langle \nabla \overline{f}_q(\boldsymbol{x}^*),\delta\boldsymbol{H}_q\boldsymbol{v}\rangle -\frac{L_2}{2}\|\delta \boldsymbol{H}_q\boldsymbol{v}\|^2 d\boldsymbol{v}
        \\&=\frac{1}{\mbox{Vol}(\mathbb{B}_d)}\left\langle \nabla \overline{f}_q(\boldsymbol{x}^*),\delta\boldsymbol{H}_q\int_{v\in\mathbb{B}_d}\boldsymbol{v}d\boldsymbol{v}\right\rangle - \frac{1}{\mbox{Vol}(\mathbb{B}_d)}\int_{v\in\mathbb{B}_d}\frac{L_2}{2}\|\delta \boldsymbol{H}_q\boldsymbol{v}\|^2 d\boldsymbol{v}
        \\&\geq -\frac{1}{\mbox{Vol}(\mathbb{B}_d)}\int_{\boldsymbol{v}\in\mathbb{B}_d}\frac{L_2}{2}\delta^2 D^2 d\boldsymbol{v}
        \\&\geq -\frac{L_2\delta^2 D^2}{2}
    \end{align*}
    Therefore,
    \begin{align}\label{eq:B}
        \sum_{q=1}^Q\E\left[(1-1/e)\overline{f}_q^{\delta\boldsymbol{H}_q}(\hat{\boldsymbol{x}}^*)-(1-1/e)\overline{f}_q(\hat{\boldsymbol{x}}^*)\mid \mathcal{H}_{q-1}\right]\geq -\frac{(1-1/e)L_2\delta^2 D^2 Q}{2}
    \end{align}
    \noindent \textbf{Bounding $(C)$}: Similarly,
    \begin{align*}
        \overline{f}_q(\boldsymbol{x}_q)-\overline{f}_q^{\delta \boldsymbol{H}_q}(\boldsymbol{x}_q) &= \frac{1}{\mbox{Vol}(\mathbb{B}_d)}\int_{\boldsymbol{v}\in\mathbb{B}_d} \overline{f}_q(\boldsymbol{x}_q)-\overline{f}_q(\boldsymbol{x}_q+\delta \boldsymbol{H}_q \boldsymbol{v}) d\boldsymbol{v}
        \\&\geq  \frac{1}{\mbox{Vol}(\mathbb{B}_d)}\int_{\boldsymbol{v}\in\mathbb{B}_d} \left\langle \nabla \overline{f}_q(\boldsymbol{x}_q),\delta \boldsymbol{H}_q\boldsymbol{v}\right\rangle-\frac{L_2}{2}\|\delta \boldsymbol{H}_q\boldsymbol{v}\|^2 d\boldsymbol{v}
        \\&\geq -\frac{L_2\delta^2 D^2}{2}
    \end{align*}
    Therefore,
    \begin{align}\label{eq:C}
        \sum_{q=1}^Q \E\left[\overline{f}_q(\boldsymbol{x}_q)-\overline{f}_q^{\delta \boldsymbol{H}_q}(\boldsymbol{x}_q)\mid \mathcal{H}_{q-1}\right]\geq -\frac{L_2\delta^2 D^2}{2}
    \end{align}
    Put \eq{A},\eq{B},\eq{C} in \eq{ABC} and rearrange it,
    \begin{align*} 
        &\sum_{q=1}^Q \E\left[(1-1/e)\overline{f}_q(\boldsymbol{x}^*)-\overline{f}_q(\boldsymbol{x}_q)\mid \mathcal{H}_{q-1}\right]
        \\&\leq  \sum_{q=1}^Q \E \left[\langle \nabla \overline{F}_q^{\delta \boldsymbol{H}_q} (\boldsymbol{x}_q), \hat{\boldsymbol{x}}^*-\boldsymbol{x}_q\rangle\mid \mathcal{H}_{q-1}\right] +(1-1/e)L_1 \gamma D Q+\frac{(1-1/e)L_2\delta^2 D^2 Q}{2} +\frac{L_2\delta^2 D^2 Q}{2}
        \\&\leq  \frac{(1-e)^2\eta Q d^2 L_1^2 D^2}{\delta^2}+\frac{\nu \log (\frac{1}{1-(1+\gamma)^{-1}})}{\eta}+(1-1/e)L_1 \gamma D Q +\frac{(2-1/e)L_2\delta^2 D^2 Q}{2}
    \end{align*}
    Then we bound the expected regret,
    \begin{align*}
        R_{1-1/e}(T)&\leq \sum_{t=1}^T \E\left[(1-1/e)f_t(\boldsymbol{x}^*)-f_t(\boldsymbol{y}_t)\mid \mathcal{H}_{\lceil\frac{t}{L}\rceil-1}\right]
        \\&= \sum_{t=1}^T\E\left[(1-1/e)f_t(\boldsymbol{x}^*)-f_t(\boldsymbol{x}_{\lceil \frac{t}{L} \rceil})\mid \mathcal{H}_{\lceil\frac{t}{L}\rceil-1}\right]+\sum_{q=1}^Q \E\left[f_{t_q}(\boldsymbol{x}_{q})-f_{t_q}(\boldsymbol{y}_{t_q})\mid \mathcal{H}_{q-1}\right]
        \\&\leq L\sum_{q=1}^Q \E\left[(1-1/e)\overline{f}_q(\boldsymbol{x}^*)-\overline{f}_q(\boldsymbol{x}_q)\mid \mathcal{H}_{q-1}\right]+MQ
        \\&\leq\frac{(1-e)^2\eta LQ d^2 L_1^2 D^2}{\delta^2}+\frac{\nu L\log (\frac{1}{1-(1+\gamma)^{-1}})}{\eta}+(1-1/e)L_1 \gamma D LQ +\frac{(2-1/e)L_2\delta^2 D^2 LQ}{2} +MQ
        \\&= \frac{(1-e)^2\eta d^2 L_1^2 D^2 T}{\delta^2}+\frac{\nu \log (\frac{1}{1-(1+\gamma)^{-1}})L}{\eta}+(1-1/e)L_1 \gamma D T+\frac{(2-1/e)L_2\delta^2 D^2 T}{2}+MQ
    \end{align*}
    Set $\eta = D^{-2}d^{-1}T^{-1/2}$, $\delta = D^{-1/2}d^{1/4}T^{-1/8}$, $L=D^{-1}d^{-1/2}T^{1/4}$, $Q=T/L=Dd^{1/2}T^{3/4}$, $\gamma=\frac{1}{T}$
    \begin{align*}
        R_{1-1/e}(T)&\leq (1-e)^2  L_1^2 Dd^{1/2} T^{3/4}+\nu Dd^{1/2}T^{3/4}\log (T)+(1-1/e)L_1D \\&\quad\quad + \frac{(2-1/e)L_2\delta^2 D d^{1/2}T^{3/4}}{2}+MDd^{1/2}T^{3/4}
        \\&=O(Dd^{1/2}T^{3/4}\log(T))
    \end{align*}
\end{proof}
}

\section{Self-Concordant Barrier of Product Simplexes}
\label{append:self concordant}
In this section, we give a self-concordant barrier for the product simplex, which is a cartesian product of several simplexes. 

Let $\mathcal{K}$ be the product of $n$ simplexes, and their dimensions are $d_1,d_2,\ldots,d_n$ respectively. We write $\mathcal{K}$ as
\[\mathcal{K}=\prod_{i=1}^n \Delta_{d_i}.\]
For $\boldsymbol{x}\in\mathcal{K}$, we represent it as $\boldsymbol{x}=(x_{1,1},x_{1,2},\ldots,x_{1,d_1},x_{2,1},\ldots,x_{2,d_2},\ldots,x_{n,d_n})$. $\boldsymbol{x}\in \mathcal{K}$ iff 
\[
    \left\{
    \begin{aligned}
    &x_{i,j}\geq 0, \quad \quad &\forall 1\leq i\leq n \mbox{ and } 1\leq j\leq d_i
    \\ &\sum_{j=1}^{d_i} x_{i,j}\leq 1, & \forall 1\leq i\leq n
    \end{aligned}
    \right.
\]
Define the function $\Phi:\mbox{int}(\mathcal{K})\rightarrow \mathbb{R}$,
\begin{align*}
    \Phi(\boldsymbol{x}) = -\sum_{i=1}^n \log(1-\vec{1}^T_{d_i}\cdot \boldsymbol{x}_i) - \sum_{i=1}^n\sum_{j=1}^{d_i} \log (x_{i,j}).
\end{align*}
Here $\vec{1}_{d_i}={\underbrace{(1,1,\ldots,1)}_{d_i}}^T$, $\boldsymbol{x}_i=(x_{i,1},x_{i,2},\ldots,x_{i,d_i})$.
We prove that $\Phi$ is a $n$-self-concordant barrier of $\mathcal{K}$.
\begin{lemma}
\label{lem:self concordant}
    $\Phi(\boldsymbol{x})$ is a $\sum_{i=1}^n (d_i+1)$-self-concordant barrier of $\mathcal{K}$.
\end{lemma}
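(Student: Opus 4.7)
The plan is to invoke two well-known facts from the self-concordant barrier literature and combine them. First, I would observe that $\Phi$ decomposes as a sum over the independent blocks of coordinates,
\[
\Phi(\boldsymbol{x}) \;=\; \sum_{i=1}^n \Phi_i(\boldsymbol{x}_i), \qquad \Phi_i(\boldsymbol{x}_i)\;:=\;-\log\bigl(1-\vec{1}_{d_i}^\top \boldsymbol{x}_i\bigr)\;-\;\sum_{j=1}^{d_i}\log(x_{i,j}),
\]
where the $i$-th summand depends only on the $i$-th block $\boldsymbol{x}_i \in \mathbb{R}^{d_i}$, and $\mathcal{K} = \prod_{i=1}^n \Delta_{d_i}$ is a Cartesian product.

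The first fact I would cite is that $\Phi_i$ is the standard logarithmic barrier for the simplex $\Delta_{d_i}$, and it is known (\emph{e.g.}, \citet{nesterov1994interior}) to be a $(d_i+1)$-self-concordant barrier: each of the $d_i$ terms $-\log(x_{i,j})$ contributes $1$ to the barrier parameter (as the logarithmic barrier of a halfspace), and the slack term $-\log(1-\vec{1}_{d_i}^\top \boldsymbol{x}_i)$ contributes another $1$, for a total of $d_i+1$. Since this is standard, I would state it as a citation rather than re-derive it; if needed, verification reduces to checking the two defining inequalities in \defi{self concordant} for $\Phi_i$, which follow from the fact that $-\log$ is a $1$-self-concordant barrier of $\mathbb{R}_+$ and that the class is preserved under pullback by affine maps (here $\boldsymbol{x}_i \mapsto 1-\vec{1}_{d_i}^\top \boldsymbol{x}_i$ and $\boldsymbol{x}_i\mapsto x_{i,j}$) together with additivity.

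The second fact is the \emph{product rule} for self-concordant barriers: if $\Phi_i$ is a $\nu_i$-self-concordant barrier of $\mathcal{K}_i\subseteq\mathbb{R}^{d_i}$ for $i=1,\ldots,n$, then $\Phi(\boldsymbol{x})=\sum_{i=1}^n \Phi_i(\boldsymbol{x}_i)$ is a $\bigl(\sum_{i=1}^n \nu_i\bigr)$-self-concordant barrier of $\prod_{i=1}^n \mathcal{K}_i$. This is because both defining inequalities of \defi{self concordant} are \emph{block-diagonal}: for $\boldsymbol{h}=(\boldsymbol{h}_1,\ldots,\boldsymbol{h}_n)$ one has $\nabla^2\Phi(\boldsymbol{x})[\boldsymbol{h},\boldsymbol{h}]=\sum_i \nabla^2\Phi_i(\boldsymbol{x}_i)[\boldsymbol{h}_i,\boldsymbol{h}_i]$ and similarly for the third differential and the gradient. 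The cubic inequality follows from summing $|\nabla^3\Phi_i[\boldsymbol{h}_i,\boldsymbol{h}_i,\boldsymbol{h}_i]|\le 2(\nabla^2\Phi_i[\boldsymbol{h}_i,\boldsymbol{h}_i])^{3/2}$ and the super-additivity $\sum_i a_i^{3/2}\le \bigl(\sum_i a_i\bigr)^{3/2}$ for $a_i\ge 0$. The gradient inequality follows from Cauchy--Schwarz: $|\nabla\Phi[\boldsymbol{h}]|\le\sum_i \sqrt{\nu_i}\sqrt{\nabla^2\Phi_i[\boldsymbol{h}_i,\boldsymbol{h}_i]}\le\sqrt{\sum_i\nu_i}\sqrt{\nabla^2\Phi[\boldsymbol{h},\boldsymbol{h}]}$.

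Combining the two facts with $\nu_i=d_i+1$ yields that $\Phi$ is a $\sum_{i=1}^n(d_i+1)$-self-concordant barrier of $\mathcal{K}$, as claimed. I do not anticipate any real obstacle: everything reduces to two textbook properties (simplex barrier parameter and product rule), and the only thing to be careful about is the correct accounting of the barrier parameter of a single simplex, namely $d_i+1$ rather than $d_i$, because the slack inequality $\vec{1}_{d_i}^\top\boldsymbol{x}_i\le 1$ contributes its own unit to $\nu$.
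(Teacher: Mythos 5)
Your proposal is correct, but it reaches the conclusion by a different route than the paper. The paper proves \lem{self concordant} entirely from scratch: it explicitly computes the gradient, Hessian, and third differential of $\Phi$ over all coordinates simultaneously, and then verifies both inequalities of \defi{self concordant} by hand --- the cubic inequality via an expansion that amounts to the superadditivity of $t\mapsto t^{3/2}$, and the gradient inequality via Cauchy--Schwarz with the factor $\sqrt{\sum_i(d_i+1)}$ counting the total number of logarithmic terms. Your argument modularizes exactly these two steps into citable facts: the $(d_i+1)$-self-concordance of the standard logarithmic barrier of a single simplex (itself a consequence of the $1$-self-concordance of $-\log$ on a halfspace, affine invariance, and additivity), and the additivity of barrier parameters under sums over independent blocks, whose proof uses precisely the same superadditivity and Cauchy--Schwarz inequalities that appear inside the paper's monolithic computation. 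Both arguments are sound and the accounting of the parameter (the slack term $-\log(1-\vec{1}_{d_i}^\top\boldsymbol{x}_i)$ contributing its own unit, giving $d_i+1$ rather than $d_i$ per block) agrees. What your approach buys is brevity and reusability --- it would handle any product of sets with known barriers, not just simplexes --- at the cost of relying on external references; what the paper's approach buys is self-containment, which is presumably why the authors chose to write out the full calculation in the appendix.
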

\begin{proof}
    It's easy to see that $\Phi(\boldsymbol{x})$ is three-times continuously differentiable and approaches infinity alone any sequence of points approaching the boundary of $\mathcal{K}$. We first calculate the gradient and the hessian matrix of $\Phi$.

    \begin{align*}
        \frac{\partial \Phi}{\partial x_{i,j}}(\boldsymbol{x}) &= -\sum_{i=1}^n \frac{\partial\log(1-\vec{1}^T_{d_i}\cdot \boldsymbol{x}_i)}{\partial x_{i,j}} - \sum_{i=1}^n\sum_{j=1}^{d_i} \frac{\partial \log (x_{i,j})}{\partial x_{i,j}}
        \\& = \frac{1}{1-\vec{1}^T\cdot\boldsymbol{x}_{i}} - \frac{1}{x_{i,j}}
    \end{align*}

    \begin{align*}
        \frac{\partial^2 \Phi(\boldsymbol{x})}{\partial x_{i_1,j_1}\partial x_{i_2,j_2}}(\boldsymbol{x}) &= \frac{\partial (1-\vec{1}^T\cdot\boldsymbol{x}_{i_1})^{-1}}{\partial x_{i_1,j_1}}-\frac{\partial x_{i_2,j_2}^{-1}}{\partial x_{i_1,j_1}} 
        \\&= \frac{1}{(1-\vec{1}^T\cdot\boldsymbol{x}_{i_1})^2} \mathbb{I}[i_1=i_2] + \frac{1}{x^2_{i_1,j_1}}\mathbb{I}[i_1=i_2,j_1=j_2]
    \end{align*}
    For any direction $\boldsymbol{h}= (h_{1,1},\ldots,h_{1,d_1},h_{2,1},\ldots,h_{n,d_n})^T$, 
    \begin{align*}
        \boldsymbol{h}^T \nabla^2 \Phi(\boldsymbol{x})\boldsymbol{h}&=\sum_{i_1=1}^n\sum_{j_1=1}^{d_i}\sum_{i_2=1}^n\sum_{j_2=1}^{d_i}h_{i_1,j_1}h_{i_2,j_2}\frac{\partial^2 \Phi(\boldsymbol{x})}{\partial x_{i_1,j_1}\partial x_{i_2,j_2}}(\boldsymbol{x})
        \\&=\sum_{i_1=1}^n\sum_{j_1=1}^{d_i}\sum_{i_2=1}^n\sum_{j_2=1}^{d_i}\left(\frac{h_{i_1,j_1}h_{i_2,j_2}}{(1-\vec{1}^T\cdot\boldsymbol{x}_{i_1})^2} \mathbb{I}[i_1=i_2] + \frac{h_{i_1,j_1}h_{i_2,j_2}}{x^2_{i_1,j_1}}\mathbb{I}[i_1=i_2,j_1=j_2]\right)
        \\&=\sum_{i=1}^n \frac{(\sum_{j=1}^{d_i}h_{i,j})^2}{(1-\vec{1}^T\cdot\boldsymbol{x}_{i_1})^2}+\sum_{i=1}^n\sum_{j=1}^{d_i}\frac{h_{i,j}^2}{x_{i,j}^2}\geq 0
    \end{align*}
    Therefore, $\Phi$ is convex. Next, we check the condition $2$ of \defi{self concordant}. Let $\boldsymbol{h}=(h_{1,1},h_{1,d_1},h_{2,1},\ldots,h_{2,d_2},\ldots,h_{n,d_n})$, $\boldsymbol{h}_i = (h_{i,1},\ldots,h_{i,d_i})$. Then,
    \begin{align*}
        &\nabla^3\Phi(\boldsymbol{x})[\boldsymbol{h},\boldsymbol{h},\boldsymbol{h}]
        \\&=\left.\frac{\partial^3 }{\partial t_1\partial t_2 \partial t_3}\left(-\sum_{i=1}^n \log(1-\vec{1}^T_{d_i}\cdot \boldsymbol{x}_i-t_1\vec{1}^T_{d_i}\cdot \boldsymbol{h}_i-t_2\vec{1}^T_{d_i}\cdot\boldsymbol{h}_i-t_3\vec{1}^T_{d_i}\cdot\boldsymbol{h}_i) - \sum_{i=1}^n\sum_{j=1}^{d_i} \log (x_{i,j}+(t_1+t_2+t_3)h_{i,j})\right)\right|_{t_1=t_2=t_3=0}
        \\&=\left.\sum_{i=1}^n\frac{2(\vec{1}^T_{d_i}\cdot\boldsymbol{h}_i)^3}{(1-\vec{1}^T_{d_i}\cdot \boldsymbol{x}_i-t_1\vec{1}^T_{d_i}\cdot \boldsymbol{h}_i-t_2\vec{1}^T_{d_i}\cdot\boldsymbol{h}_i-t_3\vec{1}^T_{d_i}\cdot\boldsymbol{h}_i)^3}-\sum_{i=1}^n\sum_{j=1}^{d_i}\frac{2h_{i,j}^3}{(x_{i,j}+(t_1+t_2+t_3)h_{i,j})^3}\right|_{t_1=t_2=t_3=0}
        \\&=\sum_{i=1}^n\frac{2(\vec{1}^T_{d_i}\cdot\boldsymbol{h}_i)^3}{(1-\vec{1}^T_{d_i}\cdot \boldsymbol{x}_i)^3}-\sum_{i=1}^n\sum_{j=1}^{d_i}\frac{2h_{i,j}^3}{x_{i,j}^3}
    \end{align*}
    We check the first inequality in the condition $2$ of \defi{self concordant}.
    \begin{align*}
        2(\nabla^2 \Phi(\boldsymbol{x})[\boldsymbol{h},\boldsymbol{h}])^{3/2}&=2\left(\sum_{i=1}^n \frac{(\sum_{j=1}^{d_i}h_{i,j})^2}{(1-\vec{1}^T\cdot\boldsymbol{x}_{i_1})^2}+\sum_{i=1}^n\sum_{j=1}^{d_i}\frac{h_{i,j}^2}{x_{i,j}^2}\right)^{3/2}
        \\&=2\left(\sum_{i=1}^n \frac{(\sum_{j=1}^{d_i}h_{i,j})^2}{(1-\vec{1}^T\cdot\boldsymbol{x}_{i_1})^2}+\sum_{i=1}^n\sum_{j=1}^{d_i}\frac{h_{i,j}^2}{x_{i,j}^2}\right)\left(\sum_{i=1}^n \frac{(\sum_{j=1}^{d_i}h_{i,j})^2}{(1-\vec{1}^T\cdot\boldsymbol{x}_{i_1})^2}+\sum_{i=1}^n\sum_{j=1}^{d_i}\frac{h_{i,j}^2}{x_{i,j}^2}\right)^{1/2}
        \\&=2\left(\sum_{i=1}^n \frac{(\sum_{j=1}^{d_i}h_{i,j})^2}{(1-\vec{1}^T\cdot\boldsymbol{x}_{i_1})^2}\left(\sum_{i=1}^n \frac{(\sum_{j=1}^{d_i}h_{i,j})^2}{(1-\vec{1}^T\cdot\boldsymbol{x}_{i_1})^2}+\sum_{i=1}^n\sum_{j=1}^{d_i}\frac{h_{i,j}^2}{x_{i,j}^2}\right)^{1/2}\right.
        \\&\left.\quad+\sum_{i=1}^n\sum_{j=1}^{d_i}\frac{h_{i,j}^2}{x_{i,j}^2}\left(\sum_{i=1}^n \frac{(\sum_{j=1}^{d_i}h_{i,j})^2}{(1-\vec{1}^T\cdot\boldsymbol{x}_{i_1})^2}+\sum_{i=1}^n\sum_{j=1}^{d_i}\frac{h_{i,j}^2}{x_{i,j}^2}\right)^{1/2}\right)
        \\&\geq 2\left(\sum_{i=1}^n \left|\frac{(\sum_{j=1}^{d_i}h_{i,j})^3}{(1-\vec{1}^T\cdot\boldsymbol{x}_{i_1})^3}\right| +\sum_{i=1}^n\sum_{j=1}^{d_i}\left|\frac{h_{i,j}^3}{x_{i,j}^3}\right|\right)
        \\&\geq \left|\sum_{i=1}^n\frac{2(\vec{1}^T_{d_i}\cdot\boldsymbol{h}_i)^3}{(1-\vec{1}^T_{d_i}\cdot \boldsymbol{x}_i)^3}-\sum_{i=1}^n\sum_{j=1}^{d_i}\frac{2h_{i,j}^3}{x_{i,j}^3} \right|=|\nabla^3 \Phi(\boldsymbol{x})[\boldsymbol{h},\boldsymbol{h},\boldsymbol{h}]|
    \end{align*}
    Then we check the inequality between $\nabla \Phi(\boldsymbol{x})[\boldsymbol{h}]$ and $\nabla^2 \Phi(\boldsymbol{x})[\boldsymbol{h},\boldsymbol{h}]$.
    \begin{align*}
        |\nabla \Phi(\boldsymbol{x})[\boldsymbol{h}]| &= |\boldsymbol{h}^T \nabla \Phi(\boldsymbol{x})|
        \\&\leq\sum_{i=1}^n\left|\frac{\vec{1}^T_{d_i}\cdot\boldsymbol{h}_i}{1-\vec{1}^T\cdot\boldsymbol{x}_{i}}\right| + \sum_{i=1}^n\sum_{j=1}^{d_i}\left|\frac{h_{i,j}}{x_{i,j}}\right|
        \\&\leq \sqrt{\sum_{i=1}^n (d_i+1)} \left(\sum_{i=1}^n \frac{(\sum_{j=1}^{d_i}h_{i,j})^2}{(1-\vec{1}^T\cdot\boldsymbol{x}_{i_1})^2}+\sum_{i=1}^n\sum_{j=1}^{d_i}\frac{h_{i,j}^2}{x_{i,j}^2}\right)^{1/2}
        \\&=\left(\sum_{i=1}^n (d_i+1)\right)^{1/2}\left(\nabla^2 \Phi(\boldsymbol{x})[\boldsymbol{h},\boldsymbol{h}]\right)^{1/2}
    \end{align*}
    Therefore, $\Phi(\boldsymbol{x})$ is a $\left(\sum_{i=1}^n (d_i+1)\right)$-self-concordant barrier of $\mathcal{K}$.
\end{proof}

\section{Missing Proofs in \sect{reduction}}
\label{append:app}
The detailed pseudo-code of $\mathtt{BanditMLSM4PS}$ is shown in \algo{MLSM4PS}, the only difference between $\mathtt{BanditMLSM4PS}$ and $\mathtt{BanditMLSM}$ is the line $9$ to line $17$ in \algo{MLSM4PS}.
\begin{algorithm}[t]
	\caption{$\mathtt{BanditMLSM4PS}(\eta,L,\Phi)$}
	\label{algo:MLSM4PS}

	\textbf{Input}: block size $L$, block number $Q=T/L$, learning rate $\eta$, potential function $\Phi$
	
	\begin{algorithmic}[1]
	   \STATE initiate $\boldsymbol{x}_1\in \mbox{int} (\mathcal{K})$ such that $\nabla \Phi(\boldsymbol{x}_1) = 0$ 
	   \FOR{$q=1,2,\ldots,Q$}
	        \STATE Draw $t_q\sim \mbox{Unif}\{(q-1)L+1,(q-1)L+2,\ldots,qL\}$
	        \FOR{$t=(q-1)L+1, (q-1)L+2,\ldots, qL$}
	            \IF{$t=t_q$}
	                \STATE $\boldsymbol{H}_q = \left(\nabla^2 \Phi (\boldsymbol{x}_q)\right)^{-1/2}$
	                \STATE sample $z_q$ from $\mathbf{Z}$ where $P(\mathbf{Z}<z) =\int_{0}^{z} \frac{e^{u-1}}{1-e^{-1}}\mathbb{I}\left[u\in [0,1]\right]d u$
	                \STATE draw $\boldsymbol{v}_q \sim \mathbb{S}_{d-1}$
	                \IF{$z_q\geq \frac{1}{2}$}
                            \STATE draw $\boldsymbol{u}_q$ from $\{\boldsymbol{0},\boldsymbol{e}_1,\boldsymbol{e}_2,\ldots,\boldsymbol{e}_d\}$ with probability: $\Pr(\boldsymbol{u}_q = \boldsymbol{0})= \frac{1}{2}$, $\Pr(\boldsymbol{u}_q = \boldsymbol{e}_i) = \frac{1}{2d}$
                            \STATE $\boldsymbol{y}_{t_q} \leftarrow z_q\cdot \boldsymbol{x}_q+z_q\langle \boldsymbol{H}_q\boldsymbol{v}_q,\boldsymbol{u}_q\rangle\boldsymbol{u}_q$
                            \STATE $\widetilde{l}_q(\boldsymbol{H}_q\boldsymbol{v}_q)\leftarrow\left\{
                                    \begin{aligned}
                                    &-2(1-1/e)\frac{d}{z_q}\cdot f_{t_q}(\boldsymbol{y}_{t_q}) \quad \mbox{if }\boldsymbol{u}_q=\boldsymbol{0},\\
                                     &2(1-1/e)\frac{d}{z_q}\cdot f_{t_q}(\boldsymbol{y}_{t_q}) \quad \mbox{if } \boldsymbol{u}_q\neq \boldsymbol{0}.
                                    \end{aligned}
                                    \right.$
                        \ELSE
                            \STATE draw $\boldsymbol{u}_q$ from $\{\boldsymbol{e}_1,\boldsymbol{e}_2,\ldots,\boldsymbol{e}_d\}$ uniformly at random
                           \STATE  let $\boldsymbol{y}_{t_q}= z_q \boldsymbol{x}_q + \frac{1}{2}\boldsymbol{u}_q$ or $\boldsymbol{y}_{t_q}= z_q \boldsymbol{x}_q$ with equal probability.
                           \STATE play $\boldsymbol{y}_{t_q}$ and observe the feedback $f_{t_q}(\boldsymbol{y}_{t_q})$
                           \STATE $\widetilde{l}_q(\boldsymbol{H}_q\boldsymbol{v}_q)\leftarrow \left\{
                                    \begin{aligned}
                                     &-4(1-1/e)d\cdot \langle \boldsymbol{H}_q\boldsymbol{v}_q,\boldsymbol{u}_q\rangle f_{t_q}(\boldsymbol{y}_{t_q}) \quad \mbox{if }\boldsymbol{y}_{t_q}=z_q\boldsymbol{x}_q,\\
                                    &4(1-1/e)d\cdot \langle \boldsymbol{H}_q\boldsymbol{v}_q,\boldsymbol{u}_q\rangle f_{t_q}(\boldsymbol{y}_{t_q}) \quad \mbox{if } \boldsymbol{y}_{t_q}=z_q\boldsymbol{x}_q+\frac{1}{2}\boldsymbol{u}_q.
                                    \end{aligned}
                                    \right.
                                   $
                        \ENDIF
                        \STATE $\widetilde{\nabla} \overline{F}_q(\boldsymbol{x}_q) \gets d\cdot \widetilde{l}_q(\boldsymbol{H}_q\boldsymbol{v}_q)\boldsymbol{H}_q^{-1}\boldsymbol{v}_q$  
	                   \STATE $\boldsymbol{x}_{q+1} \gets \argmin\limits_{\boldsymbol{x}\in\mathcal{K}} \sum_{s=1}^q\langle -\eta           \widetilde{\nabla}F_s(\boldsymbol{x}_s),\boldsymbol{x}\rangle + \Phi(\boldsymbol{x})$
	            \ELSE
	                \STATE $\boldsymbol{y}_t \leftarrow \boldsymbol{x}_q$, 
                        \STATE sample $S_t$ from $\mbox{EXT}(\boldsymbol{y}_t)$ and play $S_t$.
	            \ENDIF
	        \ENDFOR
	   \ENDFOR
	\end{algorithmic}
\end{algorithm}

\subsection{Proof of \lem{reduction}} \label{append:reduction app}
In \algo{MLSMW}, the algorithm $\mathtt{MLSMWrapper}$ feeds $g_{t_q}(S_{t_q})$ back to $\mathtt{BanditMLSM4PS}$ to replace the value $f_{t_q}(\boldsymbol{y}_{t_q})$. Therefore $\mathtt{MLSMWrapper}$ are actually using a new estimator for $l_q(\boldsymbol{H}_q\boldsymbol{v}_q)$, we denote the new estimator $\widetilde{l}'(\boldsymbol{H}_q\boldsymbol{v}_q)$.
If $z_q\geq \frac{1}{2}$:
\begin{align*}
    \widetilde{l}'_q(\boldsymbol{H}_q\boldsymbol{v}_q):=\left\{\begin{aligned}
                                    &-2(1-1/e)\frac{d}{z_q}\cdot g_{t_q}(S_{t_q}) \quad \mbox{if }\boldsymbol{u}_q=\boldsymbol{0},\\
                                     &2(1-1/e)\frac{d}{z_q}\cdot g_{t_q}(S_{t_q}) \quad \mbox{if } \boldsymbol{u}_q\neq \boldsymbol{0}.
                                    \end{aligned}
                                    \right.
\end{align*}
else:
\begin{align}\label{eq:lq2}
    \widetilde{l}'_q(\boldsymbol{H}_q\boldsymbol{v}_q):=\left\{
        \begin{aligned}
            &-4(1-1/e)d\cdot \langle \boldsymbol{H}_q\boldsymbol{v}_q,\boldsymbol{u}_q\rangle g_{t_q}(S_{t_q}) \quad \mbox{if }\boldsymbol{y}_{t_q}=z_q\boldsymbol{x}_q,\\
            &4(1-1/e)d\cdot \langle \boldsymbol{H}_q\boldsymbol{v}_q,\boldsymbol{u}_q\rangle g_{t_q}(S_{t_q}) \quad \mbox{if } \boldsymbol{y}_{t_q}=z_q\boldsymbol{x}_q+\frac{1}{2}\boldsymbol{u}_q.
        \end{aligned}
        \right.
\end{align}
We first show $\widetilde{l}(\boldsymbol{H}_q\boldsymbol{v}_q)$ is an unbiased estimator of $l(\boldsymbol{H}_q\boldsymbol{v}_q)$.
\begin{lemma}
    The estimator $\widetilde{l}'_q(\boldsymbol{H}_q\boldsymbol{v}_q)$ is an unbiased estimator for $l_q(\boldsymbol{H}_q\boldsymbol{v}_q)$, that is, \[\E\left[\widetilde{l}'_q(\boldsymbol{H}_q\boldsymbol{v}_q)\mid \mathcal{H}_q,\boldsymbol{v}_q\right]=l_q(\boldsymbol{H}_q\boldsymbol{v}_q)\]
\end{lemma}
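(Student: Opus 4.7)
The plan is to reduce the lemma to the already-established unbiasedness of the $f_{t_q}$-based estimator by first handling the $S_{t_q}$ randomness, and then carefully redo the analysis of \lem{multi-linear estimator} for the small-$z_q$ branch where the estimator has a different form. Throughout the argument I will condition on progressively smaller sigma-algebras and apply the tower property.

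First, I would fix $\mathcal{H}_{q-1}, \boldsymbol{v}_q, t_q, z_q, \boldsymbol{u}_q$, and (in the $z_q < 1/2$ branch) the coin flip that decides whether $\boldsymbol{y}_{t_q} = z_q\boldsymbol{x}_q$ or $z_q\boldsymbol{x}_q + \tfrac{1}{2}\boldsymbol{u}_q$. Under this conditioning, $\boldsymbol{y}_{t_q}$ is determined, and $S_{t_q}\sim \mbox{EXT}(\boldsymbol{y}_{t_q})$ is drawn so that $\E[g_{t_q}(S_{t_q})\mid \boldsymbol{y}_{t_q}] = f_{t_q}(\boldsymbol{y}_{t_q})$ by construction of the extension mapping. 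Pulling this through the linearity in $g_{t_q}(S_{t_q})$ in \eq{lq2} (and its analog for $z_q\ge 1/2$), we see that $\E[\widetilde l'_q(\boldsymbol{H}_q\boldsymbol{v}_q)\mid \mathcal{H}_{q-1},\boldsymbol{v}_q,t_q,z_q,\boldsymbol{u}_q,\boldsymbol{y}_{t_q}]$ coincides with the value $\widetilde l_q(\boldsymbol{H}_q\boldsymbol{v}_q)$ would take if we had used the true $f_{t_q}(\boldsymbol{y}_{t_q})$ instead of $g_{t_q}(S_{t_q})$.

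For the $z_q\ge 1/2$ branch this is exactly the setup of \lem{multi-linear estimator}, so the same calculation shows that after averaging over $\boldsymbol{u}_q$ the conditional expectation equals $(1-1/e)\langle \boldsymbol{H}_q\boldsymbol{v}_q,\nabla f_{t_q}(z_q\cdot\boldsymbol{x}_q)\rangle$. For the $z_q<1/2$ branch I need a fresh two-line computation: conditioning on $\boldsymbol{u}_q$ and averaging the two equiprobable choices of $\boldsymbol{y}_{t_q}$, the signs cancel the $f_{t_q}(z_q\boldsymbol{x}_q)$ term and give
\begin{align*}
\E\!\left[\widetilde l'_q(\boldsymbol{H}_q\boldsymbol{v}_q)\mid \mathcal{H}_{q-1},\boldsymbol{v}_q,t_q,z_q,\boldsymbol{u}_q\right]
= 2(1-1/e)d\,\langle \boldsymbol{H}_q\boldsymbol{v}_q,\boldsymbol{u}_q\rangle\bigl[f_{t_q}(z_q\boldsymbol{x}_q+\tfrac{1}{2}\boldsymbol{u}_q)-f_{t_q}(z_q\boldsymbol{x}_q)\bigr].
\end{align*}
Since $\boldsymbol{u}_q\in\{\boldsymbol{e}_1,\dots,\boldsymbol{e}_d\}$ and $f_{t_q}$ is multi-linear, \lem{multi-linear} of \append{BMMDSM} gives $f_{t_q}(z_q\boldsymbol{x}_q+\tfrac{1}{2}\boldsymbol{u}_q)-f_{t_q}(z_q\boldsymbol{x}_q)=\tfrac{1}{2}\langle \nabla f_{t_q}(z_q\boldsymbol{x}_q),\boldsymbol{u}_q\rangle$. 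Averaging over $\boldsymbol{u}_q$ uniform on the $d$ coordinate basis vectors then yields $(1-1/e)\langle \boldsymbol{H}_q\boldsymbol{v}_q,\nabla f_{t_q}(z_q\cdot\boldsymbol{x}_q)\rangle$, which is exactly the same expression obtained in the $z_q\ge 1/2$ branch.

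Finally, I would average over $t_q$ uniform on $[(q-1)L+1,qL]$ and $z_q$ drawn from the density $\tfrac{e^{z-1}}{1-e^{-1}}\mathbb{I}[z\in[0,1]]$. The $(1-1/e)$ factor cancels $(1-e^{-1})$ in the normalization, leaving
\begin{align*}
\E[\widetilde l'_q(\boldsymbol{H}_q\boldsymbol{v}_q)\mid \mathcal{H}_{q-1},\boldsymbol{v}_q]
= \Bigl\langle \boldsymbol{H}_q\boldsymbol{v}_q,\,\int_0^1 \tfrac{e^{z-1}}{L}\sum_{t=(q-1)L+1}^{qL}\nabla f_t(z\cdot\boldsymbol{x}_q)\,dz\Bigr\rangle
= \langle \boldsymbol{H}_q\boldsymbol{v}_q,\nabla \overline F_q(\boldsymbol{x}_q)\rangle
= l_q(\boldsymbol{H}_q\boldsymbol{v}_q),
\end{align*}
which is the desired identity. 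The only genuinely new ingredient compared with \lem{multi-linear estimator} is the $z_q<1/2$ branch, and the main obstacle there is ensuring that the two-point probe $z_q\boldsymbol{x}_q+\tfrac{1}{2}\boldsymbol{u}_q$ is feasible so that $f_{t_q}$ (equivalently, the extension value) can be probed; this is exactly the feasibility check already given after the definition of $\widetilde l_q$ in the product-simplex discussion.
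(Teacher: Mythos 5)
Your proposal is correct and follows essentially the same route as the paper's own proof: condition down to $\boldsymbol{y}_{t_q}$ to replace $g_{t_q}(S_{t_q})$ by $f_{t_q}(\boldsymbol{y}_{t_q})$ via the extension property, reuse the computation of Lemma~\ref{lem:multi-linear estimator} for the $z_q\ge\tfrac12$ branch, and do the fresh two-point/multi-linearity calculation for the $z_q<\tfrac12$ branch before averaging over $\boldsymbol{u}_q$, $t_q$, and $z_q$. The intermediate identity $(1-1/e)\langle\boldsymbol{H}_q\boldsymbol{v}_q,\nabla f_{t_q}(z_q\cdot\boldsymbol{x}_q)\rangle$ and the final integration step match the paper exactly.
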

\begin{proof}
    Condition on $\mathcal{H}_{q-1},\boldsymbol{v}_q,t_q,z_q,\boldsymbol{u}_q$. If $z_q\geq \frac{1}{2}$ and $\boldsymbol{u}_q = 0$, 
    \begin{align*}
    \E\left[\widetilde{l}'_q(\boldsymbol{H}_q\boldsymbol{v}_q)\mid \mathcal{H}_{q-1},\boldsymbol{v}_q,t_q,\boldsymbol{z}_q,\boldsymbol{u}_q\right]&=-2(1-1/e)\frac{d}{z_q}\E[g_{t_q}(S_{t_q})\mid \mathcal{H}_{q-1},\boldsymbol{v}_q,t_q,\boldsymbol{z}_q,\boldsymbol{u}_q]
    \\&=-2(1-1/e)\frac{d}{z_q}f_{t_q}(z_q\boldsymbol{x}_q).
    \end{align*}
    The last equality is because that $S_{t_q}\sim \mbox{EXT}(z_q\boldsymbol{x}_q)$ and $f_{t_q}(\boldsymbol{x}) = \mathbb{E}_{S\sim \mbox{EXT}(\boldsymbol{x})}[g_{t_q}(S)]$. If $z_q\geq \frac{1}{2}$ and $\boldsymbol{u}_q \neq 0$,
    \begin{align*}
    \E\left[\widetilde{l}'_q(\boldsymbol{H}_q\boldsymbol{v}_q)\mid \mathcal{H}_{q-1},\boldsymbol{v}_q,t_q,\boldsymbol{z}_q,\boldsymbol{u}_q\right]&=2(1-1/e)\frac{d}{z_q}\E[g_{t_q}(S_{t_q})\mid \mathcal{H}_{q-1},\boldsymbol{v}_q,t_q,\boldsymbol{z}_q,\boldsymbol{u}_q]
    \\&=2(1-1/e)\frac{d}{z_q}f_{t_q}(z_q\boldsymbol{x}_q+z_q\langle\boldsymbol{H}_q\boldsymbol{v}_q,\boldsymbol{u}_q\rangle \boldsymbol{u}_q).
    \end{align*}

    Condition on $\mathcal{H}_{q-1},\boldsymbol{v}_q,t_q,z_q$, then 
    \begin{align*}
        \E\left[\widetilde{l}'_q(\boldsymbol{H}_q\boldsymbol{v}_q)\mid \mathcal{H}_{q-1},\boldsymbol{v}_q,t_q,\boldsymbol{z}_q\right]&= \frac{1}{2}(-2(1-1/e) \frac{d}{z_q}f_{t_q}(z_q\boldsymbol{x}_q)) +\sum_{i=1}^d \frac{1}{d} 2 (1-1/e)\frac{d}{z_q}f_{t_q}(z_q\boldsymbol{x}_q+z_q\langle\boldsymbol{H}_q\boldsymbol{v}_q,\boldsymbol{u}_q\rangle \boldsymbol{u}_q)
        \\&=(1-1/e)\langle \boldsymbol{H}_q\boldsymbol{v}_q,\nabla f_{t_q}(z_q\cdot \boldsymbol{x}_q)\rangle
    \end{align*}
    where the last equality is already proved in the proof of \lem{multi-linear estimator}.
    
    If $z_q<\frac{1}{2}$,
    \begin{align*}
        &\E\left[\widetilde{l}'_q(\boldsymbol{H}_q\boldsymbol{v}_q)\mid \mathcal{H}_{q-1},\boldsymbol{v}_q,z_q,t_q,\boldsymbol{u}_q,\boldsymbol{y}_{t_q}\right]\\&=\left\{
        \begin{aligned}
            &-4(1-1/e)d\cdot \langle \boldsymbol{H}_q\boldsymbol{v}_q,\boldsymbol{u}_q\rangle \E[g_{t_q}(S_{t_q})\mid \mathcal{H}_{q-1},\boldsymbol{v}_q,z_q,t_q,\boldsymbol{u}_q,\boldsymbol{y}_{t_q}] \quad \mbox{if }\boldsymbol{y}_{t_q}=z_q\boldsymbol{x}_q,\\
            &4(1-1/e)d\cdot \langle \boldsymbol{H}_q\boldsymbol{v}_q,\boldsymbol{u}_q\rangle \E[g_{t_q}(S_{t_q})\mid \mathcal{H}_{q-1},\boldsymbol{v}_q,z_q,t_q,\boldsymbol{u}_q,\boldsymbol{y}_{t_q}] \quad \mbox{if } \boldsymbol{y}_{t_q}=z_q\boldsymbol{x}_q+\frac{1}{2}\boldsymbol{u}_q.
        \end{aligned}
        \right.
        \\&=\left\{
        \begin{aligned}
            &-4(1-1/e)d\cdot \langle \boldsymbol{H}_q\boldsymbol{v}_q,\boldsymbol{u}_q\rangle f_{t_q}(\boldsymbol{y}_{t_q}) \quad \mbox{if }\boldsymbol{y}_{t_q}=z_q\boldsymbol{x}_q,\\
            &4(1-1/e)d\cdot \langle \boldsymbol{H}_q\boldsymbol{v}_q,\boldsymbol{u}_q\rangle f_{t_q}(\boldsymbol{y}_{t_q}) \quad \mbox{if } \boldsymbol{y}_{t_q}=z_q\boldsymbol{x}_q+\frac{1}{2}\boldsymbol{u}_q.
        \end{aligned}
        \right.
    \end{align*}

    Condition on $\mathcal{H}_{q-1},\boldsymbol{v}_q,t_q,z_q$,
    \begin{align*}
        &\E\left[\widetilde{l}'_q(\boldsymbol{H}_q\boldsymbol{v}_q)\mid \mathcal{H}_{q-1},\boldsymbol{v}_q,t_q,\boldsymbol{z}_q\right]\\&=\sum_{i=1}^d \frac{1}{d} \left(\frac{1}{2} \cdot (-4(1-1/e)d\cdot \langle \boldsymbol{H}_q\boldsymbol{v}_q,\boldsymbol{e}_i\rangle f(z_q\boldsymbol{x}_q))+\frac{1}{2}\cdot (4(1-1/e)d\cdot \langle \boldsymbol{H}_q\boldsymbol{v}_q,\boldsymbol{e}_i\rangle f(z_q\boldsymbol{x}_q+\frac{1}{2}\boldsymbol{e}_i))\right)
        \\&=\sum_{i=1}^d 2(1-1/e)\langle \boldsymbol{H}_q\boldsymbol{v}_q,\boldsymbol{e}_i\rangle \left(f(z_q\boldsymbol{x}_q+\frac{1}{2}\boldsymbol{e}_i)-f(z_q\boldsymbol{x}_q) \right)
        \\&=\sum_{i=1}^d 2(1-1/e)\langle \boldsymbol{H}_q\boldsymbol{v}_q,\boldsymbol{e}_i\rangle \frac{1}{2} \frac{\partial f}{\partial x_i}(z_q\boldsymbol{x}_q)
        \\&=(1-1/e)\sum_{i=1}^d\langle \boldsymbol{H}_q\boldsymbol{v}_q,\boldsymbol{e}_i\rangle \langle \boldsymbol{e}_i, \nabla f(z_q\boldsymbol{x}_q)\rangle
        \\&=(1-1/e)\langle \boldsymbol{H}_q\boldsymbol{v}_q,\nabla f_{t_q}(z_q\cdot \boldsymbol{x}_q)\rangle
    \end{align*}
    Combining with the case $z_q\geq \frac{1}{2}$, we proved this equation whatever $z_q$ is:
    \[\E\left[\widetilde{l}'_q(\boldsymbol{H}_q\boldsymbol{v}_q)\mid \mathcal{H}_{q-1},\boldsymbol{v}_q,t_q,\boldsymbol{z}_q\right]=(1-1/e)\langle \boldsymbol{H}_q\boldsymbol{v}_q,\nabla f_{t_q}(z_q\cdot \boldsymbol{x}_q)\rangle\]
    Then follow the calculation in \lem{multi-linear estimator}, we can prove
    \[\E[\widetilde{l}'_q(\boldsymbol{H}_q\boldsymbol{v_q})\mid \mathcal{H}_{q-1},\boldsymbol{v}_q] = l_q(\boldsymbol{H}_q\boldsymbol{v}_q).\]
\end{proof}

\begin{customlem}{5.1}
    For a finite set $\mathcal{S}$, and a function family $\mathcal{G}\subseteq \mathcal{S}^{\mathbb{R}_+}$, where $\mathcal{S}^{\mathbb{R}_+}$ is the set of all functions that map element in $\mathcal{S}$ to $\mathbb{R}^+$. If there is an extension mapping $\mbox{EXT}:\mathcal{K}\rightarrow \Delta(\mathcal{S})$  satisfying following conditions:
    \begin{enumerate}
        \item $\mathcal{K}\subseteq \mathbb{R}^d$ is a product of standard simplexes.
        \item For any $g\in \mathcal{G}$, $f(\boldsymbol{x})=\E_{S\in \mbox{EXT}(\boldsymbol{x})}[g(S)]$ is a multi-linear, monotone, DR-submodular function, and $f$ is $L_1$-lipschitz continuous, $f(\boldsymbol{0})=0$.
        \item For any $s\in \mathcal{S}$. Here exist $\boldsymbol{x}\in\mathcal{K}$ such that $\mbox{EXT}(\boldsymbol{x}) = \boldsymbol{1}_{s}$. Where $\boldsymbol{1}_{S}$ assign probability $1$ to $S$ and $0$ to other elements of $\mathcal{S}$.
    \end{enumerate}
     then the algorithm $\mathtt{MLSMWrapper}$ attains expected $(1-1/e)$-regret $\mathcal{R}_{1-1/e}(T)\leq O\left(d^{5/3}T^{2/3}\log (T)\right)$ on $(\mathcal{S},\mathcal{G})$-bandit.
\end{customlem}

\begin{proof}[Proof of \lem{reduction}]
    We first note that $g_t(S_t)$ is an unbiased estimator of $f_t(\boldsymbol{y}_t)$ by the definition of $f_t$.
    The analysis is the same as the analysis of \algo{MLSM} except that \algo{MLSMW} is actually using a new estimator $\widetilde{l}'_q(\boldsymbol{H}_q\boldsymbol{v}_q)$for $l(\boldsymbol{H}_q\boldsymbol{v}_q)$. We first bound this new estimator.

    If $z_q\geq \frac{1}{2}$, 
    \begin{align*}
        |\widetilde{l}'_q(\boldsymbol{H}_q\boldsymbol{v}_q)| &= 2(1-1/e)d\frac{g_{t_q}(S_{t_q})}{z_q}
        \\&\leq 4(1-1/e)dM
    \end{align*}
    If $z_q< \frac{1}{2}$, 
    \begin{align*}
        |\widetilde{l}'_q(\boldsymbol{H}_q\boldsymbol{v}_q)| &= 4(1-1/e)d\langle\boldsymbol{H}_q\boldsymbol{v}_q,\boldsymbol{u}_q\rangle g_{t_q}(S_{t_q})
        \\&\leq 4(1-1/e)dM
    \end{align*}
    The inequality is because that $z_q\boldsymbol{x}_q+\langle\boldsymbol{H}_q\boldsymbol{v}_q,\boldsymbol{u}_q\rangle\boldsymbol{u}_q\in \mathcal{K}$, and $\boldsymbol{u}_q$ is a basis vector. Therefore $\langle\boldsymbol{H}_q\boldsymbol{v}_q,\boldsymbol{u}_q\rangle\leq D_{\infty}$, here the $D_{\infty}$ is the $\infty$-norm diameter of $\mathcal{K}$. Since $\mathcal{K}$ is a cartesian product of standard simplexes, $D_{\infty}=1$.

    Let $\widetilde{\nabla} \overline{F}'_q(\boldsymbol{x}_q)$ be the estimator replacing $\widetilde{l}'_q(\boldsymbol{H}_q\boldsymbol{v}_q)$ with $\widetilde{l}_q(\boldsymbol{H}_q\boldsymbol{v}_q)$, that is
    \[\widetilde{\nabla} \overline{F}'_q(\boldsymbol{x}_q) = d\cdot \widetilde{l}'_q(\boldsymbol{H}_q\boldsymbol{v}_q)\boldsymbol{H}_q^{-1}\boldsymbol{v}_q\]
    We bound the dual local norm of $\widetilde{\nabla}\overline{F}'_q(\boldsymbol{x}_q)$
    \begin{align*}
        \E\left[\|\widetilde{\nabla}\overline{F}'_q(\boldsymbol{x}_q)\|_{\Phi,\boldsymbol{x}_q,*}\mid \mathcal{H}_{q-1}\right]&=\E\left[d^2\cdot (\widetilde{l}(\boldsymbol{H}_q\boldsymbol{v}_q))^2\boldsymbol{v}_s^T\boldsymbol{H}_q^{-1} \Phi(\boldsymbol{x}_{q})\boldsymbol{H}_q^{-1}\boldsymbol{v}_q\mid \mathcal{H}_{q-1}\right]
        \\&\leq 16(1-1/e)^2d^4M^2\E\left[\boldsymbol{v}_q^T\Phi(\boldsymbol{x}_q)^{-1/2}\Phi(\boldsymbol{x}_{q})\Phi(\boldsymbol{x}_q)^{-1/2}\boldsymbol{v}_q\mid \mathcal{H}_{q-1}\right]
        \\&\leq 16(1-1/e)^2d^4M^2\|\boldsymbol{v}_q\|^2
        \\&\leq 16(1-1/e)^2d^4M^2
    \end{align*}

    Since we have proved $\E[\widetilde{l}_q'(\boldsymbol{H}_q\boldsymbol{v}_q)\mid \mathcal{H}_{q-1},\boldsymbol{v}_q]=l_q(\boldsymbol{H}_q\boldsymbol{v}_q)$, then follow the proof of \lem{linear estimator} (i), we can prove 
    \[\E\left[\widetilde{\nabla} \overline{F}'_q (\boldsymbol{x}_q)\mid \mathcal{H}_{q-1}\right]= \nabla \overline{F}_q(\boldsymbol{x}_q)\]

    Then follow the proof of \thm{MLSM}, we have for any $\boldsymbol{x}^*\in \mathcal{K}$,
    \begin{align*}
        \E\left[\sum_{t=1}^T (1-1/e)f_t(\boldsymbol{x}^*)-f_t(\boldsymbol{y}_t)\right]&\leq \eta L\sum_{q=1}^Q\E \left[\|\widetilde{\nabla} \overline{F}_q'(\boldsymbol{x}_q)\|_{\Phi,\boldsymbol{x}_q,*}^2\mid \mathcal{H}_{q-1}\right]+(1-1/e)L_1 D +MQ +\frac{\nu L\log(\frac{1}{\delta})}{\eta}
        \\&\leq 16(1-1/e)^2M^2 d^4\eta T +(1-1/e)L_1 D+ M Q + \frac{\nu L\log(T)}{\eta}
    \end{align*}
    
    Let $S^* = \argmax_{S\in\mathcal{S}} \sum_{t=1}^T g_t(S)$, $\boldsymbol{x}^*$ be the point satisfies $\mbox{EXT}(\boldsymbol{x}^*)=\boldsymbol{1}_{S^*}$, then $f_t(\boldsymbol{x}^*) = g_t(S^*)$.
    
    Let $\mathcal{H}'_{t}$ be the history of the first $t$-rounds, including the realization of $\boldsymbol{v}_q, t_q, z_q, \boldsymbol{u}_q, \ \forall q\leq \lceil \frac{t}{L}\rceil$ and the realization of $S_{k},\ \forall k\leq t $. Since $f_t(\boldsymbol{y}_t) = \E\left[g_t(S_t)\mid \mathcal{H}_{t-1}\right]$, we have,
    \begin{align*}
        \E\left[\sum_{t=1}^T (1-1/e)g_t(S^*)-g_t(S_t)\right]& =\E\left[\sum_{t=1}^T(1-1/e)f_t(\boldsymbol{x}^*)-\E\left[g_t(S_t)\mid \mathcal{H}_{t-1}\right]\right]
        \\&=\E\left[\sum_{t=1}^T(1-1/e)f_t(\boldsymbol{x}^*)-f_t(\boldsymbol{y}_t)\right]
        \\&\leq 16(1-1/e)^2M^2 d^4\eta T+(1-1/e)L_1 + M Q + \frac{\nu L\log(T)}{\eta}
    \end{align*}
    If we use the self-concordant barrier described in \append{self concordant} as the input $\Phi$ here, by \lem{self concordant}, $\nu = O(d)$. Then we set $\eta=d^{-7/3}T^{-1/3}$, $L=d^{-5/3}T^{1/3}$, $Q=T/L=d^{5/3}T^{2/3}$. Then
    \[\mathcal{R}_{1-1/e}(T)=\E\left[\sum_{t=1}^T (1-1/e)g_t(S^*)-g_t(S_t)\right]=O\left(d^{5/3}T^{2/3}\log (T)\right).\]
\end{proof}

\subsection{Proof of \lem{EXTPM} and \cor{BMSMPM}}

Before proving \lem{EXTPM}, we first prove a useful lemma.
\begin{lemma}\label{lem:monotone submodular}
    Let $g:2^{G}\longrightarrow \mathbb{R}_+$ be a monotone submodular set function, $S$ is a subset of $G$, $s_1,s_2\in G$ and there is no any other restriction on $s_1$ and $s_2$, they may be the same element or not, and they may be in $S$ or not. Then $g(S\cup \{s_1,s_2\})-g(S\cup\{s_1\})\leq g(S\cup \{s_2\})-g(S)$.
\end{lemma}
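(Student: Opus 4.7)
The plan is a straightforward case analysis that reduces the general statement (where $s_1,s_2$ may coincide or belong to $S$) to the textbook form of the diminishing returns inequality, namely: for any $A\subseteq B\subseteq G$ and any $x\notin B$,
\[
g(A\cup\{x\})-g(A)\ \geq\ g(B\cup\{x\})-g(B),
\]
which is the definition of submodularity.

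First I would dispose of the degenerate cases. If $s_1=s_2$, then $S\cup\{s_1,s_2\}=S\cup\{s_1\}$, so the left-hand side is $0$, while the right-hand side is $g(S\cup\{s_2\})-g(S)\geq 0$ by monotonicity. If $s_1\in S$, then $S\cup\{s_1\}=S$ and $S\cup\{s_1,s_2\}=S\cup\{s_2\}$, so both sides are equal. If $s_2\in S$ (and $s_1\notin S$), then $S\cup\{s_2\}=S$ and $S\cup\{s_1,s_2\}=S\cup\{s_1\}$, so both sides are $0$. In each of these situations the claimed inequality is immediate.

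The only remaining case is $s_1\neq s_2$ with $s_1\notin S$ and $s_2\notin S$. Here I would set $A:=S$, $B:=S\cup\{s_1\}$, and $x:=s_2$. Then $A\subseteq B$ and $x\notin B$ (using $s_2\neq s_1$ and $s_2\notin S$), so submodularity in its standard form yields
\[
g(S\cup\{s_2\})-g(S)\ \geq\ g(S\cup\{s_1,s_2\})-g(S\cup\{s_1\}),
\]
which is exactly what we want.

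There is no real obstacle; the lemma is essentially a bookkeeping exercise to make the diminishing-returns inequality available without having to specify the relative positions of $s_1,s_2$ and $S$. The only thing to be careful about is to enumerate the degenerate configurations cleanly so that the final submodularity step is applied in a setting where its hypotheses ($A\subseteq B$, $x\notin B$) genuinely hold.
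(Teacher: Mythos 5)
Your proof is correct and follows essentially the same route as the paper's: dispose of the degenerate configurations ($s_1=s_2$ via monotonicity, $s_1\in S$ or $s_2\in S$ by direct comparison of both sides) and then invoke the standard diminishing-returns inequality in the remaining case $s_1\neq s_2$, $s_1,s_2\notin S$. The only difference is cosmetic ordering of the cases.
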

\begin{proof}
    If $s_1\in S$ and $s_2\notin S$, then $g(S\cup \{s_1,s_2\})-g(S\cup\{s_1\})=g(S\cup \{s_2\})-g(S)$. If $s_1\notin S$ and $s_2\in S$, then $g(S\cup \{s_1,s_2\})-g(S\cup\{s_1\})=g(S\cup \{s_1\})-g(S\cup\{s_1\})=0$ and $g(S\cup \{s_2\})-g(S)=g(S)-g(S)=0 $. If $s_1\in S, s_2 \in S$, then $g(S\cup \{s_1,s_2\})-g(S\cup\{s_1\}) =g(S\cup \{s_2\})-g(S)=g(S)-g(S)=0$, the inequality holds.
    
    If $s_1\notin S, s_2\notin S$ and $s_1\neq s_2$, then the result holds due to the submodularity of $g$. If $s_1 = s_2\notin S$, then $g(S\cup \{s_1,s_2\})-g(S\cup\{s_1\})=0$ and $g(S\cup \{s_2\})-g(S)\geq 0$ by the monotonicity of $g$.
\end{proof}

\begin{customlem}{5.3}
    For $\mathcal{G}_{MS}$, the extension mapping $\mbox{EXT}_{PM}:\mathcal{K}\rightarrow \Delta(\mathcal{S}_{PM})$ satisfies the conditions in \lem{reduction}. Moreover, $\mathcal{K}$ is in a $\sum_{k=1}^K r_k|G_k| $ dimensional real vector space. For any $g\in \mathcal{G}_{MS}$, the continuous extension $f(\boldsymbol{x})=\E_{s\in \mbox{EXT}(\boldsymbol{x})}[g(s)]$ is $M\sqrt{\sum_{k=1}^K r_k|G_k|}$-lipschitz. 
\end{customlem}

\begin{proof}[Proof of \lem{EXTPM}]
     We first prove that for any $S\in\mathcal{S}_{PM}$, there is a $\boldsymbol{x}\in\mathcal{K}$ such that $\mbox{EXT}_{PM}(\boldsymbol{x})=\boldsymbol{1}_{S}$. For any $S\in\mathcal{S}_{PM}$, it can be partitioned into $S=\cup_{k=1}^K S_k$ such that $S_k\subseteq G_k$ and $|S_k|\leq r_k$. For any $k$, we select $|S_k|$ standard simplexes $\Delta_{G_k}^{k,i}, \ i\in [|S_k|]$, and we assign probability $1$ to the elements in $S_k$ respectively in these standard simplexes. Thus the condition $1$ and $3$ of \lem{reduction} is satisfied, the dimension of $\mathcal{K}$ is obvious. It's enough to show that $f(\boldsymbol{x})$ satisfies the condition $2$.

    $\mbox{EXT}_{PM}(\boldsymbol{0})$ assigns probability $1$ to the empty set, thus $f(\boldsymbol{0})=0$.
    
    Now we check the multi-linearity of $f$. Consider a sample $\omega \in \Omega$, the probability of $\omega$ is $\Pr(\omega) = \prod_{k=1}^K\prod_{i=1}^{r_k}\Pr(\omega_{k,i})$, and $\Pr(\omega_{k,i})\in \left\{x_{k,i,s}\mid s\in G_{k}\right\}\cup \{1-\sum_{s\in G_k}x_{k,i,s}\}$, thus $\Pr(\omega)$ is multi-linear with respect to the variables $x_{k,i,s}$. Then we write $f(\boldsymbol{x})$ as follows,
    \begin{align*}
        f(\boldsymbol{x})&= \E_{S\sim \mbox{EXT}_{PM}(\boldsymbol{x})} \left[g(S)\right]
        \\&=\sum_{S\in \mathcal{S}_{PM}} \Pr(S) g(S)
        \\&=\sum_{S\in \mathcal{S}_{PM}} \sum_{\omega \in \rho^{-1}(S)}\Pr(\omega) g(S)
        \\&=\sum_{\omega \in \Omega} \Pr(\omega) g(\rho(\omega))
    \end{align*}
    Since $g(\rho(\omega))$ is a constant independent from $\boldsymbol{x}$, $f(\boldsymbol{x})$ is a linear combination of multi-linear functions, thus $f(\boldsymbol{x})$ is also multi-linear.
    
    Since $f(\boldsymbol{x})$ is multi-linear, its partial derivative is 
    \[\frac{\partial f}{\partial x_{k,i,s}}(\boldsymbol{x})=\frac{f\left(\boldsymbol{x}\vee (1-\sum_{s'\in G_k , s'\neq s} x_{k,i,s'})\boldsymbol{e}_{k,i,s}\right)-f(\boldsymbol{x}\wedge  \bar{\boldsymbol{e}}_{k,i,s})}{1-\sum_{s'\in G_k , s'\neq s} x_{k,i,s'}}\]
    Here $\wedge$ is the coordinate-wise minimal, and $\vee$ is the coordinate-wise maximal. $\boldsymbol{e}_{k,i,s}$ is the basis vector which takes $1$ only for the component indexed $(k,i,s)$, and $0$ for the other components. $\bar{\boldsymbol{e}}_{k,i,s} $ is the vector that take $0$ for the component indexed $(k,i,s)$ and $1$ for the other components.
    
    We define two mappings $\rho_{\vee}^{k,i,s},\rho_{\wedge}^{k,i,s}: \Omega \longrightarrow \Omega$. For $\omega= (\omega_{k',i'})_{k',i'\in \Gamma}, \ \Gamma= \{(k',i')\mid 1\leq k'\leq K, 1\leq i'\leq r_k,\ i,k\in\mathbb{N}\}$. 
    $\rho_{\vee}^{k,i,s}$ and $\rho_{\wedge}^{k,i,s}$ only change the component indexed $(k,i)$, for any $(k',i')\neq (k,i)$, $(\rho_{\vee}^{k,i,s}(\omega))_{k',i'} = (\rho_{\wedge}^{k,i,s}(\omega))_{k',i'}=\omega_{k',i'}$. For the component indexed $(k,i)$, let
    \begin{align}
        \left(\rho_{\vee}^{k,i,s}(\omega)\right)_{k,i} = \left\{
        \begin{aligned}
            &s \quad \quad &\mbox{if } \omega_{k,i}=\circ\\
            &\omega_{k,i} &\mbox{if }\omega_{k,i}\neq \circ
        \end{aligned}
        \right.
    \end{align}
    and
    \begin{align}
        \left(\rho_{\wedge}^{k,i,s}(\omega)\right)_{k,i} = \left\{
        \begin{aligned}
            & \circ \quad \quad &\mbox{if } \omega_{k,i}=s\\
            &\omega_{k,i} &\mbox{if }\omega_{k,i}\neq s
        \end{aligned}
        \right.
    \end{align}
    We make the following important claim
    \begin{claim}\label{cla:coupling}
    \begin{equation}
        f\left(\boldsymbol{x}\vee \left(1-\sum_{s'\in G_k, s'\neq s} x_{k,i,s'}\right)\boldsymbol{e}_{k,i,s}\right) = \E_{\omega\sim \mbox{pre-EXT}_{PM}(\boldsymbol{x})}\left[g\left(\rho\left(\rho_{\vee}^{k,i,s}(\omega)\right)\right)\right]
    \end{equation}
    and 
    \begin{equation}\label{eq:cp2}
        f(\boldsymbol{x}\wedge  \bar{\boldsymbol{e}}_{k,i,s})=\E_{\omega\sim \mbox{pre-EXT}_{PM}(\boldsymbol{x})}\left[g\left(\rho\left(\rho_{\wedge}^{k,i,s}(\omega)\right)\right)\right].
    \end{equation}
    \end{claim}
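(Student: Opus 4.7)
The plan is to prove both equalities by a coupling / distribution-matching argument at the level of $\mbox{pre-EXT}_{PM}$ on $\Omega$, then push forward by $\rho$ and take expectation of $g$. Since the components $\omega_{k',i'}$ of $\omega\sim\mbox{pre-EXT}_{PM}(\boldsymbol{y})$ are mutually independent across $(k',i')$ for every $\boldsymbol{y}\in\mathcal{K}$, and the maps $\rho_{\vee}^{k,i,s}$ and $\rho_{\wedge}^{k,i,s}$ modify only the $(k,i)$-th slot, it is enough to verify that the marginal distribution on slot $(k,i)$ matches on both sides of each equation; the other slots are distributionally identical by construction, so the full joint laws will then agree.

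For the first equation, let $\boldsymbol{x}^{\vee}:=\boldsymbol{x}\vee\bigl(1-\sum_{s'\neq s}x_{k,i,s'}\bigr)\boldsymbol{e}_{k,i,s}$. The simplex constraint at $(k,i)$ gives $x_{k,i,s}\leq 1-\sum_{s'\neq s}x_{k,i,s'}$, so $\boldsymbol{x}^{\vee}$ differs from $\boldsymbol{x}$ only in the $(k,i,s)$ coordinate, raising it to $1-\sum_{s'\neq s}x_{k,i,s'}$. Under $\mbox{pre-EXT}_{PM}(\boldsymbol{x}^{\vee})$, slot $(k,i)$ therefore takes value $s$ with probability $1-\sum_{s'\neq s}x_{k,i,s'}$, value $s'\in G_k\setminus\{s\}$ with probability $x_{k,i,s'}$, and value $\circ$ with probability $0$. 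On the other hand, for $\omega\sim\mbox{pre-EXT}_{PM}(\boldsymbol{x})$, the slot-$(k,i)$ marginal of $\rho_{\vee}^{k,i,s}(\omega)$ transports the mass originally on $\circ$ onto $s$, producing mass $x_{k,i,s}+\bigl(1-\sum_{s''\in G_k}x_{k,i,s''}\bigr)=1-\sum_{s'\neq s}x_{k,i,s'}$ on $s$, mass $x_{k,i,s'}$ on each $s'\neq s$, and $0$ on $\circ$. These marginals agree, hence the two joint distributions on $\Omega$ coincide, and applying $g\circ\rho$ and taking expectation yields the claimed identity.

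The second equation is strictly analogous. The coordinate $(\boldsymbol{x}\wedge\bar{\boldsymbol{e}}_{k,i,s})_{k,i,s}=0$, so under $\mbox{pre-EXT}_{PM}(\boldsymbol{x}\wedge\bar{\boldsymbol{e}}_{k,i,s})$ the slot-$(k,i)$ marginal assigns probability $0$ to $s$, probability $x_{k,i,s'}$ to each $s'\neq s$, and probability $1-\sum_{s'\neq s}x_{k,i,s'}$ to $\circ$. The map $\rho_{\wedge}^{k,i,s}$ relabels any realization $\omega_{k,i}=s$ as $\circ$ and leaves the other values fixed, so under $\mbox{pre-EXT}_{PM}(\boldsymbol{x})$ the marginal of $(\rho_{\wedge}^{k,i,s}(\omega))_{k,i}$ places mass $x_{k,i,s'}$ on each $s'\neq s$, mass $0$ on $s$, and mass $\bigl(1-\sum_{s''\in G_k}x_{k,i,s''}\bigr)+x_{k,i,s}=1-\sum_{s'\neq s}x_{k,i,s'}$ on $\circ$. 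The marginals match again, so the two joint distributions agree and the equality follows.

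There is no serious obstacle here; the argument is pure bookkeeping. The only mild subtlety is handling the boundary cases where $\omega_{k,i}$ already equals $s$ (for $\rho_{\vee}^{k,i,s}$) or already equals $\circ$ (for $\rho_{\wedge}^{k,i,s}$), but the piecewise definitions of these maps are tailored precisely so that those cases leave $\omega_{k,i}$ untouched, which is exactly the behaviour needed to make the marginals line up without double-counting or omissions.
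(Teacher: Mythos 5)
Your proof is correct and follows essentially the same route as the paper's: both arguments establish that the push-forward of $\mbox{pre-EXT}_{PM}(\boldsymbol{x})$ under $\rho_{\vee}^{k,i,s}$ (resp.\ $\rho_{\wedge}^{k,i,s}$) coincides with $\mbox{pre-EXT}_{PM}$ evaluated at the modified point, by checking that the slot-$(k,i)$ marginal matches while all other slots are untouched and independent. The mass-transport bookkeeping ($\circ\mapsto s$ contributing $x_{k,i,s}+(1-\sum_{s''\in G_k}x_{k,i,s''})=1-\sum_{s'\neq s}x_{k,i,s'}$, and dually for the wedge case) is exactly the computation the paper performs.
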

    \begin{proof}[proof of \cla{coupling}]
        Let $\boldsymbol{x}^{\vee,k,i,s}:= \boldsymbol{x}\vee \left(1-\sum_{s'\in G_k, s'\neq s} x_{k,i,s'}\right)\boldsymbol{e}_{k,i,s}$.
        \[f(\boldsymbol{x}^{\vee,k,i,s}) = \E_{\omega\sim \mbox{pre-EXT}_{PM}(\boldsymbol{x}^{\vee,k,i,s})}\left[g(\rho(\omega))\right]\]
        Let $\omega \sim \mbox{pre-EXT}_{PM}(\boldsymbol{x})$, it's enough to show $\rho_{\vee}^{k,i,s}(\omega)\sim \mbox{pre-EXT}(\boldsymbol{x}^{\vee,k,i,s})$. Let $\omega_{\vee}\sim \mbox{pre-EXT}_{PM}(\boldsymbol{x}^{\vee,k,i,s}) $. For $(k',i')\neq (k,i)$ and $s'\in G_{k'}$, $\Pr((\omega_{\vee})_{k',i'}=s')=\Pr((\rho_{\vee}^{k,i,s})_{k',i'}=s') = x_{k',i',s'}$, and $\Pr((\omega_{\vee})_{k',i'}=\circ)=\Pr((\rho_{\vee}^{k,i,s}(\omega))_{k',i'}=\circ) = 1-\sum_{s'\in G_{k'}} x_{k',i',s'}$. For $(k,i)$-component and $s'\neq s$, $\Pr((\omega_{\vee})_{k,i}=s')=x_{k,i,s'} = \Pr((\rho_{\vee}^{k,i,s}(\omega))_{k,i}=s')$. For $s$ and $\circ$,
        \[\Pr((\omega_{\vee})_{k,i}=s)=1-\sum_{s'\in G_{k},s'\neq s}x_{k,i,s'}, \quad \quad \Pr((\omega_{\vee})_{k,i}=\circ)=0\]
        $(\rho_{\vee}^{k,i,s}(\omega))_{k,i}=s$ whenever $\omega \in \{s,\circ\}$, thus 
        \[\Pr((\rho_{\vee}^{k,i,s}(\omega))_{k,i}=s)=1-\sum_{s'\in G_{k},s'\neq s}x_{k,i,s'}=\Pr((\omega_{\vee})_{k,i}=s)\]
        Since $\rho_{\vee}^{k,i,s}(\omega))_{k,i}$ never be $\circ$, $\Pr(\rho_{\vee}^{k,i,s}(\omega))_{k,i}=\circ)=0$. Thus $\rho_{\vee}^{k,i,s}(\omega)\sim \mbox{pre-EXT}(\boldsymbol{x}^{\vee,k,i,s})$, and
        \[f(\boldsymbol{x}^{\vee,k,i,s}) = \E_{\omega\sim \mbox{pre-EXT}_{PM}(\boldsymbol{x}^{\vee,k,i,s})}\left[g(\rho(\omega))\right]= \E_{\omega\sim \mbox{pre-EXT}_{PM}(\boldsymbol{x})}\left[g\left(\rho\left(\rho_{\vee}^{k,i,s}(\omega)\right)\right)\right].\]
        In brief, whenever $\omega_{k,i}\in\{s,\circ\}$, $ \rho_{\vee}^{k,i,s}(\omega)=s$. That is, $\Pr((\rho_{\vee}^{k,i,s}(\omega))_{k,i}=s)= \Pr (\omega_{k,i}\in \{s,\circ\}) = 1-\Pr(\omega_{k,i}\notin \{s,\circ\})=1-\sum_{s'\neq s,s'\in G_k}x_{k,i,s'}$ which is the same as a sample in $\mbox{pre-EXT}_{PM}(\boldsymbol{x}^{\vee,k,i,s})$.

        For \eq{cp2}, we can define $\boldsymbol{x}^{\wedge,k,i,s}:=\boldsymbol{x}\bar{\boldsymbol{e}}_{k,i,s}$ and let $\omega_{\wedge}\sim \mbox{pre-EXT}_{PM}(\boldsymbol{x}^{\wedge,k,i,s})$. One can check that for $(k',i',s')\neq (k,i,s)$, $\Pr((\rho_{\wedge}^{k,i,s}(\omega))_{k',i'}=s')=\Pr((\omega_{\wedge})_{k',i'}=s')=x_{k',i',s'}$ and $\Pr((\rho_{\wedge}^{k,i,s}(\omega))_{k',i'}=\circ)=\Pr((\omega_{\wedge})_{k',i'}=\circ)=1-\sum_{s'\in G_{k'}}x_{k',i',s'}$. For $(k,i,s)$, 
        \[\Pr((\rho_{\wedge}^{k,i,s}(\omega))_{k,i}=\circ)= \Pr((\omega_{\wedge})_{k,i}=\circ)=1-\sum_{s'\in G_k,s'\neq s}x_{k,i,s'}\] and
        \[\Pr((\rho_{\wedge}^{k,i,s}(\omega))_{k,i}=s)=\Pr((\rho_{\wedge}^{k,i,s}(\omega))_{k,i}=s)=0.\]
        So $\rho_{\wedge}^{k,i,s}(\omega)\sim \mbox{pre-EXT}(\boldsymbol{x}^{\wedge,k,i,s})$ and \eq{cp2} holds.
    \end{proof}
    
    If $\omega_{k,i}=\circ$ or $\omega_{k,i}=s$, one can check that $\left(\rho_{\vee}^{k,i,s}(\omega)\right)_{k,i}=s$ and $\left(\rho_{\wedge}^{k,i,s}(\omega)\right)_{k,i}=\omega_{k,i}=\circ$, therefore $\rho(\rho_{\vee}^{k,i,s}(\omega))=\rho(\rho_{\wedge}^{k,i,s}(\omega))\cup \{s\}$, so $\rho\left(\rho_{\wedge}^{k,i,s}(\omega)\right)\subseteq \rho\left(\rho_{\vee}^{k,i,s}(\omega)\right)$. if $\omega_{k,i}\notin \{\circ,s\}$, then $\rho_{\vee}^{k,i,s}(\omega)=\rho_{\wedge}^{k,i,s}(\omega)$, so $\rho\left(\rho_{\wedge}^{k,i,s}(\omega)\right)= \rho\left(\rho_{\vee}^{k,i,s}(\omega)\right)$. Thus we proved $\rho\left(\rho_{\wedge}^{k,i,s}(\omega)\right)\subseteq \rho\left(\rho_{\vee}^{k,i,s}(\omega)\right)$ for any $\omega$. Since $g$ is monotone, we have
    \begin{align*}
        \frac{\partial f}{\partial x_{k,i,s}}(\boldsymbol{x})&=\frac{\E\limits_{\omega\sim \mbox{pre-EXT}_{PM}(\boldsymbol{x})}\left[g\left(\rho\left(\rho_{\vee}^{k,i,s}(\omega)\right)\right)-g\left(\rho\left(\rho_{\wedge}^{k,i,s}(\omega)\right)\right)\right]}{1-\sum_{s'\in G_k , s'\neq s} x_{k,i,s'}}
        \geq 0
    \end{align*}
    Note that $\rho_{\wedge}^{k,i,s}(\omega)\neq \rho_{\vee}^{k,i,s}(\omega)$ only happens when $\omega_{k,i} =s$ or $\omega_{k,i} =\circ$, thus,
    \begin{align*}
        \left|\frac{\partial f}{\partial x_{k,i,s}}(\boldsymbol{x})\right|&\leq \frac{M\Pr(\omega_{k,i}\in \{s,\circ\})}{1-\sum_{s'\in G_k , s'\neq s} x_{k,i,s'}}\\
        &= \frac{M(1-\sum_{s'\in G_k , s'\neq s} x_{k,i,s'})}{1-\sum_{s'\in G_k , s'\neq s} x_{k,i,s'}}=M
    \end{align*}
    which shows that $\|\nabla f\|_{\infty}\leq M$. Since $\nabla f\in \mathbb{R}^{\sum_{k=1}^K r_k|G_k|}$, $\|\nabla f\|_2 \leq \sqrt{\sum_{k=1}^K r_k|G_k|}\|\nabla f\|_{\infty}=M\sqrt{\sum_{k=1}^K r_k|G_k|}$. Therefore, $f$ is $M\sqrt{\sum_{k=1}^K r_k|G_k|}$-lipschitz continuous.
    
    Since the partial derivative of a multi-linear function is also multi-linear, $\frac{\partial f}{\partial x_{k,i,s}}(\boldsymbol{x})$ is multi-linear for any $k,i,s$. Then the second derivative of $f$ can be writen as
    \begin{align*}
        \frac{\partial^2 f}{\partial x_{k_1,i_1,s_1} \partial x_{k_2,i_2,s_2}}(\boldsymbol{x}) &= \frac{\frac{\partial f}{\partial x_{k_2,i_2,s_2}}\left(\boldsymbol{x}\vee (1-\sum_{s'\in G_{k_1} , s'\neq s_1} x_{k,i,s'})\boldsymbol{e}_{k_1,i_1,s_1}\right)-\frac{\partial f}{\partial x_{k_2,i_2,s_2}}\left(\boldsymbol{x}\wedge  \bar{\boldsymbol{e}}_{k_1,i_1,s_1}\right)}{1-\sum_{s'\in G_{k_1} , s'\neq s_1} x_{k_1,i_1,s'}}
        \\&=\frac{f\left(\boldsymbol{x}\vee (1-\sum_{s'\in G_{k_1} , s'\neq s_1} x_{k',i',s'})\boldsymbol{e}_{k_1,i_1,s_1}\vee (1-\sum_{s'\in G_{k_1} , s'\neq s_2} x_{k',i',s'})\boldsymbol{e}_{k_2,i_2,s_2}\right)}{(1-\sum_{s'\in G_{k_2} , s'\neq s_1} x_{k_1,i_1,s'})(1-\sum_{s'\in G_{k_2} , s'\neq s_2} x_{k_2,i_2,s'})}
        \\&\quad - \frac{f\left((\boldsymbol{x}\wedge  \bar{\boldsymbol{e}}_{k_2,i_2,s_2})\vee (1-\sum_{s'\in G_{k_1} , s'\neq s_2} x_{k',i',s'})\boldsymbol{e}_{k_1,i_1,s_1}\right)}{(1-\sum_{s'\in G_{k_1} , s'\neq s_1} x_{k_1,i_1,s'})(1-\sum_{s'\in G_{k_2} , s'\neq s_2} x_{k_2,i_2,s'})}
        \\&\quad - \frac{f\left((\boldsymbol{x}\vee (1-\sum_{s'\in G_{k_2} , s'\neq s_2} x_{k',i',s'})\boldsymbol{e}_{k_2,i_2,s_2})\wedge  \bar{\boldsymbol{e}}_{k_1,i_1,s_1}\right)}{(1-\sum_{s'\in G_{k_2} , s'\neq s_1} x_{k_1,i_1,s'})(1-\sum_{s'\in G_{k_2} , s'\neq s_2} x_{k_2,i_2,s'})}
        \\&\quad + \frac{f\left(\boldsymbol{x}\wedge  \bar{\boldsymbol{e}}_{k_2,i_2,s_2}\wedge  \bar{\boldsymbol{e}}_{k_1,i_1,s_1}\right)}{(1-\sum_{s'\in G_{k_2} , s'\neq s_1} x_{k_1,i_1,s'})(1-\sum_{s'\in G_{k_2} , s'\neq s_2} x_{k_2,i_2,s'})}
    \end{align*}
    
    To prove the DR-submodularity of $f$, We define $4$ mappings $\rho_{\vee,\vee}^{k_1,i_1,s_1,k_2,i_2,s_2},\rho_{\vee,\wedge}^{k_1,i_1,s_1,k_2,i_2,s_2},\rho_{\wedge,\vee}^{k_1,i_1,s_1,k_2,i_2,s_2}$, $\rho_{\wedge,\wedge}^{k_1,i_1,s_1,k_2,i_2,s_2}:\Omega\longrightarrow \Omega$.
    \begin{align*}
        &\rho_{\vee,\vee}^{k_1,i_1,s_1,k_2,i_2,s_2}(\omega)=\rho_{\vee}^{k_1,i_1,s_1}\left(\rho_{\vee}^{k_2,i_2,s_2}(\omega)\right) \quad &\rho_{\vee,\wedge}^{k_1,i_1,s_1,k_2,i_2,s_2}(\omega)=\rho_{\vee}^{k_1,i_1,s_1}\left(\rho_{\wedge}^{k_2,i_2,s_2}(\omega)\right)
        \\& \rho_{\wedge,\vee}^{k_1,i_1,s_1,k_2,i_2,s_2}(\omega)=\rho_{\wedge}^{k_1,i_1,s_1}\left(\rho_{\vee}^{k_2,i_2,s_2}(\omega)\right) \quad &\rho_{\wedge,\wedge}^{k_1,i_1,s_1,k_2,i_2,s_2}(\omega)=\rho_{\wedge}^{k_1,i_1,s_1}\left(\rho_{\wedge}^{k_2,i_2,s_2}(\omega)\right)
     \end{align*}
     Same as \cla{coupling}, we have
     \begin{align*}
         &\frac{\partial f}{\partial x_{k_1,i_1,s_1} \partial x_{k_2,i_2,s_2}}(\boldsymbol{x}) \\&= 
         \frac{\E\limits_{\omega\sim \mbox{pre-EXT}_{PM}(\boldsymbol{x})}\left[\rho\left(\rho_{\vee,\vee}^{k_1,i_1,s_1,k_2,i_2,s_2}(\omega)\right)-\rho\left(\rho_{\vee,\wedge}^{k_1,i_1,s_1,k_2,i_2,s_2}(\omega)\right)\right]}{(1-\sum_{s'\in G_{k_2} , s'\neq s_1} x_{k_1,i_1,s'})(1-\sum_{s'\in G_{k_2} , s'\neq s_2} x_{k_2,i_2,s'})}
         \\&\quad \quad \quad \quad\quad \quad +\frac{\E\limits_{\omega\sim \mbox{pre-EXT}_{PM}(\boldsymbol{x})}\left[-\rho\left(\rho_{\wedge,\vee}^{k_1,i_1,s_1,k_2,i_2,s_2}(\omega)\right)+\left(\rho_{\wedge,\wedge}^{k_1,i_1,s_1,k_2,i_2,s_2}(\omega)\right)\right]}{(1-\sum_{s'\in G_{k_2} , s'\neq s_1} x_{k_1,i_1,s'})(1-\sum_{s'\in G_{k_2} , s'\neq s_2} x_{k_2,i_2,s'})}.
     \end{align*}
     
     We first consider the situation where $k_1=k_2$ and $i_1=i_2$. Recall that $\Pr(\omega_{k_1,i_1})\in \left\{x_{k_1,i_1,s}\mid s\in G_{k}\right\}\cup \{1-\sum_{s\in G_{k_1}}x_{k_1,i_1,s}\} $, so $\frac{\partial f}{\partial x_{k_1,i_1,s_1} \partial x_{k_2,i_2,s_2}}\Pr(\omega_{k_1,i_1})=0$ when $k_1=k_2$ and $i_1=i_2$. In this situation,
     \begin{equation}\label{eq:2de}
          \begin{aligned}
         \frac{\partial f}{\partial x_{k_1,i_1,s_1} \partial x_{k_2,i_2,s_2}}(\boldsymbol{x}) &= 
         \sum_{\omega\in \Omega} g(\rho(\omega))\frac{\partial f}{\partial x_{k_1,i_1,s_1} \partial x_{k_2,i_2,s_2}}\Pr(\omega)
         \\&=\sum_{\omega\in \Omega} g(\rho(\omega))\left(\prod\limits_{\myatop{k'\in [K],i'\in [r_{k'}]}{k'\neq k_1 \mbox{ or }i'\neq i_1}}\Pr (\omega_{k',i'})\right) \frac{\partial f}{\partial x_{k_1,i_1,s_1} \partial x_{k_2,i_2,s_2}}\Pr(\omega_{k_1,i_1}) 
         \\& = 0 \quad\quad\quad \mbox{if } k_1= k_2 \mbox{ and } i_1= i_2
     \end{aligned}
     \end{equation}
     
    Then we consider the situation that $k_1\neq k_2$ or $i_1\neq i_2$, 
    
    \noindent \textbf{Case 1 : $k_1\neq k_2 \mbox{ or } i_1\neq i_2,\ \omega_{k_1,i_1}\in \{s_1,\circ\}$ and $\omega_{k_2,i_2}\in\{s_2,\circ\}$}. In this case,
    \begin{align*}
        &\left(\rho_{\vee,\vee}^{k_1,i_1,s_1,k_2,i_2,s_2}(\omega)\right)_{k_1,i_1}=s_1 \quad &\left(\rho_{\vee,\vee}^{k_1,i_1,s_1,k_2,i_2,s_2}(\omega)\right)_{k_2,i_2}=s_2\\ &\left(\rho_{\vee,\wedge}^{k_1,i_1,s_1,k_2,i_2,s_2}(\omega)\right)_{k_1,i_1}=s_1 & \left(\rho_{\vee,\wedge}^{k_1,i_1,s_1,k_2,i_2,s_2}(\omega)\right)_{k_2,i_2}=\circ
        \\& \left(\rho_{\wedge,\vee}^{k_1,i_1,s_1,k_2,i_2,s_2}(\omega)\right)_{k_1,i_1}=\circ &\left(\rho_{\wedge,\vee}^{k_1,i_1,s_1,k_2,i_2,s_2}(\omega)\right)_{k_2,i_2}=s_2
        \\&\left(\rho_{\wedge,\wedge}^{k_1,i_1,s_1,k_2,i_2,s_2}(\omega)\right)_{k_1,i_1}=\circ &\left(\rho_{\wedge,\wedge}^{k_1,i_1,s_1,k_2,i_2,s_2}(\omega)\right)_{k_2,i_2}=\circ
     \end{align*}
     $\rho_{\vee,\vee}^{k_1,i_1,s_1,k_2,i_2,s_2}(\omega),\rho_{\vee,\vee}^{k_1,i_1,s_1,k_2,i_2,s_2}(\omega),\rho_{\vee,\vee}^{k_1,i_1,s_1,k_2,i_2,s_2}(\omega),\rho_{\vee,\vee}^{k_1,i_1,s_1,k_2,i_2,s_2}(\omega)$ are equal in all but above two components $k_1,i_2$ and $k_2,i_2$. Thus
     \begin{align*}
         \rho\left(\rho_{\vee,\vee}^{k_1,i_1,s_1,k_2,i_2,s_2}(\omega)\right) &= \rho\left(\rho_{\wedge,\wedge}^{k_1,i_1,s_1,k_2,i_2,s_2}(\omega)\right)\cup \{s_1,s_2\} 
         \\\rho\left( \rho_{\vee,\wedge}^{k_1,i_1,s_1,k_2,i_2,s_2}(\omega)\right) &= \rho\left(\rho_{\wedge,\wedge}^{k_1,i_1,s_1,k_2,i_2,s_2}(\omega)\right)\cup \{s_1\}
         \\\rho\left(\rho_{\wedge,\vee}^{k_1,i_1,s_1,k_2,i_2,s_2}(\omega) \right)&= \rho\left(\rho_{\wedge,\wedge}^{k_1,i_1,s_1,k_2,i_2,s_2}(\omega)\right)\cup \{s_2\} 
     \end{align*}
     For monotone submodular $g$, by \lem{monotone submodular},
     \begin{align*}
         &g\left(\rho\left(\rho_{\wedge,\wedge}^{k_1,i_1,s_1,k_2,i_2,s_2}(\omega)\right)\cup \{s_1,s_2\} \right)-g\left(\rho\left( \rho_{\wedge,\wedge}^{k_1,i_1,s_1,k_2,i_2,s_2}(\omega)\right)\cup \{s_1\}\right)\\&\quad \quad \quad \geq g\left(\rho\left(\rho_{\wedge,\wedge}^{k_1,i_1,s_1,k_2,i_2,s_2}(\omega) \right)\cup\{s_2\}\right)-g\left(\rho\left(\rho_{\wedge,\wedge}^{k_1,i_1,s_1,k_2,i_2,s_2}(\omega) \right)\right)
     \end{align*}
     thus,
     \begin{align*}
         \rho\left(\rho_{\vee,\vee}^{k_1,i_1,s_1,k_2,i_2,s_2}(\omega)\right)-\rho\left(\rho_{\vee,\wedge}^{k_1,i_1,s_1,k_2,i_2,s_2}(\omega)\right)-\rho\left(\rho_{\wedge,\vee}^{k_1,i_1,s_1,k_2,i_2,s_2}(\omega)\right)+\left(\rho_{\wedge,\wedge}^{k_1,i_1,s_1,k_2,i_2,s_2}(\omega)\right)\leq 0
     \end{align*}

     \noindent \textbf{Case 2 : $k_1\neq k_2 \mbox{ or } i_1\neq i_2,\ \omega_{k_1,i_1}\in \{s_1,\circ\}\mbox{ and } \omega_{k_2,i_2}\notin\{s_2,\circ\}$}. In this case, 
     \begin{align*}
        \rho_{\vee}^{k_2,i_2,s_2}(\omega)=\rho_{\wedge}^{k_2,i_2,s_2}(\omega)=\omega
     \end{align*}
     thus,
     \begin{align*}
        &\rho_{\vee,\vee}^{k_1,i_1,s_1,k_2,i_2,s_2}(\omega)=\rho_{\vee}^{k_1,i_1,s_1}\left(\omega\right) \quad &\rho_{\vee,\wedge}^{k_1,i_1,s_1,k_2,i_2,s_2}(\omega)=\rho_{\vee}^{k_1,i_1,s_1}\left(\omega\right)
        \\& \rho_{\wedge,\vee}^{k_1,i_1,s_1,k_2,i_2,s_2}(\omega)=\rho_{\wedge}^{k_1,i_1,s_1}\left(\omega\right) \quad &\rho_{\wedge,\wedge}^{k_1,i_1,s_1,k_2,i_2,s_2}(\omega)=\rho_{\wedge}^{k_1,i_1,s_1}\left(\omega\right)
      \end{align*}
      and 
      \begin{align*}
          \rho\left(\rho_{\vee,\vee}^{k_1,i_1,s_1,k_2,i_2,s_2}(\omega)\right)-\rho\left(\rho_{\vee,\wedge}^{k_1,i_1,s_1,k_2,i_2,s_2}(\omega)\right)-\rho\left(\rho_{\wedge,\vee}^{k_1,i_1,s_1,k_2,i_2,s_2}(\omega)\right)+\left(\rho_{\wedge,\wedge}^{k_1,i_1,s_1,k_2,i_2,s_2}(\omega)\right)=0
      \end{align*}
      
      \noindent \textbf{Case 3 : $k_1\neq k_2 \mbox{ or } i_1\neq i_2,\ \omega_{k_1,i_1}\notin \{s_1,\circ\}\mbox{ and } \omega_{k_2,i_2}\in\{s_2,\circ\}$}. Since $(k_1,i_1)\neq(k_2,i_2)$, $\rho_{\vee}^{k_2,i_2,s_2}$ and $\rho_{\wedge}^{k_2,i_2,s_2}$ do not change the $k_1,i_1$ component of $\omega$, that is,
     \begin{align*}
        \left(\rho_{\vee}^{k_2,i_2,s_2}(\omega)\right)_{k_1,i_1}=\left(\rho_{\wedge}^{k_2,i_2,s_2}(\omega)\right)_{k_1,i_1} = \omega_{k_1,i_1}
     \end{align*}
     Therefore,
     \begin{align*}
        &\rho_{\vee,\vee}^{k_1,i_1,s_1,k_2,i_2,s_2}(\omega)=\rho_{\vee}^{k_2,i_2,s_2}\left(\omega\right) \quad &\rho_{\vee,\wedge}^{k_1,i_1,s_1,k_2,i_2,s_2}(\omega)=\rho_{\wedge}^{k_2,i_2,s_2}\left(\omega\right)
        \\& \rho_{\wedge,\vee}^{k_1,i_1,s_1,k_2,i_2,s_2}(\omega)=\rho_{\vee}^{k_2,i_2,s_2}\left(\omega\right) \quad &\rho_{\wedge,\wedge}^{k_1,i_1,s_1,k_2,i_2,s_2}(\omega)=\rho_{\wedge}^{k_2,i_2,s_2}\left(\omega\right)
      \end{align*}
      and 
      \begin{align*}
          \rho\left(\rho_{\vee,\vee}^{k_1,i_1,s_1,k_2,i_2,s_2}(\omega)\right)-\rho\left(\rho_{\vee,\wedge}^{k_1,i_1,s_1,k_2,i_2,s_2}(\omega)\right)-\rho\left(\rho_{\wedge,\vee}^{k_1,i_1,s_1,k_2,i_2,s_2}(\omega)\right)+\left(\rho_{\wedge,\wedge}^{k_1,i_1,s_1,k_2,i_2,s_2}(\omega)\right)=0
      \end{align*}
      
      \noindent \textbf{Case 4 : $k_1\neq k_2 \mbox{ or } i_1\neq i_2,\ \omega_{k_1,i_1}\notin \{s_1,\circ\}\mbox{ and } \omega_{k_2,i_2}\notin\{s_2,\circ\}$}. In this case,
      \[\rho_{\vee,\vee}^{k_1,i_1,s_1,k_2,i_2,s_2}(\omega)=\rho_{\vee,\wedge}^{k_1,i_1,s_1,k_2,i_2,s_2}(\omega)=\rho_{\wedge,\vee}^{k_1,i_1,s_1,k_2,i_2,s_2}(\omega)=\rho_{\wedge,\wedge}^{k_1,i_1,s_1,k_2,i_2,s_2}(\omega)=\omega\]
      thus
      \begin{align*}
          \rho\left(\rho_{\vee,\vee}^{k_1,i_1,s_1,k_2,i_2,s_2}(\omega)\right)-\rho\left(\rho_{\vee,\wedge}^{k_1,i_1,s_1,k_2,i_2,s_2}(\omega)\right)-\rho\left(\rho_{\wedge,\vee}^{k_1,i_1,s_1,k_2,i_2,s_2}(\omega)\right)+\left(\rho_{\wedge,\wedge}^{k_1,i_1,s_1,k_2,i_2,s_2}(\omega)\right)=0.
      \end{align*}
      
      In all $4$ cases above, whatever $\omega$ is,
      \begin{align*}
          \rho\left(\rho_{\vee,\vee}^{k_1,i_1,s_1,k_2,i_2,s_2}(\omega)\right)-\rho\left(\rho_{\vee,\wedge}^{k_1,i_1,s_1,k_2,i_2,s_2}(\omega)\right)-\rho\left(\rho_{\wedge,\vee}^{k_1,i_1,s_1,k_2,i_2,s_2}(\omega)\right)+\left(\rho_{\wedge,\wedge}^{k_1,i_1,s_1,k_2,i_2,s_2}(\omega)\right)\leq 0
      \end{align*}
      holds. Thus,
     \begin{align}\label{eq:2de2}
         \frac{\partial f}{\partial x_{k_1,i_1,s_1} \partial x_{k_2,i_2,s_2}}(\boldsymbol{x}) \leq 0 \quad \quad \mbox{if } k_1\neq k_2 \mbox{ or } i_1\neq i_2.
     \end{align}
     Combining \eq{2de} and \eq{2de2}, $\frac{\partial f}{\partial x_{k_1,i_1,s_1} \partial x_{k_2,i_2,s_2}}(\boldsymbol{x}) \leq 0 $ for any $k_1,i_1,s_1,k_2,i_2,s_2$, which shows the DR-submodularity of $f$.
     
\end{proof}

\begin{customcoro}{5.4}
    There is an algorithm attaining the expected $(1-1/e)$-regret of
    \[\mathcal{R}_{1-1/e}(T)\leq O\left(\left(\sum_{k=1}^K r_k|G_k|\right)^{5/3}T^{2/3}\log T\right)\]
    on any $(\mathcal{S}_{PM},\mathcal{G}_{MS})$-bandit.
\end{customcoro}

\begin{proof}[Proof of \cor{BMSMPM}]
    Since $\mbox{EXT}_{PM}$ satisfies the conditions in \lem{reduction} and the dimension of $\mathcal{K}$ is $d=\sum_{k=1}^K r_k|G_k|$. This is a direct corollary of \lem{reduction}.
\end{proof}

\subsection{Proof of \lem{BSSM} and \cor{BSSM}}

\begin{customlem}{5.5}
    For $\mathcal{G}_{SS}$, the extension mapping $\mbox{EXT}_{SS}:\mathcal{K}\rightarrow \Delta(\mathcal{S}_{OL})$ satisfies the conditions in \lem{reduction}. Moreover, $\mathcal{K}$ is in a $|G|^2-|G| $ dimensional real vector space. For any $g\in \mathcal{G}_{SS}$, the continuous extension $f(\boldsymbol{x})=\E_{s\in \mbox{EXT}(\boldsymbol{x})}[g(s)]$ is $M|G|$-lipschitz. 
\end{customlem}

\begin{proof}[Proof of \lem{BSSM}]
    The condition $1$ and $3$ of \lem{reduction} are obviously satisfied. Now we check the multi-linearity of $f$. For $\boldsymbol{x}\in\mathcal{K}$, we write $\boldsymbol{x}=(x_{i,s})_{i\in |G|,s\in G'}$. Consider $S \in \mathcal{S}_{OL}$, given $\boldsymbol{x}$, the probability of $S$ in distribution $\mbox{EXT}_{SS}(\boldsymbol{x})$ is $\Pr(S) =\prod_{i=1}^{|G|}\Pr(S_i)$. For any $1\leq i\leq |G|$, $\Pr(S_i)\in \left\{x_{i,s}\mid s\in G'\right\}\cup \{1-\sum_{s\in G'}x_{i,s}\}$, thus $\Pr(S)$ is multi-linear with respect to the variables $x_{i,s}$. Then we write $f(\boldsymbol{x})$ as follows,
    \begin{align*}
        f(\boldsymbol{x})&= \E_{S\sim \mbox{EXT}_{SS}(\boldsymbol{x})} \left[g(S)\right]
        \\&=\sum_{S\in \mathcal{S}_{OL}} \Pr(S) g(S)
    \end{align*}
    Since $g(S)$ is a constant independent from $\boldsymbol{x}$, $f(\boldsymbol{x})$ is a linear combination of multi-linear functions, thus $f(\boldsymbol{x})$ is also multi-linear.

    $\mbox{EXT}(\boldsymbol{0})$ assigns probability $1$ to the ordered list $\{\circ\}^{|G|}$, thus $f(\boldsymbol{0})=g(\{\circ\}^{|G|})=0$. Next we check the monotonicity and DR-submodularity of $f(\boldsymbol{x})$.
    
     Define $\rho_{\vee}^{i,s},\rho_{\wedge}^{i,s}:\mathcal{S}_{OL}\longrightarrow \mathcal{S}_{OL}$. $\rho_{\vee}^{i,s}(S)\neq S$ only when the $i$-th position of $S$ is $\circ$, $\rho_{\vee}^{i,s}(S)$ change the $i$-th position of $S$ to $s$ and keep other positions unchanged. $\rho_{\wedge}^{i,s}(S)\neq S$ only when the $i$-th position of $S$ is $s$, $\rho_{\vee}^{i,s}(S)$ change the $i$-th position of $S$ to $\circ$ and keep other positions unchanged. Then,
    \begin{align*}
        \frac{\partial f}{\partial x_{i,s}}(\boldsymbol{x}) = \frac{\E\limits_{S\sim \mbox{EXT}_{SS}(\boldsymbol{x})}\left[g(\rho_{\vee}^{i,s}(S))-g(\rho_{\wedge}^{i,s}(S))\right]}{1-\sum_{s'\in G, s'\neq s }x_{i,s'}}
    \end{align*}
    Let $S^{\leq k}$ be the set containing the first $k$ elements in the ordered list $S$. Then $ (\rho_{\wedge}^{i,s}(S))^{\leq k}\subseteq (\rho_{\vee}^{i,s}(S))^{\leq k},\ \forall k$, recall $g_k$ is monotone and $\lambda_k>0$ for any $k$,
    \begin{align*}
        \frac{\partial f}{\partial x_{i,s}}(\boldsymbol{x}) = \frac{\E\limits_{S\sim \mbox{EXT}_{SSM}(\boldsymbol{x})}\left[\sum_{k=1}^{|G_k|}\lambda_k \left(g_k((\rho_{\vee}^{i,s}(S))^{\leq k})-g((\rho_{\wedge}^{i,s}(S))^{\leq k})\right)\right]}{1-\sum_{s'\in G, s'\neq s }x_{i,s'}}\geq 0
    \end{align*}
    
    Thus $f$ is monotone. Since $\Pr(\rho_{\vee}^{i,s}(S)\neq \rho_{\wedge}^{i,s}(S))\leq 1-\sum_{s'\in G, s'\neq s }x_{i,s'}$ and $\frac{\partial f}{\partial x_{i,s}}(\boldsymbol{x})\leq M$. Thus $\|\nabla f(\boldsymbol{x})\|_{\infty}\leq M$, $\|\nabla f(\boldsymbol{x})\|_2\leq M\sqrt{|G|(|G|-1)}\leq M|G|$, $f(\boldsymbol{x})$ is $M|G|$-lipschitz.
    
    We then define
        \begin{align*}
        &\rho_{\vee,\vee}^{i_1,s_1,i_2,s_2}(S)=\rho_{\vee}^{i_1,s_1}\left(\rho_{\vee}^{i_2,s_2}(S)\right) \quad &\rho_{\vee,\wedge}^{i_1,s_1,i_2,s_2}(S)=\rho_{\vee}^{i_1,s_1}\left(\rho_{\wedge}^{i_2,s_2}(S)\right)
        \\& \rho_{\wedge,\vee}^{i_1,s_1,i_2,s_2}(S)=\rho_{\wedge}^{i_1,s_1}\left(\rho_{\vee}^{i_2,s_2}(S)\right) \quad &\rho_{\wedge,\wedge}^{i_1,s_1,i_2,s_2}(S)=\rho_{\wedge}^{i_1,s_1}\left(\rho_{\wedge}^{i_2,s_2}(S)\right)
     \end{align*}
    Then,
    \begin{align*}
        \frac{\partial f}{\partial x_{i_1,s_1}\partial x_{i_2,s_2}}(\boldsymbol{x})&=\frac{\E\limits_{S\sim \mbox{EXT}_{SSM}(\boldsymbol{x})}\left[g(\rho_{\vee,\vee}^{i_1,s_1,i_2,s_2}(S))-g(\rho_{\vee,\wedge}^{i_1,s_1,i_2,s_2}(S))-g(\rho_{\wedge,\vee}^{i_1,s_1,i_2,s_2}(S))+g(\rho_{\wedge,\wedge}^{i_1,s_1,i_2,s_2}(S))\right]}{(1-\sum_{s'\in G, s'\neq s_1 }x_{i_1,s'})(1-\sum_{s'\in G, s'\neq s_2 }x_{i_2,s'})}
    \end{align*}
    We then prove $g(\rho_{\vee,\vee}^{i_1,s_1,i_2,s_2}(S))-g(\rho_{\vee,\wedge}^{i_1,s_1,i_2,s_2}(S))-g(\rho_{\wedge,\vee}^{i_1,s_1,i_2,s_2}(S))+g(\rho_{\wedge,\wedge}^{i_1,s_1,i_2,s_2}(S))\leq 0$ for any $S\in \mathcal{S}$.  It's enough to prove $g_i((\rho_{\vee,\vee}^{i_1,s_1,i_2,s_2}(S))^{\leq i})-g_i((\rho_{\vee,\wedge}^{i_1,s_1,i_2,s_2}(S))^{\leq i})-g_i((\rho_{\wedge,\vee}^{i_1,s_1,i_2,s_2}(S))^{\leq i})+g_i((\rho_{\wedge,\wedge}^{i_1,s_1,i_2,s_2}(S))^{\leq i})\leq 0 $ for any $i\in [|G|]$. Note that if $\max\{i_1,i_2\}>i$ then $g_i((\rho_{\vee,\vee}^{i_1,s_1,i_2,s_2}(S))^{\leq i})-g_i((\rho_{\vee,\wedge}^{i_1,s_1,i_2,s_2}(S))^{\leq i})-g_i((\rho_{\wedge,\vee}^{i_1,s_1,i_2,s_2}(S))^{\leq i})+g_i((\rho_{\wedge,\wedge}^{i_1,s_1,i_2,s_2}(S))^{\leq i})= 0 $, so we now consider the case $\max\{i_1,i_2\}\leq i$. 
    
    \noindent \textbf{Case 1 : $i_1=i_2$}. In this case, since $\Pr(S_{i_1})\in  \left\{x_{i_1,s}\mid s\in G'\right\}\cup \{1-\sum_{s\in G'}x_{i_1,s}\}$, $\frac{\partial^2 \Pr(S_{i_1})}{\partial x_{i_1,s_1}\partial x_{i_2,s_2}}=0$ when $i_1=i_2$. we have
    \begin{align*}
        \frac{\partial^2 f}{\partial x_{i_1,s_1}\partial x_{i_2,s_2}}(\boldsymbol{x})&=\frac{\partial^2}{\partial x_{i_1,s_1}\partial x_{i_2,s_2}}\sum_{S\in\mathcal{S}_{OL}}g(S)\prod_{i=1}^{|G|}\Pr(S_i)
        \\&=\sum_{S\in \mathcal{S}_{OL}}g(S)\left(\prod_{i\neq i_1}\Pr(S_i) \right)\frac{\partial^2 \Pr(S_{i_1})}{\partial x_{i_1,s_1}\partial x_{i_2,s_2}} 
        \\&=0
    \end{align*}
    
    \noindent \textbf{Case 2 : $i_1\neq i_2, S_{i_1}\in\{s_1,\circ\}\mbox{ and } S_{i_2}\in\{s_2,\circ\}$}. In this case,
    \begin{align*}
        &\left(\rho_{\vee,\vee}^{i_1,s_1,i_2,s_2}(S)\right)_{i_1}=s_1 \quad &\left(\rho_{\vee,\vee}^{i_1,s_1,i_2,s_2}(S)\right)_{i_2}=s_2\\ &\left(\rho_{\vee,\wedge}^{i_1,s_1,i_2,s_2}(S)\right)_{i_1}=s_1 & \left(\rho_{\vee,\wedge}^{i_1,s_1,i_2,s_2}(S)\right)_{i_2}=\circ
        \\& \left(\rho_{\wedge,\vee}^{i_1,s_1,i_2,s_2}(S)\right)_{i_1}=\circ &\left(\rho_{\wedge,\vee}^{i_1,s_1,i_2,s_2}(S)\right)_{i_2}=s_2
        \\&\left(\rho_{\wedge,\wedge}^{i_1,s_1,i_2,s_2}(S)\right)_{i_1}=\circ &\left(\rho_{\wedge,\wedge}^{i_1,s_1,i_2,s_2}(S)\right)_{i_2}=\circ
     \end{align*}
     $\rho_{\vee,\vee}^{_1,s_1,i_2,s_2}(S),\rho_{\vee,\vee}^{i_1,s_1,i_2,s_2}(S),\rho_{\vee,\vee}^{i_1,s_1,i_2,s_2}(S),\rho_{\vee,\vee}^{i_1,s_1,i_2,s_2}(S)$ are equal in all but above two components $i_2$ and $i_2$. Thus,
     \begin{align*}
         \left(\rho_{\vee,\vee}^{i_1,s_1,i_2,s_2}(S)\right)^{\leq i} &= \left(\rho_{\wedge,\wedge}^{i_1,s_1,i_2,s_2}(S)\right)^{\leq i}\cup \{s_1,s_2\} 
         \\\left( \rho_{\vee,\wedge}^{i_1,s_1,i_2,s_2}(S)\right)^{\leq i} &= \left(\rho_{\wedge,\wedge}^{i_1,s_1,i_2,s_2}(S)\right)^{\leq i}\cup \{s_1\}
         \\\left(\rho_{\wedge,\vee}^{i_1,s_1,i_2,s_2}(S) \right)^{\leq i}&= \left(\rho_{\wedge,\wedge}^{i_1,s_1,i_2,s_2}(S)\right)^{\leq i}\cup \{s_2\} 
     \end{align*}
     By \lem{monotone submodular},
     \[g_i((\rho_{\vee,\vee}^{i_1,s_1,i_2,s_2}(S))^{\leq i})-g_i((\rho_{\vee,\wedge}^{i_1,s_1,i_2,s_2}(S))^{\leq i})-g_i((\rho_{\wedge,\vee}^{i_1,s_1,i_2,s_2}(S))^{\leq i})+g_i((\rho_{\wedge,\wedge}^{i_1,s_1,i_2,s_2}(S))^{\leq i})\leq 0\]
     Then $\frac{\partial^2 f}{\partial x_{i_1,s_1}\partial x_{i_2,s_2}}(\boldsymbol{x})\leq 0$.

     \noindent \textbf{Case 3 : $i_1\neq i_2, S_{i_1}\notin\{s_1,\circ\}\mbox{ or } S_{i_2}\notin\{s_2,\circ\}$}: If $S_{i_1}\notin\{s_1,\circ\}$, then $\rho_{\vee,\vee}^{i_1,s_1,i_2,s_2}(S) = \rho_{\wedge,\vee}^{i_1,s_1,i_2,s_2}(S)$ and $\rho_{\vee,\wedge}^{i_1,s_1,i_2,s_2}(S) = \rho_{\wedge,\wedge}^{i_1,s_1,i_2,s_2}(S)$. If $S_{i_2}\notin\{s_2,\circ\}$, then $\rho_{\vee,\vee}^{i_1,s_1,i_2,s_2}(S) = \rho_{\vee,\wedge}^{i_1,s_1,i_2,s_2}(S)$ and $\rho_{\wedge,\vee}^{i_1,s_1,i_2,s_2}(S) = \rho_{\wedge,\wedge}^{i_1,s_1,i_2,s_2}(S)$. Either way, we have,
     \[g_i((\rho_{\vee,\vee}^{i_1,s_1,i_2,s_2}(S))^{\leq i})-g_i((\rho_{\vee,\wedge}^{i_1,s_1,i_2,s_2}(S))^{\leq i})-g_i((\rho_{\wedge,\vee}^{i_1,s_1,i_2,s_2}(S))^{\leq i})+g_i((\rho_{\wedge,\wedge}^{i_1,s_1,i_2,s_2}(S))^{\leq i})= 0.\]
    Then $\frac{\partial^2 f}{\partial x_{i_1,s_1}\partial x_{i_2,s_2}}(\boldsymbol{x})= 0$.
    
     In all cases, $\frac{\partial f}{\partial x_{i_1,s_1}\partial x_{i_2,s_2}}(\boldsymbol{x})\leq 0$, thus $f(\boldsymbol{x})$ is DR-submodular.
\end{proof}

\begin{customcoro}{5.6}
    There is an algorithm for attaining the expected $(1-1/e)$-regret of
    \[\mathcal{R}_{1-1/e}(T)\leq O\left((|G|)^{10/3}T^{2/3}\log T\right)\] on any $(\mathcal{S}_{OL},\mathcal{G}_{SS})$-bandit.
\end{customcoro}
    
\begin{proof}[Proof of \cor{BSSM}]
    Since $\mbox{EXT}_{SS}$ satisfies the conditions in \lem{reduction} and the dimension of $\mathcal{K}$ is $d=|G|(|G|-1)=O(|G|^2)$. This is a direct corollary of \lem{reduction}.
\end{proof}

\section{Remark on the Stochastic Submodular Bandit}\label{append:remark}
In this section, we show how our algorithms for adversarial setting can be applied to the stochastic setting proposed by \citet{nie2022explore}. We take the stochastic monotone submodular bandit with cardinality constraint investigated in \cite{nie2022explore} as an example.

\paragraph{Stochastic submodular bandit model} In the stochastic model, there is an unknown distribution $\mathcal{D}$, its support is a set of set functions, we denote the set $supp(\mathcal{D})$. Any set function $g\in supp(\mathcal{D})$ is defined on the power set of the ground set $G$, mapping a subset of $G$ to a reward between $[0,1]$, that is, $g:2^{G}\rightarrow [0,1]$. In $t$-th round, the reward function $g'_t$ is drawn from $\mathcal{D}$ and we can only select a subset $S_t\subseteq G$ such that $|S_t|\leq k$, which is a cardinality constraint. Then we gain reward $g'_t(S_t)$. Note that the model does not need $g'_t$ to be a monotone submodular function, but requires $g=\mathbb{E}_{g'_t\sim \mathcal{D}}[g'_t]$ to be monotone submodular. Our goal is to minimize the $(1-1/e)$-regret:
\[\mathcal{R}^{sto}_{1-1/e}(T) = (1-\frac{1}{e})T\cdot \max_{S^*\subseteq G, |S^*|\leq k}g(S^*) - \mathbb{E}\left[\sum_{t=1}^T g'_t(S_t)\right]=(1-\frac{1}{e})T\cdot \max_{S^*\subseteq G, |S^*|\leq k}g(S^*) - \mathbb{E}\left[\sum_{t=1}^T g(S_t)\right]\]

The last equality is because the randomness of $S_t$ is independent of the randomness of $g'_t$.
\paragraph{Apply our algorithm on stochastic bandit model} Since cardinality constraint is a special case of the partition matroid constraint, we use our algorithm in \sect{BMSMPM}. While applying our algorithm on the stochastic model, we see $g_t = g =\mathbb{E}_{g'_t\sim \mathcal{D}}[g'_t]$ as the online function selected by the adversary, thus the online functions are monotone submodular. Note that, to obtain a regret bound w.r.t.~the online reward function $\{g_t\}_{t=1}^T$, we need to query the value of $g_t$ at some subset $S_t$ in round $t$. However, since the algorithm is actually running on the stochastically realized function sequence $\{g'_t\}_{t=1}^T$, if we query the function value of $S_t$, the feedback is $g'_t(S_t)$ rather than $g_t(S_t) = g(S_t)$. Fortunately, this is not a big issue since $g'_t(S_t)$ is an unbiased estimate of $g_t(S_t)$. Now we go back to the proof of \lem{reduction}, and replace all the $g_{t_q}(S_{t_q})=g(S_{t_q})$ with $g'_{t_q}(S_{t_q})$. Since $g'_{t_q}(S_{t_q})\leq 1$, it won't affect our bound for the dual local norm of the gradient estimator. And since the randomness of the stochastic function $g'_{t_q}$ is independent of all the randomness introduced in our algorithm and $\mathbb[E][g'_{t_q}(S_{t_q})] = g_{t_q}(S_{t_q})$, the new gradient estimator constructed by replacing $g_{t_q}(S_{t_q})$ with $g'_{t_q}(S_{t_q})$ is still an unbiased estimator. Thus, as the same as the proof of our algorithm for adversarial submodular bandit with partition matroid constraint, we have the same regret bound,
\[\mathcal{R}^{adv}_{1-1/e}(T) =\max_{S^*\subseteq G, |S^*|\leq k} \mathbb{E}\left[(1-\frac{1}{e})\sum_{t=1}^T g(S^*) - \sum_{t=1}^T g(S_t)\right]\leq O((k|G|)^{5/3}T^{2/3}\log T)\]
That is,
\[(1-\frac{1}{e})T\cdot \max_{S^*\subseteq G, |S^*|\leq k}g(S^*) - \mathbb{E}\left[\sum_{t=1}^T g(S_t)\right]\leq O((k|G|)^{5/3}T^{2/3}\log T)\]

\end{document}